\definecolor{cite_color}{HTML}{114083}
\definecolor{link_color}{RGB}{0,102,102}
\definecolor{link_color}{RGB}{153, 0,0}  
\definecolor{url_color}{RGB}{153, 102,  0}
\definecolor{emp_color}{RGB}{0,0,255}
\def \w{\mathbf{w}}
\def \x{\mathbf{x}}
\def \y{\mathbf{y}}
\def \a{\mathbf{a}}
\def \z{\mathbf{z}}
\def \xib{\boldsymbol{\xi}}
\def \ED{\mathrm{ED}}
\def \data{\mathrm{data}}
\def \ebm{\mathrm{ebm}}
\providecommand{\customgenericname}{}
\newcommand{\newcustomtheorem}[2]{%
  \newenvironment{#1}[1]
  {%
   \renewcommand\customgenericname{#2}%
   \renewcommand\theinnercustomgeneric{##1}%
   \innercustomgeneric
  }
  {\endinnercustomgeneric}
}
\DeclareMathOperator*{\argmin}{arg\,min}
\newtheorem{proposition}{Proposition}
\newtheorem{lemma}{Lemma}
\newtheorem{assumption}{Assumption}
\crefname{section}{Section}{Sections}
\crefname{theorem}{Theorem}{Theorems}
\crefname{lemma}{Lemma}{Lemmas}
\crefname{equation}{Equation}{Equations}
\crefname{proposition}{Proposition}{Propositions}
\crefname{claim}{Claim}{Claims}
\crefname{appendix}{Appendix}{Appendices}
\crefname{algorithm}{Algorithm}{Algorithms}
\crefname{figure}{Figure}{Figures}
\crefname{table}{Table}{Tables}
\crefname{remark}{Remark}{Remarks}
\crefname{definition}{Def.}{Definitions}
\crefname{equation}{Equation}{Equations}
\crefname{corollary}{Corollary}{Corollaries}
\crefname{assumption}{Assumption}{Assumptions}
\DeclarePairedDelimiterX{\infdivx}[2]{(}{)}{%
  #1\;\delimsize\|\;#2%
}
\newcommand{\infdiv}{\operatorname{KL}\infdivx}
\title{Energy Discrepancies: A Score-Independent Loss for Energy-Based Models}
\author{%
Tobias Schröder$^1$\thanks{Correspondence to: Tobias Schröder, \texttt{t.schroeder21@imperial.ac.uk}} \quad Zijing Ou$^1$\thanks{Code: \url{https://github.com/J-zin/energy-discrepancy}} \quad Jen Ning Lim$^2$, \\
\textbf{Yingzhen Li}$^1$ \quad \textbf{Sebastian J. Vollmer}$^{3}$ \quad \textbf{Andrew B. Duncan}$^{1,4}$\\
$^1$Imperial College London \quad $^2$University of Warwick \quad $^3$DFKI and RPTU Kaiserslautern \\
$^4$The Alan Turing Institute\\
\texttt{\{t.schroeder21, z.ou22, yingzhen.li, a.duncan\}@imperial.ac.uk}\\
\texttt{jen-ning.lim@warwick.ac.uk} \quad \texttt{sebastian.vollmer@dfki.de}
\setcounter{footnote}{0}
}
\begin{document}

\maketitle

\begin{abstract}
Energy-based models are a simple yet powerful class of probabilistic models, but their widespread adoption has been limited by the computational burden of training them. We propose a novel loss function called Energy Discrepancy (ED) which does not rely on the computation of scores or expensive Markov chain Monte Carlo. We show that energy discrepancy approaches the explicit score matching and negative log-likelihood loss under different limits, effectively interpolating between both. Consequently, minimum energy discrepancy estimation overcomes the problem of nearsightedness encountered in score-based estimation methods, while also enjoying theoretical guarantees. Through numerical experiments, we demonstrate that ED learns low-dimensional data distributions faster and more accurately than explicit score matching or contrastive divergence. For high-dimensional image data, we describe how the manifold hypothesis puts limitations on our approach and demonstrate the effectiveness of energy discrepancy by training the energy-based model as a prior of a variational decoder model.
\end{abstract}

\section{Introduction}
Energy-Based Models (EBMs) are a class of parametric unnormalised probabilistic models of the general form $p_\ebm\propto \exp(-U)$ originally inspired by statistical physics. EBMs can be flexibly modelled through a wide range of neural network functions which, in principle, permit the modelling of any positive probability density. Through sampling and inference on the learned energy function, the EBM can then be used as a generative model or in numerous other downstream tasks such as improving robustness in classification or anomaly detection \citep{grathwohl2019your}, simulation-based inference \citep{glaser2022maximum}, or learning neural set functions \citep{ou2022learning}.

Despite their flexibility, EBMs are limited in machine learning applications by the difficulty of their training. The normalisation constant of EBMs, also known as the partition function, is typically intractable making standard techniques such as Maximum Likelihood Estimation (MLE) infeasible. For this reason, EBMs are commonly trained with an approximate maximum likelihood method called Contrastive Divergence (CD) \citep{Hinton2002CD} which approximates the gradient of the log-likelihood using short runs of a Markov chain Monte Carlo (MCMC) method. However, contrastive divergence with short run MCMC leads to malformed estimators of the energy function \citep{nijkamp2020anatomy}, even for relatively simple restricted Boltzmann-machines \citep{pmlr-vR5-carreira-perpinan05a}. This can, in part, be attributed to the fact that contrastive divergence is not the gradient of any fixed objective function \citep{sutskever2010convergence}, which severely limits the theoretical understanding of CD and motivated various adjustments of the algorithm \citep{Du-Mordatch_ImprovedCD, yair2021contrastive}.

Score-based methods such as Score Matching (SM) \citep{hyvarinen2005estimation, vincent2011connection, song2020sliced} and Kernel Stein Discrepancy (KSD) estimation \citep{pmlr-v48-liub16, chwialkowski2016kernel, gorham2017measuring, barp2019minimum} are a family of competing approaches which offer tractable loss functions and are, by construction, independent of the normalising constant of the distribution. However, such methods suffer from \emph{nearsightedness} as they fail to resolve global features in the distribution without vast amounts of data. In particular, both SM and KSD estimators are unable to capture the mixture weights of two well-separated Gaussians \citep{song2019generative,zhang2022towards,liu2023using}.

We propose a new loss functional for energy-based models called Energy Discrepancy (ED) which compares the data distribution and the energy-based model via two contrasting energy contributions. By definition, energy discrepancy only depends on the energy function and is independent of the scores or MCMC samples from the energy-based model. In our theoretical section, we show that this leads to a loss functional that can be defined on general measure spaces without Euclidean structure and demonstrate its close connection to score matching and maximum likelihood estimation in the Euclidean case. In our practical section, we focus on a simple implementation of energy discrepancy on Euclidean space which requires less evaluations of the energy-based model than the parameter update of contrastive divergence or score-matching. We demonstrate that the Euclidean energy-discrepancy alleviates the problem of nearsightedness of score matching and approximates maximum-likelihood estimation with better theoretical guarantees than contrastive divergence.

On high-dimensional image data, energy-based models face the additional challenge that under the manifold hypothesis \citep{bengio2013representationlearning}, the data distribution is not a positive probability density and does, strictly speaking, not permit a representation of the form of an energy-based model. Energy discrepancy is particularly sensitive to singular data supports and requires the transformation of the data distribution to a positive density. We approach this problem by training latent energy-based priors \citep{pang2020learning} which employ a lower-dimensional latent representation in which the data distribution is positive.

Our contributions are the following: 1) We present energy discrepancy, a new estimation loss for the training of energy-based models that can be computed without MCMC or spatial gradients of the energy function. 2) We show that, as a loss function, ED interpolates between the losses of score matching and maximum-likelihood estimation and overcomes the nearsightedness of score-based methods. 3) We introduce a novel variance reduction trick called $w$-stabilisation that drastically reduces the computational cost of approximating energy discrepancy stably.
\section{Training of Energy-Based Models}\label{SectionPreliminaries}
In the following, let $p_{\mathrm{data}}(\x)$ be an unknown data distribution which we are trying to estimate from independently distributed data $\{\x^i\} \sim p_{\mathrm{data}}$. Energy-based models are parametric distributions of the form
\begin{equation*}
  p_\theta(\x) \propto  \exp(-E_\theta(\x))
\end{equation*}
for which we want to find the scalar energy function $E_{\theta}$ such that  $p_\theta\approx p_{\mathrm{data}}$. Typically, energy-based models are trained with \emph{contrastive divergence} which estimates the gradient of the log-likelihood
\begin{equation}\label{cd-loss}
    \nabla_\theta \log p_\theta(\x) = \mathbb E_{p_\theta(\y)}[\nabla_\theta E_\theta(\y)] - \nabla_\theta E_\theta(\x)\,.
\end{equation}
using Markov chain Monte Carlo methods to approximate the expectation for $p_\theta$. For computational efficiency, the Markov chain is only run for a small number of steps. As a result, contrastive divergence does not learn the maximum-likelihood estimator and can produce malformed estimates of the energy function \citep{nijkamp2020anatomy}.

Alternatively, the discrepancy between data distribution and energy-based model can be measured by comparing their score functions $\nabla_\x\log p_{\mathrm{data}}$ and $\nabla_\x\log p_\theta$ which, by definition, are independent of the normalising constant. The comparison of the scores is achieved with the \emph{Fisher divergence}.
After applying an integration-by-parts and discarding constants with respect to $\theta$, this leads to the \emph{score-matching} loss \citep{hyvarinen2005estimation}
\begin{align}\label{sm-loss}
  \operatorname{SM}(p_\mathrm{data}, E_\theta):= \mathbb E_{p_{\mathrm{data}}(\x)} \left[-\Delta_\x E_\theta(\x)+\frac{1}{2}\lVert \nabla_\x  E_\theta(\x)\rVert^2\right]\,.
\end{align}
As this only requires the computation of expectations with respect to $p_\data$, the requirement that the data distribution attains a density can be relaxed, yielding a loss function for $p_{\theta}$ which can be readily approximated. Score-based methods are nearsighted as the score function only contributes local information to the loss. In a mixture of well-separated distributions, the score matching loss decomposes into a sum of objective functions that only see the local mode and are not capable of resolving the weights of the mixture components \citep{song2019generative, zhang2022towards}.
\section{Energy Discrepancies}\label{SectionEnergyDiscrepancies}
\begin{wrapfigure}{r}{0.4\linewidth}
\vspace{-4mm}
    \centering
    \includegraphics[width=.37\textwidth]{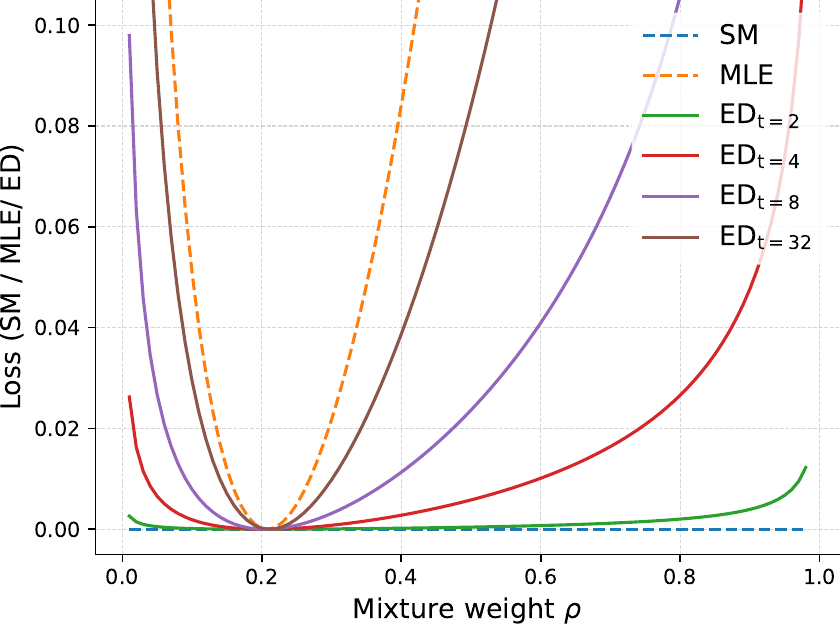}
    \vspace{-2mm}
    \caption{Loss of energy discrepancy, score matching, and maximum likelihood estimation on the task of estimating the weight in a mixture of two Gaussians. For details, see \cref{appendix-healing-nearsightedness}.}
    \label{fig:main-healing-sm-with-diff-t}
    \vspace{-10mm}
\end{wrapfigure}
To illustrate the idea behind the proposed objective function, we start by motivating energy discrepancy from the perspective of explicit score matching. In the following, we will denote the EBM as $p_\ebm \propto \exp(-U)$, where $U$ is the energy function that is learned. The nearsightedness of score matching arises due to the presence of large regions of low probability which are separating the modes of the data distribution. To increase the probability mass in these regions, we follow \citep{Zhang2020SpreadDivergence} and perturb  $p_\data$ and $p_\ebm$ through a convolution with a Gaussian  kernel $\gamma_s(\y-\x)\propto \exp(-\lVert \y-\x\rVert^2/2s)$, i.e.
\begin{align*}
    p_s(\y):&= \int \gamma_s(\y-\x) p_\data(\x) \mathrm d\x,\quad \\\notag
    \exp(-U_s(\y)):&= \int \gamma_s(\y-\x) \exp(-U(\x))\mathrm d\x\,.
\end{align*}
The resulting perturbed divergence $\widetilde {\mathrm{SM}}(p_\data, U):= \mathrm{SM}(p_s, U_s)$ retains its unique optimum at $\exp(-U^*)\propto p_\data$ \citep{Zhang2020SpreadDivergence} but alleviates the nearsightedness as the data distribution is more spread out. The perturbation with $\gamma_s$ simultaneously makes the two distributions more similar which comes at a potential loss of discriminatory power. We mitigate this by integrating the score matching objectives over $s$ over an interval of noise-scales $[0, t]$. It turns out that this integral can be evaluated as the difference of two contrasting energy-contributions:
\begin{align}\label{motivation-equation-integrated-multi-scale-SM}
  \int_0^t \mathrm{SM}(p_s, U_s)\mathrm ds = \mathbb E_{p_{\mathrm{data}}(\x)}[U(\x)] -  \mathbb E_{p_\data(\x)}\mathbb E_{\gamma_t(\x_t -\x)}[U_t(\x_t)]\,.
\end{align}
The proof is given in \cref{appendix-proof-interpolation-theorem}. The contrasting expression on the right-hand is now \emph{independent} of the score and normalisation of the EBM. We argue that such constructed objective functions are useful losses for energy-based modelling.
\subsection{A contrastive approach to learning the energy} We lift the idea of learning the energy-based distribution through the contrast of two energy-contributions to a general estimation loss called \emph{Energy Discrepancy}. We will show that energy discrepancy can be defined independently of an underlying perturbative process and is well-posed even on non-Euclidean measure-spaces:
{\definition[Energy Discrepancy]\label{definition-energy-discrepancy}{
Let $p_{\mathrm{data}}$ be a positive density on a measure space ($\mathcal{X}$, $\mathrm{d}\x$)\footnotemark and let $q(\cdot|\x)$ be a conditional probability density. Define the \emph{contrastive potential} induced by $q$ as
{\begin{align} \label{definition-contrastive-potential}
    U_q (\y) := - \log \int q(\y | \x) \exp(-U(\x)) \mathrm{d} \x.
\end{align}
\footnotetext{The integrals should be interpreted as Lebesgue integrals, {\it i.e.,} if $\mathcal X$ is discrete, the integral will be a sum. }}
We define the \emph{energy discrepancy} between $p_{\mathrm{data}}$ and $U$ induced by $q$ as
\begin{equation} \label{energy-discrepancy-loss}
    \ED_q (p_{\mathrm{data}}, U) := \mathbb{E}_{p_{\mathrm{data}}(\x)} [U(\x)] - \mathbb{E}_{p_{\mathrm{data}}(\x)}\mathbb{E}_{q(\y | \x)} [U_q (\y)]. 
\end{equation}
}}

In this paper, we shall largely focus on the case where the data is Euclidean, {\it i.e.} $\mathcal X =\mathbb R^d$, and the base-distribution $\mathrm d\x$ is the standard Lebesgue measure. However, this framework also admits $\mathcal X$ being discrete spaces like spaces of graphs, or continuous spaces with non-trivial base measures such as manifolds. Specifically, the validity of our approach is characterised by the following non-parametric estimation result:
\begin{restatable}[]{theorem}{restatheoremone}
\label{theorem-energy-discrepancy}
Let $p_{\mathrm{data}}$ be a positive probability density on ($\mathcal{X}$, $\mathrm{d}\x$), and let $q(\cdot\vert\x)$ be a conditional probability density. Under mild assumptions on $q$ and the set of admissible energy functions $U$, energy discrepancy $\ED_q$ is functionally convex in $U$ and has, up to additive constants, a unique global minimiser $U^* = \argmin \ED_q (p_{\mathrm{data}}, U)$ with $\exp(-U^*) \propto p_{\mathrm{data}}$.
\end{restatable}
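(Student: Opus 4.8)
The plan is to recast $\ED_q$ in terms of Kullback--Leibler divergences and then invoke the data-processing inequality. Writing $r(\x) := \exp(-U(\x))$ and introducing the Markov integral operator $(Qf)(\y) := \int q(\y\mid\x) f(\x)\,\mathrm d\x$, the contrastive potential becomes $U_q(\y) = -\log (Qr)(\y)$, and $m := Q p_{\mathrm{data}}$ is the law of $\y$ under the process $\x\sim p_{\mathrm{data}}$, $\y\sim q(\cdot\mid\x)$. I would first record that $\ED_q$ is invariant under $U\mapsto U+c$, since this sends $r\mapsto e^{-c}r$ and shifts both $U$ and $U_q$ by $c$; this lets me assume without loss of generality that $r$ is normalised to a probability density, so that $Q$ maps densities to densities. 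A direct substitution then gives
\begin{equation*}
  \ED_q(p_{\mathrm{data}},U) = -\int p_{\mathrm{data}}\log r \,\mathrm d\x + \int m\,\log(Qr)\,\mathrm d\y .
\end{equation*}

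Convexity follows immediately from this form. The first term $-\int p_{\mathrm{data}}\log r = \int p_{\mathrm{data}}\,U\,\mathrm d\x$ is linear in $U$, while the second term $\int m(\y)\,\log\int q(\y\mid\x)e^{-U(\x)}\,\mathrm d\x\,\mathrm d\y$ is convex in $U$: the inner map $U\mapsto\log\int q(\y\mid\x)e^{-U(\x)}\,\mathrm d\x$ is a log-integral-exp, hence convex by Hölder's inequality, and it is integrated against the nonnegative weight $m\ge 0$. A sum of a linear and a convex functional is convex, establishing functional convexity.

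For the minimiser I would rewrite the same expression using entropies: adding and subtracting $\int p_{\mathrm{data}}\log p_{\mathrm{data}}$ and $\int m\log m$ yields the key identity
\begin{equation*}
  \ED_q(p_{\mathrm{data}},U) = \infdiv{p_{\mathrm{data}}}{r} - \infdiv{Qp_{\mathrm{data}}}{Qr} + C ,
\end{equation*}
where $C = \int m\log m\,\mathrm d\y - \int p_{\mathrm{data}}\log p_{\mathrm{data}}\,\mathrm d\x$ is independent of $U$. By the data-processing inequality $\infdiv{Qp_{\mathrm{data}}}{Qr}\le\infdiv{p_{\mathrm{data}}}{r}$, so $\ED_q(p_{\mathrm{data}},U)\ge C$; choosing $r=p_{\mathrm{data}}$, i.e. $\exp(-U^*)\propto p_{\mathrm{data}}$, makes both divergences vanish and attains the value $C$, so $U^*$ is a global minimiser. (Consistently, this also re-derives convexity, since $\infdiv{p_{\mathrm{data}}}{r}$ is linear in $U$ and $-\infdiv{Qp_{\mathrm{data}}}{Qr}$ is, up to a constant, the convex log-integral-exp term above.)

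Uniqueness up to constants reduces to the equality case of the data-processing inequality, and this is where I expect the real work to lie. Using the chain rule for KL divergence, $\infdiv{p_{\mathrm{data}}}{r}=\infdiv{Qp_{\mathrm{data}}}{Qr}$ holds iff the Bayesian reverse kernels coincide, $r(\x)q(\y\mid\x)/(Qr)(\y) = p_{\mathrm{data}}(\x)q(\y\mid\x)/(Qp_{\mathrm{data}})(\y)$ for almost every $(\x,\y)$; wherever $q(\y\mid\x)>0$ this forces $p_{\mathrm{data}}(\x)/r(\x) = (Qp_{\mathrm{data}})(\y)/(Qr)(\y)$, and since the left side depends only on $\x$ and the right only on $\y$, both equal a common constant, whence $r\propto p_{\mathrm{data}}$ and $U = U^*+\text{const}$. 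The main obstacle is precisely this step: it is where the ``mild assumptions on $q$'' enter, namely a positivity or irreducibility condition (a strictly positive $q$, such as a Gaussian kernel, suffices) ensuring the supports of the reverse kernels connect all of $\mathcal X$; without it the ratio could stay locally constant on disconnected pieces. The same conclusion can be reached from the second variation, which I would compute to equal $\mathbb E_{m(\y)}\big[\operatorname{Var}_{q(\x\mid\y)}\phi\big]$ along a perturbation $U\mapsto U+\epsilon\phi$; this vanishes only for $\phi$ constant under the same irreducibility hypothesis. Either route additionally requires checking the integrability conditions that make every integral finite and that justify differentiating under the integral sign.
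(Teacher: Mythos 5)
Your proof is correct, but it takes a genuinely different route from the paper's. The paper proves \cref{theorem-energy-discrepancy} by calculus of variations: it computes the first variation of $\ED_q$ explicitly, shows the second variation equals $\mathbb{E}_{p_\data(\x)}\mathbb{E}_{q(\y\mid\x)}\mathrm{Var}_{p_U(\z\mid\y)}[h(\z)]\geq 0$ (hence convexity), verifies that the first variation vanishes at $U^*=-\log p_\data+c$, and then shows the second variation is strictly positive there unless $h$ is constant. You instead normalise $\exp(-U)$ to a density $r$, establish the identity $\ED_q(p_\data,U)=\infdiv{p_\data}{r}-\infdiv{Qp_\data}{Qr}+C$, and conclude by the data-processing inequality and its equality case. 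Interestingly, this identity is exactly the KL-contraction representation the paper records separately in \cref{appendix-subsection-KLcontraction} but does \emph{not} use to prove the theorem; you have effectively promoted that remark to the main argument. What your route buys: global minimality falls out immediately from DPI without first establishing convexity, and the equality-case analysis via coinciding reverse kernels is a clean, standard information-theoretic statement. What the paper's route buys: the variance formula for the second variation gives convexity of the functional at every $U$ (not just a lower bound at the optimum) and is the form reused elsewhere. The hard kernel of both proofs is the same: your ``irreducibility'' condition forcing $p_\data/r$ to be globally constant is precisely the paper's $q$-equivalence assumption (\cref{appendix-assumption-non-parametric-estimation}), and your observation that the ratio could otherwise be locally constant on disconnected pieces is exactly why that assumption is needed. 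Two small points to tighten: your separation-of-variables step only yields equality of $p_\data(\x)/r(\x)$ across points sharing a common $\y$ in the support of $q(\y\mid\cdot)$, so the chaining through $q$-neighbours should be made explicit rather than asserting both sides ``equal a common constant'' outright; and the constant $C$ involves differential entropies of $p_\data$ and $Qp_\data$ whose finiteness should be assumed (the paper likewise waves at such integrability issues).
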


The assumption on the conditional density $q$ describes that $\y \sim q(\cdot\vert \x)$ involves some loss of information.
The assumption that $p_\data$ has full support turns out to be critical when scaling energy-discrepancy to high-dimensional data. The proof of \cref{theorem-energy-discrepancy} is given in \cref{appendix-proof-theorem-energy-discrepancy}.
\subsection{Choices for the conditional distribution $q$}
\cref{definition-energy-discrepancy} offers a wide range of possible choices for the perturbation distribution $q$. In \cref{appendix-subsection-discreteED} we discuss a possible choice in the discrete space $\{0, 1\}^d$. For the remainder of this paper, however, we will focus on Euclidean data $\x\in\mathbb R^d$ and hope that the generality of our result inspires future work. Our requirements on $q$ are that simulating from the conditional distribution $\y\sim q(\y\vert \x)$ and computing the convolution that defines $U_q$ are numerically tractable. On continuous spaces, a natural candidate for $q$ is the transition density of a diffusion process which arises as solution to a stochastic differential equations of the form $\mathrm d\x_t = \a (\x_t) \mathrm dt + \mathrm d\w_t$ with drift $\a$ and standard Brownian motion $\w_t$ \citep[see][]{OksendalSDEs}. The conditional density $q_t(\cdot\vert \x)$ represents the probability density of the perturbed particle $\x_t$ that was initialised at $\x_0 = \x$. The resulting transition density then satisfies both of our requirements by employing the Feynman-Kac formula as we line out in \cref{appendix-subsection-energy-discrepancy-general-diffusion}. Although this approach makes the choice of $q$ flexible, the following interpolation result stresses that not much is lost when choosing a Gaussian transition density $\gamma_t\propto \exp(-\lVert \x-\y\rVert^2/2t)$:
{\theorem{\label{theorem-interpolation-SM-MLE}
Let $q_t$ be the transition density of the diffusion process $\mathrm d\x_t = \a (\x_t) \mathrm dt + d\w_t$, let $p_\ebm\propto \exp(-U)$ be the energy-based distribution and $p_t$ the data-distribution convolved with $q_t$.
\begin{enumerate}
    \item The energy discrepancy is given by a multi-noise scale score matching loss
    \begin{equation*}\label{Theorem-Equation-Energy-Discrepancy-Score-Matching}
        \ED_{q_t}(p_{\mathrm{data}}, U) = \int_0^t \mathbb E_{p_s(\x_s)}\left[-\Delta U_{q_s}(\x_s) +\frac{1}{2} \lVert \nabla U_{q_s}(\x_s)\rVert ^2 \right]\,\mathrm ds + \mathrm{const.}
    \end{equation*} 
    \item\label{item-large-time-limit} If $\a = 0$, i.e. $q_t$ is the Gaussian transition density $\gamma_t$, the energy discrepancy converges to the loss of maximum likelihood estimation a linear rate in time
    \begin{equation*}
        \left\lvert \ED_{\gamma_t}(p_{\mathrm{data}}, U) + \mathbb E_{p_{\mathrm{data}}(\x)} \left[\log p_\ebm(\x) \right] - c(t) \right\rvert \leq \frac{1}{2t} \mathbb W_2^2\left(p_{\mathrm{data}}, p_\ebm\right)
    \end{equation*}
    where $c(t)$ is independent of $U$ and $\mathbb W_2$ denotes the Wasserstein distance\footnote{For a reference on Wasserstein distances, see \citet{PeyreOT}}.
\end{enumerate}
}}

\cref{theorem-interpolation-SM-MLE} has two main messages: All diffusion-based energy discrepancies behave like a multi-noise scale score matching loss, which is independent of the drift $\a$. In fact, we show in \cref{appendix-subsection-equivalence-BED-OUED} that for a linear drift $\alpha \x_t$, the induced energy discrepancy is always equivalent to the energy discrepancy based on a Gaussian perturbation. Furthermore, estimation with a Gaussian-based energy discrepancy approximates maximum likelihood estimation for large $t$, thus enjoying its attractive asymptotic properties provided $\ED_{\gamma_t}(p_{\mathrm{data}}, U)$ can be approximated with low variance. We demonstrate the result in a mixture model in \cref{fig:main-healing-sm-with-diff-t} and \cref{appendix-healing-nearsightedness} and give a proof of above theorem in \cref{appendix-proof-interpolation-theorem,appendix-paragraph-interpolation-result-general}.

\textbf{Connection to Contrastive Divergence.} Due to the generality of our result we can also make a direct connection between energy discrepancy and contrastive divergence.  To this end, suppose that for $\theta$ fixed, $q$ satisfies the detailed balance relation $q(\y\vert \x) \exp(-E_\theta(\x))= q(\x\vert\y)\exp(-E_\theta(\y))$. In this case, energy discrepancy becomes the loss function
\begin{equation}\label{equation-ed-contrastive-divergence}
\ED_{q}(p_{\mathrm{data}}, E_\theta) = \mathbb E_{p_{\mathrm{data}}(\x)} \left[E_\theta(\x)\right]- \mathbb E_{p_{\mathrm{data}}(\x)} \mathbb E_{q( \y\vert \x)} \left[E_\theta(\y)\right]\,
\end{equation}
which, after taking gradients in $\theta$ yields the contrastive divergence update\footnote{The implicit dependence of $q$ on $\theta$ is ignored when taking the gradient. Notice that CD is not the gradient of any fixed objective function \citep{sutskever2010convergence}.}. See \cref{appendix-subsection-connection-CD} for details. The non-parametric estimation result from \cref{theorem-energy-discrepancy} holds true for the contrastive objective in \eqref{equation-ed-contrastive-divergence}. This means that each step of contrastive divergence optimises an objective function with minimum at $p_{\mathrm{data}} \approx p_\theta$. However, the objective function is adjusted in each step of the algorithm.

\section{Training Energy-Based Models with Energy Discrepancy}
In sight of \cref{theorem-interpolation-SM-MLE}, we will discuss how to approximate ED from samples for Euclidean data $\{\x^i\}\subset\mathbb R^d$ and a Gaussian conditional distribution $\gamma_t(\y-\x)\propto \exp (- \lVert \y - \x \rVert^2/2t)$. First, the outer expectations on the right-hand of \eqref{energy-discrepancy-loss} can be computed as plug-in estimators by simulating the Gaussian perturbation $\x_t^i = \x^i +\sqrt{t}\xib$ for $\xib\sim \mathcal N(0, \mathbf{I})$ and averaging $\{U(\x^i)\}$ and $\{U_{\gamma_t}(\x_t^i)\}$. The critical step is then finding a low-variance estimate of the contrastive potential $U_{\gamma_t}$ itself.

\begin{wrapfigure}{r}{0.4\linewidth}
\vspace{-3mm}
    \centering
    \includegraphics[width=.4\textwidth]{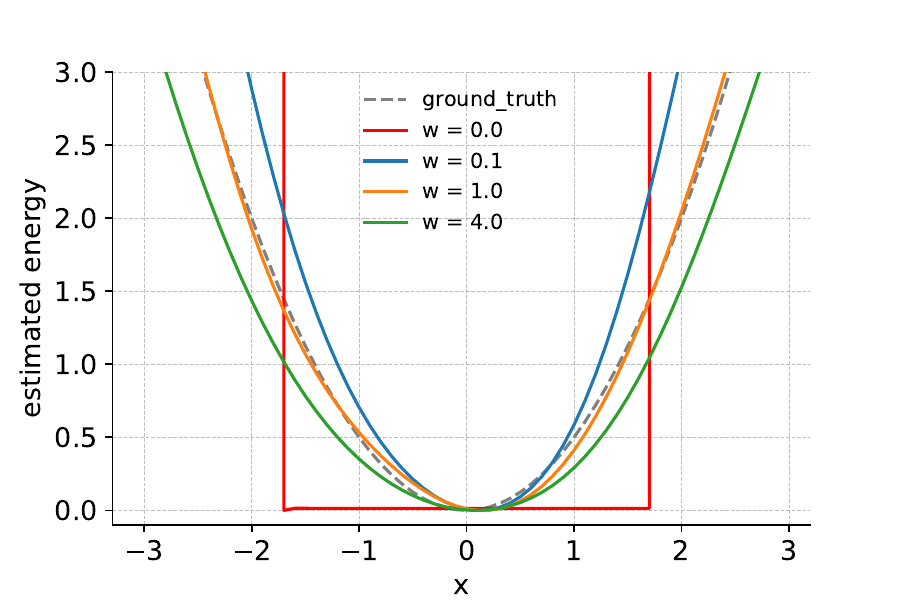}
    \vspace{-6mm}
    \caption{Estimated energy functions for Gaussian data with various choices of $w$. Increasing $w$ leads to flatter energy-landscapes, and training becomes unstable if $w=0$. For details, see \cref{appendix-subsection-w-regularisation,appendix-sec-learn-energy-for-diff-w}.}
    \label{figure-learned-energies-for-different-w}
\end{wrapfigure}
Due to the symmetry of the Gaussian transition density, we can interpret $\gamma_t$ as the law of a Gaussian random variable with mean $\x_t^i$ and variance $t$, i.e. the conditioned random variable $\x_t^i+\sqrt{t}\xib' \vert \x_t^i$ for $\xib'\sim\mathcal N(0, \mathbf{I})$ has the density $\gamma_t(\x_t^i-\x)$. Hence, we can write $U_{\gamma_t}$ as the expectation
\begin{equation*}\label{equation-probabilistic-representation-cpotential-Gauss}
    U_{\gamma_t}(\x^i_t) = -\log \mathbb E_{\gamma_1(\xib')}[\exp(-U(\x^i_t+\sqrt t \xib'))\vert \x^i_t)]
\end{equation*}
for $i=1, 2, \dots, N$.
The conditioning expresses that we keep $\x^i_t$ fixed when taking the expectation with respect to $\xib'$. It is then possible to calculate the expectation by sampling $\xib'^{i,j}\sim \mathcal N(0, \mathbf{I})$ and calculating the mean as for the outer expectations. However, we find that this approximation is not sufficient as it is biased due to the logarithm and prone to numerical instabilities because of missing bounds on the value of $U_{\gamma_t}(\x^{i}_t)$. To stabilise training, we augment the Monte Carlo estimate of $U_{\gamma_t}(\x^i_t)$ by an additional term $w/M\exp(-U(\x^i))$ which we call $w$-stabilisation. This results in the following approximation of the contrastive potential:
\begin{equation*}
    U_{\gamma_t}(\x^i_t) \!\approx\! -\log\left(\frac{w}{M}\exp(-U(\x^i)) \!+\! \frac{1}{M}\sum_{j=1}^M \exp(-U(\x^i_t \!+\! \sqrt{t} \xib'^{i, j} ))\right)\,\hspace{0.5cm} \xib'^{i, j}\sim \mathcal N(0, \mathbf{I})\,
\end{equation*}
The $w$-stabilisation dampens the contributions of contrastive samples $\x^i_t \!+\! \sqrt{t} \xib'^{i, j}$ whose energy is higher than the energy of the data point $\x^i$. This provides a deterministic upper bound for the approximate contrastive potential in \eqref{definition-contrastive-potential} and reduces the variance of the estimation. We find that the $w$-stabilisation drastically reduces the number of samples $M$ needed for the stable training of deep energy-based models. We illustrate the effect of the stabilisation in \cref{figure-learned-energies-for-different-w,fig:toy-understanding-wm-appendix} and discuss our reasoning in more details in \cref{appendix-subsection-w-regularisation}. The full loss is now formed for $U:= E_\theta$ with $\xib^i \sim \mathcal N(0, \mathbf{I})$, $\xib'^{i,j}\sim \mathcal N(0, \mathbf{I})$ and tunable hyperparameters $t, M, w$ as
\begin{equation*}\label{EDInstantiation}
    \mathcal L_{t, M, w}(\theta):= \frac{1}{N}\sum_{i=1}^N \log \left(\frac{w}{M} + \frac{1}{M}\sum_{j=1}^M \exp(E_\theta(\x^i)-E_\theta(\x^i + \sqrt{t}\xib^i +\sqrt{t}\xib'^{i,j}))\right)\,.
\end{equation*}
The loss is evaluated using the numerically stabilised logsumexp function. The justification of the approximation is given by the following theorem:
\begin{restatable}[]{theorem}{restatetheoremtwo}\label{theorem-asymptotic-consistency}
Assume that $\x\mapsto \exp(-E_\theta(\x))$ is uniformly bounded. Then, for every $\varepsilon>0$ there exist $N$ and $M(N)$ such that $\left\lvert \mathcal L_{t, M(N), w}(\theta) - \ED_{\gamma_t}(p_{\mathrm{data}}, E_\theta) \right\rvert <\varepsilon$ almost surely.
\end{restatable}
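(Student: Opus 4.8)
The plan is to separate the two independent sources of Monte Carlo error in $\mathcal L_{t,M,w}$ --- the inner average over the $M$ auxiliary draws ${\xib'}^{i,j}$ and the outer empirical average over the $N$ samples $(\x^i,\xib^i)$ --- and to control each by a law of large numbers, using the uniform bound $e^{-E_\theta}\le C$ to keep the logarithm well behaved. First I would rewrite the loss to expose its relation to $\ED_{\gamma_t}$. Factoring $e^{E_\theta(\x^i)}$ out of the $i$-th logarithm and writing $\x^i_t:=\x^i+\sqrt t\,\xib^i$ gives
\[
\mathcal L_{t,M,w}(\theta)=\frac1N\sum_{i=1}^N E_\theta(\x^i)+\frac1N\sum_{i=1}^N\log\!\left(\frac wM e^{-E_\theta(\x^i)}+\frac1M\sum_{j=1}^M e^{-E_\theta(\x^i_t+\sqrt t\,{\xib'}^{i,j})}\right).
\]
Using the symmetry of $\gamma_t$ together with \cref{definition-contrastive-potential,energy-discrepancy-loss}, the target can be written as $\ED_{\gamma_t}(p_{\mathrm{data}},E_\theta)=\mathbb E[h(\x,\xib)]$ with $h(\x,\xib):=E_\theta(\x)-U_{\gamma_t}(\x+\sqrt t\,\xib)$ and $U_{\gamma_t}(\y)=-\log\mathbb E_{\xib'}[e^{-E_\theta(\y+\sqrt t\,\xib')}]$.

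Next I would introduce the intermediate quantity with the inner expectation taken exactly and $w=0$,
\[
\mathcal L^{(N)}_\infty(\theta):=\frac1N\sum_{i=1}^N h(\x^i,\xib^i),
\]
and bound $|\mathcal L_{t,M,w}-\ED_{\gamma_t}|\le|\mathcal L_{t,M,w}-\mathcal L^{(N)}_\infty|+|\mathcal L^{(N)}_\infty-\ED_{\gamma_t}|$. The second term tends to $0$ almost surely as $N\to\infty$ by the strong law of large numbers applied to the i.i.d. terms $h(\x^i,\xib^i)$; integrability $\mathbb E|h|<\infty$ follows because $e^{-E_\theta}\le C$ forces $U_{\gamma_t}\ge-\log C$, so that $U_{\gamma_t}(\x+\sqrt t\,\xib)$ is bounded below and its expectation is precisely the (finite) second term of $\ED_{\gamma_t}$, while $E_\theta\in L^1(p_{\mathrm{data}})$ is implicit in $\ED_{\gamma_t}$ being well defined.

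For the first term I would fix $N$ and let $M\to\infty$. Each summand of the inner average lies in $[0,C]$, so by the strong law it converges to $A^i_\infty:=\mathbb E_{\xib'}[e^{-E_\theta(\x^i_t+\sqrt t\,\xib')}]=e^{-U_{\gamma_t}(\x^i_t)}>0$, the floor $\tfrac wM e^{-E_\theta(\x^i)}\le\tfrac{wC}{M}$ vanishes, and continuity of $\log$ at the strictly positive point $A^i_\infty$ gives $\log(\cdot)\to-U_{\gamma_t}(\x^i_t)$ almost surely for each of the finitely many $i$; hence $\mathcal L_{t,M,w}\to\mathcal L^{(N)}_\infty$ almost surely. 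To merge the two limits into a single sequence $M(N)$, note that almost sure convergence implies convergence in probability, so for each $N$ one may choose $M(N)$ with $\mathbb P(|\mathcal L_{t,M(N),w}-\mathcal L^{(N)}_\infty|>1/N)<2^{-N}$; Borel--Cantelli then forces $|\mathcal L_{t,M(N),w}-\mathcal L^{(N)}_\infty|\to0$ almost surely, and combined with $\mathcal L^{(N)}_\infty\to\ED_{\gamma_t}$ this yields $\mathcal L_{t,M(N),w}\to\ED_{\gamma_t}(p_{\mathrm{data}},E_\theta)$ almost surely, i.e. the claimed $\varepsilon$-statement for all sufficiently large $N$.

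The main obstacle is the logarithm's lack of a global Lipschitz estimate: the deterministic positive floor supplied by the $w$-stabilisation shrinks like $1/M$, so one cannot control the log-term through the floor and must instead rely on the positivity of the limit $A^i_\infty$ --- which is in fact deterministic, since $e^{-E_\theta}>0$ and $\gamma_t$ has full support --- taking over before the floor disappears. Holding $N$ fixed while $M\to\infty$ confines this to finitely many terms and so removes any need for a lower bound on $A^i_\infty$ uniform in $i$; the only remaining care is the ordering of the limits ($M$ after $N$) and the diagonal extraction above.
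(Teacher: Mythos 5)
Your proof is correct and follows essentially the same route as the paper's: the same triangle-inequality decomposition into an outer empirical average (handled by the strong law of large numbers over the i.i.d.\ pairs $(\x^i,\xib^i)$) and an inner conditional Monte Carlo average (handled by a law of large numbers for each of the finitely many $i$, the vanishing of the deterministic $w/M$ floor, and continuity of the logarithm at the strictly positive limit). Your Borel--Cantelli diagonal extraction for $M(N)$ is a slightly more explicit version of the paper's final step of taking $M(N)$ large enough that $\max(\varepsilon_{1,M},\dots,\varepsilon_{N,M})$ is small, but it is not a different argument.
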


We give the proof in Appendix \ref{appendix-asymptotic-consistency-proof}. This result forms the basis for proofs of asymptotic consistency of our estimators which we leave for future work. It is noteworthy that an analogous implementation of energy discrepancy for other perturbation kernels and other spaces is possible. For example, we construct a similar loss for discrete spaces using a Bernoulli perturbation in \cref{appendix-subsection-discreteED}. Similarly to the Gaussian case, the $w$-stabilisation provides a useful variance reduction for the training with Bernoulli-based energy discrepancy.

\subsection{Training Energy-Based Models under the Manifold Hypothesis}
\citet{LeCunSlides} suggests that maximum-likelihood-based training of energy-based models can lead to the formation of undesirable canyon shaped energy-functions. Indeed, energy discrepancy yields an energy with low values on the data-support and rapidly diverging values outside of it when being used on image data, directly. Such learned energies fail to represent the distribution between data-points and are not suitable for inference or image generation. We attribute this phenomenon to the manifold hypothesis, which states that the data concentrates in the vicinity of a low-dimensional manifold \citep{bengio2013representationlearning}. In this case, the data distribution is not a positive density and can not be written as an energy-based model as $\log p_\data$ is not well-defined. Additionally, Gaussian perturbations of a data point $\tilde \x:= \x +\xib$ are orthogonal to the data manifold with high-probability and the negative samples in the contrastive term are not informative.

To resolve this problem, we will work with a lower-dimensional latent representation of the data distribution in which positivity can be ensured. We follow \cite{pang2020learning} and define a variational decoder network $p_\phi(\x\vert \z)$ and an energy-based tilting of a Gaussian prior $p_\theta (\z) = \exp(-E_\theta(\z))p_0(\z)$ on latent space, resulting in the so-called latent energy-based prior models (LEBMs) $p_{\phi, \theta} (\x) \propto \int p_\phi (\x \vert \z) p_\theta (\z) \mathrm{d}\z$. To obtain the latent representation of data we sample from the posterior $p_{\phi, \theta}(\z \vert \x) \propto p_{\phi}(\x \vert \z)\exp(-E_\theta(\z))p_0(\z)$ using a Langevin sampler. For the training of model, the contrastive divergence algorithm is replaced with energy discrepancy (for details see appendix \ref{appendix-section-Review-LEBM}). LEBMs provide an interesting benchmark to compare energy discrepancy with contrastive divergence in high dimensions. However, other ways to tackle the manifold problem are elements of ongoing research.

\section{Empirical Studies}
To support our theoretical discussion, we evaluate the performance of energy discrepancy on low-dimensional datasets as well as for the training of latent EBMs on high-dimensional image data sets. In our discussion, we emphasise the comparison with contrastive divergence and score matching as the dominant methodologies in EBM training. 

\begin{figure}[!t]
    \centering
    \begin{minipage}[t]{0.05\linewidth}
        \centering
        \vspace{-16mm}
        \includegraphics[width=.39in]{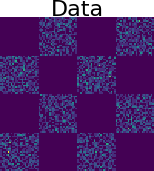}
    \end{minipage}
    \hspace{.5mm}
    \begin{minipage}[t]{0.25\linewidth}
        \centering
        \includegraphics[width=.42in]{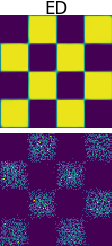} \hspace{-1.2mm}
        \includegraphics[width=.42in]{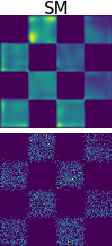} \hspace{-1.2mm}
        \includegraphics[width=.42in]{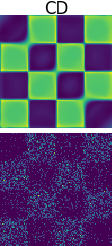} \hspace{-1.2mm}
    \end{minipage}
    \hspace{1mm}
    \begin{minipage}[t]{0.05\linewidth}
        \centering
        \vspace{-16mm}
        \includegraphics[width=.39in]{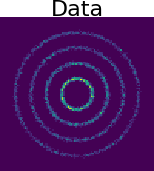}
    \end{minipage}
    \hspace{.5mm}
    \begin{minipage}[t]{0.25\linewidth}
        \centering
        \includegraphics[width=.42in]{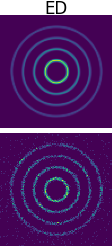} \hspace{-1.2mm}
        \includegraphics[width=.42in]{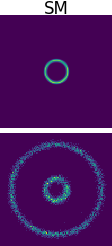} \hspace{-1.2mm}
        \includegraphics[width=.42in]{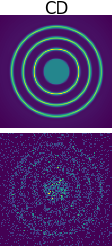} \hspace{-1.2mm}
    \end{minipage}
    \hspace{1mm}
    \begin{minipage}[t]{0.05\linewidth}
        \centering
        \vspace{-16mm}
        \includegraphics[width=.39in]{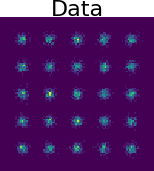}
    \end{minipage}
     \hspace{.5mm}
    \begin{minipage}[t]{0.25\linewidth}
        \centering
        \includegraphics[width=.42in]{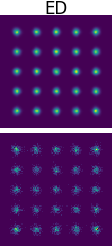} \hspace{-1.2mm}
        \includegraphics[width=.42in]{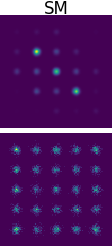} \hspace{-1.2mm}
        \includegraphics[width=.42in]{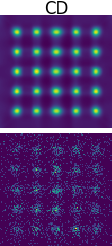} \hspace{-1.2mm}
    \end{minipage}
    \caption{Comparison of energy discrepancy, score matching and contrastive divergence on density estimation. The $1$st and $2$nd rows are the estimated density and synthesised samples, respectively.}
    \label{fig:exp-toy-examples}
\end{figure}

\begin{wrapfigure}{r}{0.35\linewidth}
    \centering
    \includegraphics[width=.35\textwidth]{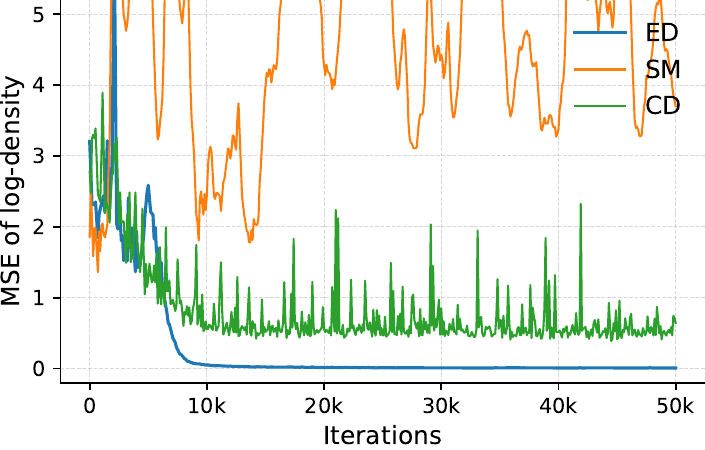}
    \vspace{-6mm}
    \caption{Density estimation accuracy in the $25$-Gaussians dataset.}
    \vspace{-2mm}
    \label{fig:mse-log-density}
\end{wrapfigure}

\subsection{Density Estimation}
We first demonstrate the effectiveness of energy discrepancy on several two-dimensional distributions. \cref{fig:exp-toy-examples} displays the estimated unnormalised densities as well as samples that were synthesised with Langevin dynamics for energy discrepancy (ED), score matching (SM) and contrastive divergence (CD). More experimental results and details are given in \cref{appendix-sec-density-estimation}. Our results confirm the aforementioned nearsightedness of score matching which does not learn the uniform weight distribution of the Gaussian mixture. For CD, it can be seen that CD consistently produces flattened energy landscapes which can be attributed to the short-run MCMC \citep[see]{nijkamp2020anatomy} not having converged. Consequently, the synthesised samples of energies learned with CD can lie outside of the data-support. In contrast, ED is able to model multi-modal distributions faithfully and learns sharp edges in the data support as in the chessboard data set. We quantify our results in \cref{fig:mse-log-density} which shows the mean squared error of the estimated log-density of 25-Gaussians over the number of training iterations. The partition function is estimated using importance sampling $\log Z \!\approx\! \mathrm{logsumexp} (-E(\x_i) \!-\! \log p_\data(\x_i)) \!-\! \log N$, where $\x_i$ is sampled from the data distribution $p_\data(\x)$ and $N\!=\!5,000$. It shows that ED outperforms SM and CD with faster convergence, lower mean square error, and better stability.

\begin{figure}[!t]
    \centering
    \begin{subfigure}{0.32\linewidth}
        \centering
        \includegraphics[width=1.65in]{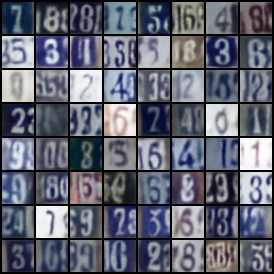}
        \label{fig:generated-samples-svhn}
        \vspace{-1mm}
        \caption{SVHN ($32 \times 32$)}
    \end{subfigure}
    \hfill
    \begin{subfigure}{0.32\linewidth}
        \centering
        \includegraphics[width=1.65in]{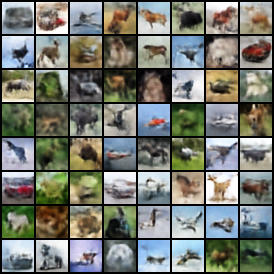}
        \label{fig:generated-samples-cifar10}
        \vspace{-1mm}
        \caption{CIFAR-10 ($32 \times 32$)}
    \end{subfigure}
    \hfill
    \begin{subfigure}{0.32\linewidth}
        \centering
        \includegraphics[width=1.65in]{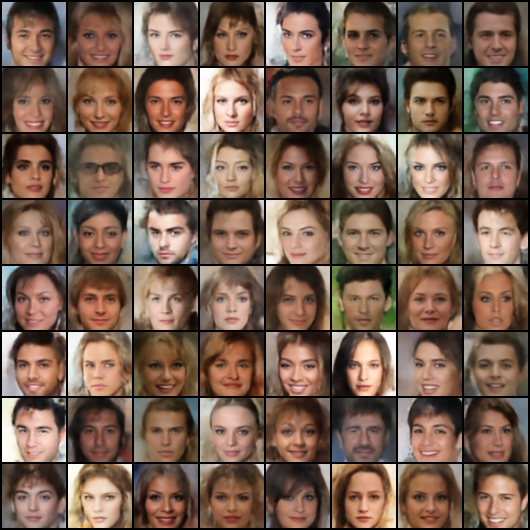}
        \label{fig:generated-samples-celeba}
        \vspace{-1mm}
        \caption{CelebA ($64 \times 64$)}
    \end{subfigure}
    \vspace{-1mm}
    \caption{Generated examples by the ED-LEBM trained on SVHN, CIFAR-10, and CelebA.}
    \label{fig:generated-samples}
    \vspace{-2mm}
\end{figure}

\begin{table*}[!h]
\small
  \caption{Comparison of MSE($\downarrow$) and FID($\downarrow$) on the SVHN, CIFAR-10, and CelebA datasets.}
  \vspace{-2mm}
  \label{table:image-reconstruction-generation}
  \centering
  \setlength{\tabcolsep}{2.8mm}{
  \begin{tabular}{ccccccc}
    \toprule
        &  \multicolumn{2}{c}{SVHN}   & \multicolumn{2}{c}{CIFAR-10}   & \multicolumn{2}{c}{CelebA}   \\
        & MSE & FID & MSE & FID & MSE & FID\\
    \midrule
	 VAE \citep{kingma2013auto} &0.019 &46.78&0.057&106.37&0.021&65.75\\
	 2s-VAE \citep{dai2019diagnosing} &0.019&42.81&0.056&72.90&0.021&44.40\\
	 RAE \citep{ghosh2019variational} & 0.014&40.02&0.027&74.16&0.018&40.95\\
	 SRI \citep{nijkamp2020learning} &0.018&44.86&0.020&-&-&61.03\\
	 SRI (L=5) \citep{nijkamp2020learning}  &0.011&35.32&-&-&0.015&47.95\\
	 CD-LEBM \citep{pang2020learning} &0.008&29.44&\textbf{0.020}&\textbf{70.15}&0.013&37.87\\
	 SM-LEBM & 0.010 & 34.44 & 0.026 & 77.82 & 0.014 & 41.21 \\
	 \midrule
        ED-LEBM (ours) &\textbf{0.006}&\textbf{28.10}&{0.023}&{73.58}& \textbf{0.009} &\textbf{36.73}\\ 
    \bottomrule
  \end{tabular}}
  \vspace{-2mm}
\end{table*}

\subsection{Image Modelling}
In this experiment, our methods are evaluated by training a latent EBM on three image datasets: SVHN \citep{netzer2011reading}, CIFAR-10 \citep{krizhevsky2009learning}, and CelebA \citep{liu2015deep}. The effectiveness of energy discrepancy is diagnosed through image generation, image reconstruction from their latent representation, and the faithfulness of the learned latent representation. The model architectures, training details, and the choices of hyper-parameters can be found in \cref{sec:appendix-lebm-details}.

\paragraph{Image Generation and Reconstruction.} 

We benchmark latent EBM priors trained with energy discrepancy (ED-LEBM) and score matching (SM-LEBM) against various baselines for latent variable models which are included in \cref{table:image-reconstruction-generation}. Note that the original work on latent EBMs \citep{pang2020learning} uses contrastive divergence (CD-LEBM) (see appendix \ref{appendix-section-Review-LEBM} for details). 

\begin{wrapfigure}{r}{0.2\linewidth}
    \centering
    \vspace{-4mm}
    \includegraphics[width=.2\textwidth]{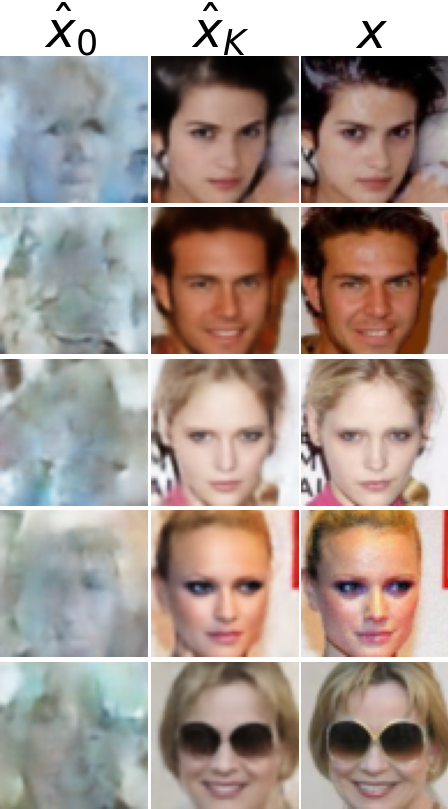}
    \vspace{-4mm}
    \caption{Image reconstruction results on CelebA $64 \times 64$.}
    \vspace{-2mm}
    \label{fig:img-reonc-side}
\end{wrapfigure}
For a well-learned model, the latent EBM should produce realistic samples and faithful reconstructions. The reconstruction error is measured via the mean square error while the image generation is measured with the FID \citep{heusel2017gans} which are both reported in \cref{table:image-reconstruction-generation}. We observe that ED can improve the contrastive divergence benchmark on SVHN and CelebA while the results on CIFAR-10 could not be improved. However, we emphasise that ED only requires $M$ (here, $M=16$) evaluations of the energy function per data point which is significantly less than CD and SM that both require the calculation of a high-dimensional spatial gradient. Besides the quantitative metrics, we present qualitative results of the generated samples in \cref{fig:generated-samples}. It can be seen that our model generates diverse high-quality images. The qualitative results of the reconstruction are shown in \cref{fig:img-reonc-side}, for which we use the test set of CelebA $64 \times 64$. The right column shows the original image $\x$ to be reconstructed. The left column shows the reconstruction based on a latent variable initialised from the base distribution $\z_0 \sim p_0 (\z)$, and the middle column displays the image reconstructed from $\z_k \sim p_{\phi, \theta}(\z|\x)$ which is sampled via Langevin dynamics. One can see that our model can successfully reconstruct the test images, verifying the validity of the latent prior learned with energy discrepancy. In addition, we showcase the scalability of our approach by applying it successfully to high-resolution images (CelebA $128 \times 128$). More results can be found in \cref{appendix-sec-image-modelling}.

\textbf{Image Interpolation and Manipulation.}
We use a latent-variable to model the effective low-dimensional structure in the data set. To probe how well the latent space describes the geometry and meaning of the data-manifold we analyse the structure of the latent space through interpolation and attribute manipulation.
For the interpolation between two samples we linearly interpolate between their latent representations which were sampled from the posterior distribution. The results in \cref{fig:img-interpolation} demonstrate that the latent space has learned the data-manifold well and almost all intermediate samples appear as realistic faces. Further results are given in \cref{fig:celeba-interpolation-data-appendix,fig:celeba-interpolation-sample-appendix}. In addition, we can utilize the labels in the CelebA dataset to modify the attributes of an image through the manipulation technique proposed in \citep{kingma2018glow}. Specifically, each image in the dataset is associated with a binary label that indicates the presence or absence of attributes such as smiling, male, eyeglasses. For each manipulated attribute, we compute the average latent vectors $\mathbf{z}_{\mathrm{pos}}$ of images with and $\mathbf{z}_{\mathrm{neg}}$ of images without the attribute in the training set. Then, we use the difference $\mathbf{z}_{\mathrm{pos}} - \mathbf{z}_{\mathrm{neg}}$ as the direction for manipulating the attribute of an image. 
The results presented in \cref{fig:celeba-manipulation,fig:celeba-manipulation-appendix} confirm the meaningfulness of the latent space learned by energy discrepancy.

\begin{figure}[t!]
    \centering
    \includegraphics[width=1.\textwidth]{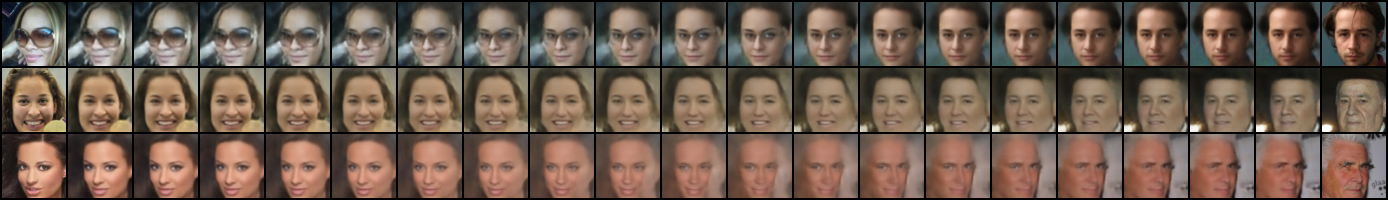}
    \caption{Image interpolation results on CelebA $64 \times 64$. The first and last columns show the observed images, while the middle columns display the interpolation results using the inferred latent vectors.}
    \vspace{-2mm}
    \label{fig:img-interpolation}
\end{figure}

\begin{figure}[!t]
    \centering
    \begin{subfigure}{.49\textwidth}
        \centering
        \includegraphics[width=1.\textwidth]{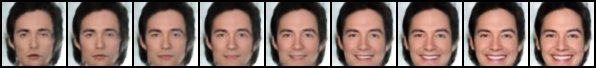}
        \vspace{-5mm}
        \caption{Smiling}
    \end{subfigure} 
    \begin{subfigure}{.49\textwidth}
        \centering
        \includegraphics[width=1.\textwidth]{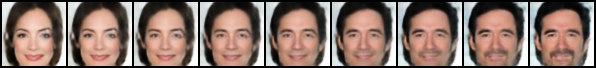}
        \vspace{-5mm}
        \caption{Male}
    \end{subfigure}
    \begin{subfigure}{.49\textwidth}
        \centering
        \includegraphics[width=1.\textwidth]{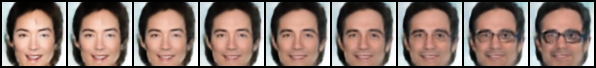}
        \vspace{-5mm}
        \caption{Eyeglasses}
    \end{subfigure} 
    \begin{subfigure}{.49\textwidth}
        \centering
        \includegraphics[width=1.\textwidth]{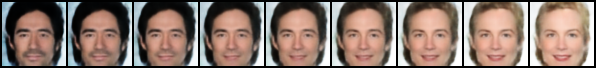}
        \vspace{-5mm}
        \caption{Blond Hair}
    \end{subfigure} 
    \vspace{-1mm}
    \caption{Attribute manipulation by interpolating the latent variable along an attribute vector. The middle image corresponds to the original image, and each subfigure represents a different attribute.}
    \label{fig:celeba-manipulation}
\end{figure}

\subsection{Anomaly Detection}
In a well-learned energy-based model the likelihood should be higher for in-distribution examples and lower for out-of-distribution examples. Based on this principle, we conduct anomaly detection experiments to compare our method with other baselines. Specifically, given a test sample $\mathbf{x}$, we first sample $\mathbf{z}$ from the posterior $p_\theta (\mathbf{z}|\mathbf{x})$ by Langevin dynamics, and then compute the unnormalised log-density $p_\theta(\mathbf{x}, \mathbf{z})$ as the decision function. Following the protocol in \citep{zenati2018efficient},  we designate each digit class in the MNIST dataset as an anomaly and leave the rest as training data. The area under the precision-recall curve (AUPRC) is used to evaluate different methods. As shown in \cref{table:anomaly-detection}, our model consistently outperforms the baseline methods, demonstrating the advantages of training latent EBMs with energy discrepancy.

\begin{table}[!t]
\small
  \caption{Comparison of AUPRC($\uparrow$) for unsupervised anomaly detection on MNIST dataset.}
  \label{table:anomaly-detection}
  \centering
  \setlength{\tabcolsep}{1.mm}{
  \begin{tabular}{cccccc}
    \toprule
    Heldout Digit &  1   & 4  &5 &7&9   \\
    \midrule
    VAE \citep{kingma2013auto}  &0.063 &0.337&0.325&0.148&0.104\\
    ABP \citep{han2017alternating} &$0.095\pm 0.03$&$ 0.138\pm 0.04$&$0.147\pm0.03$ &$ 0.138\pm 0.02$&$0.102\pm0.03$\\
    MEG \citep{kumar2019maximum} &$0.281\pm 0.04$ &$0.401\pm 0.06$&$0.402\pm 0.06$ &$ 0.290\pm 0.04$ &$0.342\pm 0.03$\\
    BiGAN-$\sigma$ \citep{zenati2018efficient} &$0.287\pm 0.02$& $0.443\pm 0.03$&$0.514\pm 0.03$&$0.347\pm 0.02$&$0.307\pm 0.03$\\
    CD-LEBM \citep{pang2020learning}  &$0.336\pm 0.01$  &$0.630\pm 0.02$ & $0.619\pm 0.01$ & $0.463\pm 0.01$ & $0.413\pm 0.01$ \\
    SM-LEBM &$0.285\pm 0.01$&$0.663\pm 0.01$&$0.610\pm 0.01$&$0.471\pm 0.01$& $0.422\pm 0.01$\\
    \midrule
    ED-LEBM (ours) &\textbf{0.342} $\pm$ \textbf{0.01} &\textbf{0.740} $\pm$ \textbf{0.01} & \textbf{0.708} $\pm$ \textbf{0.01} &\textbf{0.501} $\pm$ \textbf{0.02} &\textbf{0.444} $\pm$ \textbf{0.01}\\
    \bottomrule
  \end{tabular}}
  \vspace{-2mm}
\end{table}

\section{Related Work}
\textbf{Training Energy-based models.}
While energy-based models have been around for some time \citep{Hinton2002CD}, the training of energy-based models remains challenging. For a summary on existing methods for the training of energy-based models see \citet{Song21EBMTraining} and \citet{lecun2006tutorial}. Contrastive divergence is still the most used option for  training energy-based models. Recent extensions have improved the scalability of the basic algorithm, making it possible to  train EBMs on high-dimensional data \citep{Ngiam2011LearningDE, dai2015generative, xie2016theory, Nijkamp2019LearningNN, du2019implicit}. Despite these advances, it has been noted that contrastive divergence is not the gradient of any fixed loss-function \citep{sutskever2010convergence} and can yield energy functions that do not adequately reflect the data distribution \citep{nijkamp2020anatomy}. This has motivated improvements to the standard methodology \citep{Du-Mordatch_ImprovedCD} by approximating overlooked entropy terms or by improving the convergence of MCMC sampling with an underlying diffusion model \citep{gao2020learning}.

Score-based methods \citep{hyvarinen2005estimation, vincent2011connection, song2020sliced} are typically implemented by directly learning the score function instead of the energy function. \citet{song2019generative, SongScorebasedGen} point out the importance to introduce multiple noise levels at which the score is learned. \citet{li2019learning} adopt the strategy of learning an energy-based model using multi-scale denoising score matching.

We learn a latent EBM prior \citep{pang2020learning} on high-dimensional data to address situations where the data lives on a lower dimensional submanifold. This methodology has been improved with a multi-stage approach \citep{XiaoMultiStageDensityRatio2022} with score-independent noise contrastive estimation \citep{gutmann2010noise}.

\textbf{Related loss functions.}
Energy discrepancies can be derived from Kullback Leibler-contraction divergences \citep{Lyu2011KLContractions} which we also discuss briefly in \cref{appendix-subsection-KLcontraction}. 
Diffusion recovery likelihood \citep{gao2020learning} technically optimises the same loss as us but the EBM is trained with contrastive divergence. \citet{luo2023training} make similar observations to us while estimating the KL-contraction by leveraging the time evolution of the energy function along a diffusion process. Energy discrepancy also shares similarities with other contrastive loss functions in machine learning. The $w$-stabilised energy-discrepancy loss with $w, M=1$ is equivalent to conditional noise contrastive estimation \citep{ConditionalNoiseContrastiveEstimation} with Gaussian noise. For $M>1$, the stabilised ED loss shares great structural similarity to certain contrastive learning losses such as InfoNCE \citep{Oord2018RepresentationLearning}. We hope that such observations can lead to interpretations of the $w$-stabilisation.

\textbf{Theoretical connections between likelihood and score-based functionals.}
The connection between likelihood methods and score-based methods generalises the de Bruijn identity \citep{STAM1959101,Lyu2009ScoreMatching} which explains that score matching appears frequently as the limit of EBM training methods. Similar connections have been mentioned and exploited by \citet{song2021maximum} to choose a weighting scheme for a combination of score matching losses.

\textbf{Generative modelling for manifold-valued data.}
The manifold hypothesis was, for example, described in \citet{bengio2013representationlearning}. Prior work to us shows how score-based diffusion models detect this manifold \citep{PidstrigachScore2022} and give reasons why CD is more robust to this issue than other contrastive methods \citep{yair2021contrastive}. 
\citet{arbel2021generalized} learn an energy-based model as a tilt of a GAN-based prior that models the data-manifold. We believe that a combination of above results with energy discrepancy could enable training EBMs with energy discrepancy on data space.

\section{Discussion and Outlook}
We demonstrate that energy discrepancy provides a new tool for the fast training of energy-based models without MCMC or scores. We show for Euclidean data that ED interpolates between score matching and maximum likelihood estimation, thus alleviating problems of nearsightedness of score matching without additional annealing strategies as in score-based diffusion models. Based on our theoretical analysis, we show that training EBMs using energy discrepancy yields more accurate estimates of the energy function for two-dimensional data than explicit score matching and contrastive divergence. In this task, it is robust to the hyperparameters used, making energy discrepancy a useful tool for energy-based modelling with little tuning required. We further establish that energy discrepancy achieves comparable results to contrastive divergence at a lower computational cost when learning a lower-dimensional energy-based prior for high-dimensional data. This shows that energy discrepancy is scalable to high dimensions.

\textbf{Limitations:}
Energy-based models make the assumption that the data-distribution has a positive density which is violated for most high-dimensional data sets due to the manifold hypothesis. Compared to contrastive divergence or diffusion-based strategies, energy discrepancy is especially sensitive to such singularities in the data set. Currently, this limits energy discrepancy to settings in which a lower-dimensional representation with non-singular support can be learned or constructed efficiently and accurately.

\textbf{Outlook:}
This work can be extended in various ways. Firstly, we want to explore other choices for the conditional distribution $q$. In particular, the effect of different types of noise such as Laplace noise or anisotropic Gaussian noise are open questions. Furthermore, energy discrepancy is a well-suited objective function for discrete data for appropriately chosen perturbations. We present a preliminary study of energy discrepancy on discrete spaces in the workshop paper \citet{schroeder2023training}. Secondly, we believe that improvements to our methodology can be made by learning the energy function of image-data on pixel space directly, as in this domain specific architectures such as U-nets \citep{UNet2015, song2019generative} can be used in the modelling. However, this requires further work in learning the data-manifold during training so that early saturation can be prevented. Finally, the partial differential equations arising in Euclidean energy discrepancies promise exciting insights into the connections between energy-based learning, stochastic differential equations, and optimal control which we want to investigate further.

\section*{Acknowledgements}
TS and ABD are grateful to G.A. Pavliotis for insightful discussions throughout the project. TS was supported by the EPSRC-DTP scholarship partially funded by the Department of Mathematics, Imperial College London. ZO was supported by the Lee Family Scholarship. JNL was supported by the Feuer International Scholarship in Artificial Intelligence. ABD was supported by Wave 1 of The UKRI Strategic Priorities Fund under the EPSRC Grant EP/T001569/1 and EPSRC Grant EP/W006022/1, particularly the “Ecosystems of Digital Twins” theme within those grants and The Alan Turing Institute. We thank the anonymous reviewer for their comments.

{
\bibliography{main.bib}

\begin{thebibliography}{68}
\providecommand{\natexlab}[1]{#1}
\providecommand{\url}[1]{\texttt{#1}}
\expandafter\ifx\csname urlstyle\endcsname\relax
  \providecommand{\doi}[1]{doi: #1}\else
  \providecommand{\doi}{doi: \begingroup \urlstyle{rm}\Url}\fi

\bibitem[Arbel et~al.(2021)Arbel, Zhou, and Gretton]{arbel2021generalized}
Arbel, Michael, Zhou, Liang, and Gretton, Arthur.
\newblock Generalized energy based models.
\newblock In \emph{International Conference on Learning Representations}, 2021.

\bibitem[Barp et~al.(2019)Barp, Briol, Duncan, Girolami, and Mackey]{barp2019minimum}
Barp, Alessandro, Briol, Francois-Xavier, Duncan, Andrew, Girolami, Mark, and Mackey, Lester.
\newblock Minimum stein discrepancy estimators.
\newblock \emph{Advances in Neural Information Processing Systems}, 32, 2019.

\bibitem[Bengio et~al.(2013)Bengio, Courville, and Vincent]{bengio2013representationlearning}
Bengio, Yoshua, Courville, Aaron, and Vincent, Pascal.
\newblock Representation learning: A review and new perspectives.
\newblock \emph{IEEE Transactions on Pattern Analysis and Machine Intelligence}, 35\penalty0 (8):\penalty0 1798--1828, 2013.

\bibitem[Carreira-Perpi{\~n}\'an \& Hinton(2005)Carreira-Perpi{\~n}\'an and Hinton]{pmlr-vR5-carreira-perpinan05a}
Carreira-Perpi{\~n}\'an, Miguel~\'A. and Hinton, Geoffrey.
\newblock On contrastive divergence learning.
\newblock In \emph{Proceedings of the Tenth International Workshop on Artificial Intelligence and Statistics}, volume~R5 of \emph{Proceedings of Machine Learning Research}, pp.\  33--40. PMLR, 06--08 Jan 2005.

\bibitem[Ceylan \& Gutmann(2018)Ceylan and Gutmann]{ConditionalNoiseContrastiveEstimation}
Ceylan, Ciwan and Gutmann, Michael~U.
\newblock Conditional noise-contrastive estimation of unnormalised models.
\newblock In Dy, Jennifer and Krause, Andreas (eds.), \emph{Proceedings of the 35th International Conference on Machine Learning}, volume~80 of \emph{Proceedings of Machine Learning Research}, pp.\  726--734. PMLR, 10--15 Jul 2018.

\bibitem[Chwialkowski et~al.(2016)Chwialkowski, Strathmann, and Gretton]{chwialkowski2016kernel}
Chwialkowski, Kacper, Strathmann, Heiko, and Gretton, Arthur.
\newblock A kernel test of goodness of fit.
\newblock In \emph{International conference on machine learning}, pp.\  2606--2615. PMLR, 2016.

\bibitem[Dai \& Wipf(2018)Dai and Wipf]{dai2019diagnosing}
Dai, Bin and Wipf, David.
\newblock Diagnosing and enhancing vae models.
\newblock In \emph{International Conference on Learning Representations}, 2018.

\bibitem[Dai et~al.(2020)Dai, Singh, Dai, Sutton, and Schuurmans]{dai2020learning}
Dai, Hanjun, Singh, Rishabh, Dai, Bo, Sutton, Charles, and Schuurmans, Dale.
\newblock Learning discrete energy-based models via auxiliary-variable local exploration.
\newblock \emph{Advances in Neural Information Processing Systems}, 33:\penalty0 10443--10455, 2020.

\bibitem[Dai et~al.(2015)Dai, Lu, and Wu]{dai2015generative}
Dai, Jifeng, Lu, Yang, and Wu, Ying~Nian.
\newblock Generative modeling of convolutional neural networks.
\newblock In \emph{3rd International Conference on Learning Representations, {ICLR} 2015, San Diego, CA, USA, May 7-9, 2015, Conference Track Proceedings}, 2015.

\bibitem[Du \& Mordatch(2019)Du and Mordatch]{du2019implicit}
Du, Yilun and Mordatch, Igor.
\newblock Implicit generation and modeling with energy based models.
\newblock \emph{Advances in Neural Information Processing Systems}, 32, 2019.

\bibitem[Du et~al.(2021)Du, Li, Tenenbaum, and Mordatch]{Du-Mordatch_ImprovedCD}
Du, Yilun, Li, Shuang, Tenenbaum, Joshua, and Mordatch, Igor.
\newblock Improved contrastive divergence training of energy-based models.
\newblock In \emph{Proceedings of the 38th International Conference on Machine Learning}, volume 139 of \emph{Proceedings of Machine Learning Research}, pp.\  2837--2848. PMLR, 18--24 Jul 2021.

\bibitem[Friedman(2012)]{friedman2012stochastic}
Friedman, Avner.
\newblock \emph{Stochastic differential equations and applications}.
\newblock Courier Corporation, 2012.

\bibitem[Gao et~al.(2021)Gao, Song, Poole, Wu, and Kingma]{gao2020learning}
Gao, Ruiqi, Song, Yang, Poole, Ben, Wu, Ying~Nian, and Kingma, Diederik~P.
\newblock Learning energy-based models by diffusion recovery likelihood.
\newblock In \emph{9th International Conference on Learning Representations, {ICLR}}, 2021.

\bibitem[Gelman \& Rubin(1992)Gelman and Rubin]{gelman1992inference}
Gelman, Andrew and Rubin, Donald~B.
\newblock Inference from iterative simulation using multiple sequences.
\newblock \emph{Statistical science}, pp.\  457--472, 1992.

\bibitem[Ghosh et~al.(2019)Ghosh, Sajjadi, Vergari, Black, and Scholkopf]{ghosh2019variational}
Ghosh, Partha, Sajjadi, Mehdi~SM, Vergari, Antonio, Black, Michael, and Scholkopf, Bernhard.
\newblock From variational to deterministic autoencoders.
\newblock In \emph{International Conference on Learning Representations}, 2019.

\bibitem[Glaser et~al.(2022)Glaser, Arbel, Doucet, and Gretton]{glaser2022maximum}
Glaser, Pierre, Arbel, Michael, Doucet, Arnaud, and Gretton, Arthur.
\newblock Maximum likelihood learning of energy-based models for simulation-based inference.
\newblock \emph{arXiv preprint arXiv:2210.14756}, 2022.

\bibitem[Glorot \& Bengio(2010)Glorot and Bengio]{glorot2010understanding}
Glorot, Xavier and Bengio, Yoshua.
\newblock Understanding the difficulty of training deep feedforward neural networks.
\newblock In \emph{Proceedings of the thirteenth international conference on artificial intelligence and statistics}, pp.\  249--256. JMLR Workshop and Conference Proceedings, 2010.

\bibitem[Gorham \& Mackey(2017)Gorham and Mackey]{gorham2017measuring}
Gorham, Jackson and Mackey, Lester.
\newblock Measuring sample quality with kernels.
\newblock In \emph{International Conference on Machine Learning}, pp.\  1292--1301. PMLR, 2017.

\bibitem[Grathwohl et~al.(2020)Grathwohl, Wang, Jacobsen, Duvenaud, Norouzi, and Swersky]{grathwohl2019your}
Grathwohl, Will, Wang, Kuan-Chieh, Jacobsen, J{\"o}rn-Henrik, Duvenaud, David, Norouzi, Mohammad, and Swersky, Kevin.
\newblock Your classifier is secretly an energy based model and you should treat it like one.
\newblock In \emph{8th International Conference on Learning Representations, {ICLR}}, 2020.

\bibitem[Grathwohl et~al.(2021)Grathwohl, Swersky, Hashemi, Duvenaud, and Maddison]{grathwohl2021oops}
Grathwohl, Will, Swersky, Kevin, Hashemi, Milad, Duvenaud, David, and Maddison, Chris.
\newblock Oops i took a gradient: Scalable sampling for discrete distributions.
\newblock In \emph{International Conference on Machine Learning}, pp.\  3831--3841. PMLR, 2021.

\bibitem[Gutmann \& Hyv{\"a}rinen(2010)Gutmann and Hyv{\"a}rinen]{gutmann2010noise}
Gutmann, Michael and Hyv{\"a}rinen, Aapo.
\newblock Noise-contrastive estimation: A new estimation principle for unnormalized statistical models.
\newblock In \emph{Proceedings of the thirteenth international conference on artificial intelligence and statistics}, pp.\  297--304. JMLR Workshop and Conference Proceedings, 2010.

\bibitem[Han et~al.(2017)Han, Lu, Zhu, and Wu]{han2017alternating}
Han, Tian, Lu, Yang, Zhu, Song-Chun, and Wu, Ying~Nian.
\newblock Alternating back-propagation for generator network.
\newblock In \emph{Proceedings of the AAAI Conference on Artificial Intelligence}, volume~31, 2017.

\bibitem[Heusel et~al.(2017)Heusel, Ramsauer, Unterthiner, Nessler, and Hochreiter]{heusel2017gans}
Heusel, Martin, Ramsauer, Hubert, Unterthiner, Thomas, Nessler, Bernhard, and Hochreiter, Sepp.
\newblock Gans trained by a two time-scale update rule converge to a local nash equilibrium.
\newblock \emph{Advances in neural information processing systems}, 30, 2017.

\bibitem[Hinton(2002)]{Hinton2002CD}
Hinton, Geoffrey~E.
\newblock Training products of experts by minimizing contrastive divergence.
\newblock \emph{Neural Comput.}, 14\penalty0 (8):\penalty0 1771–1800, aug 2002.

\bibitem[Hyv{\"a}rinen \& Dayan(2005)Hyv{\"a}rinen and Dayan]{hyvarinen2005estimation}
Hyv{\"a}rinen, Aapo and Dayan, Peter.
\newblock Estimation of non-normalized statistical models by score matching.
\newblock \emph{Journal of Machine Learning Research}, 6\penalty0 (4), 2005.

\bibitem[Kingma \& Ba(2015)Kingma and Ba]{kingma2014adam}
Kingma, Diederik~P and Ba, Jimmy.
\newblock Adam: A method for stochastic optimization.
\newblock \emph{3rd International Conference on Learning Representations, {ICLR} 2015}, 2015.

\bibitem[Kingma \& Welling(2014)Kingma and Welling]{kingma2013auto}
Kingma, Diederik~P and Welling, Max.
\newblock Auto-encoding variational bayes.
\newblock \emph{2nd International Conference on Learning Representations, {ICLR}}, 2014.

\bibitem[Kingma \& Dhariwal(2018)Kingma and Dhariwal]{kingma2018glow}
Kingma, Durk~P and Dhariwal, Prafulla.
\newblock Glow: Generative flow with invertible 1x1 convolutions.
\newblock \emph{Advances in neural information processing systems}, 31, 2018.

\bibitem[Krizhevsky et~al.(2009)Krizhevsky, Hinton, et~al.]{krizhevsky2009learning}
Krizhevsky, Alex, Hinton, Geoffrey, et~al.
\newblock Learning multiple layers of features from tiny images.
\newblock 2009.

\bibitem[Kumar et~al.(2019)Kumar, Ozair, Goyal, Courville, and Bengio]{kumar2019maximum}
Kumar, Rithesh, Ozair, Sherjil, Goyal, Anirudh, Courville, Aaron, and Bengio, Yoshua.
\newblock Maximum entropy generators for energy-based models.
\newblock \emph{arXiv preprint arXiv:1901.08508}, 2019.

\bibitem[LeCun(2022)]{LeCunSlides}
LeCun, Yann.
\newblock From machine learning to autonomous intelligence lecture 2, 2022.
\newblock URL \url{https://leshouches2022.github.io/SLIDES/lecun-20220720-leshouches-02.pdf}.

\bibitem[LeCun et~al.(2006)LeCun, Chopra, Hadsell, Ranzato, and Huang]{lecun2006tutorial}
LeCun, Yann, Chopra, Sumit, Hadsell, Raia, Ranzato, M, and Huang, Fujie.
\newblock A tutorial on energy-based learning.
\newblock \emph{Predicting structured data}, 1\penalty0 (0), 2006.

\bibitem[Li et~al.(2019)Li, Chen, and Sommer]{li2019learning}
Li, Zengyi, Chen, Yubei, and Sommer, Friedrich~T.
\newblock Learning energy-based models in high-dimensional spaces with multiscale denoising-score matching.
\newblock \emph{Entropy}, 2019.

\bibitem[Liu et~al.(2016)Liu, Lee, and Jordan]{pmlr-v48-liub16}
Liu, Qiang, Lee, Jason, and Jordan, Michael.
\newblock A kernelized stein discrepancy for goodness-of-fit tests.
\newblock In \emph{Proceedings of The 33rd International Conference on Machine Learning}, volume~48 of \emph{Proceedings of Machine Learning Research}, pp.\  276--284. PMLR, 2016.

\bibitem[Liu et~al.(2023)Liu, Duncan, and Gandy]{liu2023using}
Liu, Xing, Duncan, Andrew~B, and Gandy, Axel.
\newblock Using perturbation to improve goodness-of-fit tests based on kernelized stein discrepancy.
\newblock \emph{International Conference on Machine Learning {ICML}}, 40, 2023.

\bibitem[Liu et~al.(2015)Liu, Luo, Wang, and Tang]{liu2015deep}
Liu, Ziwei, Luo, Ping, Wang, Xiaogang, and Tang, Xiaoou.
\newblock Deep learning face attributes in the wild.
\newblock In \emph{Proceedings of the IEEE international conference on computer vision}, pp.\  3730--3738, 2015.

\bibitem[Luo et~al.(2023)Luo, Jiang, Hu, Sun, Li, and Zhang]{luo2023training}
Luo, Weijian, Jiang, Hao, Hu, Tianyang, Sun, Jiacheng, Li, Zhenguo, and Zhang, Zhihua.
\newblock Training energy-based models with diffusion contrastive divergences, 2023.

\bibitem[Lyu(2009)]{Lyu2009ScoreMatching}
Lyu, Siwei.
\newblock Interpretation and generalization of score matching.
\newblock In \emph{Proceedings of the Twenty-Fifth Conference on Uncertainty in Artificial Intelligence}, UAI '09, pp.\  359–366, Arlington, Virginia, USA, 2009. AUAI Press.
\newblock ISBN 9780974903958.

\bibitem[Lyu(2011)]{Lyu2011KLContractions}
Lyu, Siwei.
\newblock Unifying non-maximum likelihood learning objectives with minimum kl contraction.
\newblock In Shawe-Taylor, J., Zemel, R., Bartlett, P., Pereira, F., and Weinberger, K.Q. (eds.), \emph{Advances in Neural Information Processing Systems}, volume~24. Curran Associates, Inc., 2011.

\bibitem[Majerek et~al.(2005)Majerek, Nowak, and Ziba]{ConditionalLLN}
Majerek, Dariusz, Nowak, Wioletta, and Ziba, Wies.
\newblock Conditional strong law of large number.
\newblock \emph{International Journal of Pure and Applied Mathematics}, 20, 01 2005.

\bibitem[Netzer et~al.(2011)Netzer, Wang, Coates, Bissacco, Wu, and Ng]{netzer2011reading}
Netzer, Yuval, Wang, Tao, Coates, Adam, Bissacco, Alessandro, Wu, Bo, and Ng, Andrew~Y.
\newblock Reading digits in natural images with unsupervised feature learning.
\newblock \emph{NeurIPS Workshop on Deep Learning and Unsupervised Feature Learning}, 2011.

\bibitem[Ngiam et~al.(2011)Ngiam, Chen, Koh, and Ng]{Ngiam2011LearningDE}
Ngiam, Jiquan, Chen, Zhenghao, Koh, Pang~Wei, and Ng, A.
\newblock Learning deep energy models.
\newblock In \emph{International Conference on Machine Learning}, 2011.

\bibitem[Nijkamp et~al.(2019)Nijkamp, Hill, Zhu, and Wu]{Nijkamp2019LearningNN}
Nijkamp, Erik, Hill, Mitch, Zhu, Song-Chun, and Wu, Ying~Nian.
\newblock Learning non-convergent non-persistent short-run mcmc toward energy-based model.
\newblock In \emph{Advances in Neural Information Processing Systems}, volume~32, 2019.

\bibitem[Nijkamp et~al.(2020{\natexlab{a}})Nijkamp, Hill, Han, Zhu, and Wu]{nijkamp2020anatomy}
Nijkamp, Erik, Hill, Mitch, Han, Tian, Zhu, Song-Chun, and Wu, Ying~Nian.
\newblock On the anatomy of mcmc-based maximum likelihood learning of energy-based models.
\newblock In \emph{Proceedings of the AAAI Conference on Artificial Intelligence}, 2020{\natexlab{a}}.

\bibitem[Nijkamp et~al.(2020{\natexlab{b}})Nijkamp, Pang, Han, Zhou, Zhu, and Wu]{nijkamp2020learning}
Nijkamp, Erik, Pang, Bo, Han, Tian, Zhou, Linqi, Zhu, Song-Chun, and Wu, Ying~Nian.
\newblock Learning multi-layer latent variable model via variational optimization of short run mcmc for approximate inference.
\newblock In \emph{Computer Vision--ECCV 2020: 16th European Conference, Glasgow, UK, August 23--28, 2020, Proceedings, Part VI 16}, pp.\  361--378. Springer, 2020{\natexlab{b}}.

\bibitem[Ou et~al.(2022)Ou, Xu, Su, Li, Zhao, and Bian]{ou2022learning}
Ou, Zijing, Xu, Tingyang, Su, Qinliang, Li, Yingzhen, Zhao, Peilin, and Bian, Yatao.
\newblock Learning neural set functions under the optimal subset oracle.
\newblock \emph{Advances in Neural Information Processing Systems}, 35:\penalty0 35021--35034, 2022.

\bibitem[Pang et~al.(2020)Pang, Han, Nijkamp, Zhu, and Wu]{pang2020learning}
Pang, Bo, Han, Tian, Nijkamp, Erik, Zhu, Song-Chun, and Wu, Ying~Nian.
\newblock Learning latent space energy-based prior model.
\newblock \emph{Advances in Neural Information Processing Systems}, 33:\penalty0 21994--22008, 2020.

\bibitem[Peyré \& Cuturi(2019)Peyré and Cuturi]{PeyreOT}
Peyré, Gabriel and Cuturi, Marco.
\newblock Computational optimal transport.
\newblock \emph{Foundations and Trends in Machine Learning}, 11 (5-6):\penalty0 355--602, 2019.

\bibitem[Pidstrigach(2022)]{PidstrigachScore2022}
Pidstrigach, Jakiw.
\newblock Score-based generative models detect manifolds.
\newblock In Koyejo, S., Mohamed, S., Agarwal, A., Belgrave, D., Cho, K., and Oh, A. (eds.), \emph{Advances in Neural Information Processing Systems}, volume~35, pp.\  35852--35865. Curran Associates, Inc., 2022.

\bibitem[Raginsky \& Sason(2013)Raginsky and Sason]{RaginskySurveyConcentration}
Raginsky, Maxim and Sason, Igal.
\newblock Concentration of measure inequalities in information theory, communications, and coding.
\newblock \emph{Foundations and Trends{\textregistered} in Communications and Information Theory}, 10\penalty0 (1-2):\penalty0 1--246, 2013.
\newblock ISSN 1567-2190.
\newblock \doi{10.1561/0100000064}.

\bibitem[Ronneberger et~al.(2015)Ronneberger, Fischer, and Brox]{UNet2015}
Ronneberger, Olaf, Fischer, Philipp, and Brox, Thomas.
\newblock U-net: Convolutional networks for biomedical image segmentation.
\newblock \emph{Medical Image Computing and Computer-Assisted Intervention – MICCAI}, 9351:\penalty0 234–241, 2015.

\bibitem[Schröder et~al.(2023)Schröder, Ou, Li, and Duncan]{schroeder2023training}
Schröder, Tobias, Ou, Zijing, Li, Yingzhen, and Duncan, Andrew~B.
\newblock Training discrete ebms with energy discrepancy.
\newblock \emph{Sampling and Optimization in Discrete Spaces workshop, ICML}, 2023.

\bibitem[Song \& Ermon(2019)Song and Ermon]{song2019generative}
Song, Yang and Ermon, Stefano.
\newblock Generative modeling by estimating gradients of the data distribution.
\newblock \emph{Advances in Neural Information Processing Systems}, 32, 2019.

\bibitem[{Song} \& {Kingma}(2021){Song} and {Kingma}]{Song21EBMTraining}
{Song}, Yang and {Kingma}, Diederik~P.
\newblock {How to Train Your Energy-Based Models}.
\newblock \emph{arXiv e-prints}, 2021.

\bibitem[Song et~al.(2020)Song, Garg, Shi, and Ermon]{song2020sliced}
Song, Yang, Garg, Sahaj, Shi, Jiaxin, and Ermon, Stefano.
\newblock Sliced score matching: A scalable approach to density and score estimation.
\newblock In \emph{Uncertainty in Artificial Intelligence}, pp.\  574--584. PMLR, 2020.

\bibitem[Song et~al.(2021{\natexlab{a}})Song, Durkan, Murray, and Ermon]{song2021maximum}
Song, Yang, Durkan, Conor, Murray, Iain, and Ermon, Stefano.
\newblock Maximum likelihood training of score-based diffusion models.
\newblock \emph{Advances in Neural Information Processing Systems}, 34:\penalty0 1415--1428, 2021{\natexlab{a}}.

\bibitem[Song et~al.(2021{\natexlab{b}})Song, Sohl{-}Dickstein, Kingma, Kumar, Ermon, and Poole]{SongScorebasedGen}
Song, Yang, Sohl{-}Dickstein, Jascha, Kingma, Diederik~P., Kumar, Abhishek, Ermon, Stefano, and Poole, Ben.
\newblock Score-based generative modeling through stochastic differential equations.
\newblock In \emph{9th International Conference on Learning Representations, {ICLR}}, 2021{\natexlab{b}}.

\bibitem[Stam(1959)]{STAM1959101}
Stam, A.J.
\newblock Some inequalities satisfied by the quantities of information of fisher and shannon.
\newblock \emph{Information and Control}, 2\penalty0 (2):\penalty0 101--112, 1959.
\newblock ISSN 0019-9958.

\bibitem[Sutskever \& Tieleman(2010)Sutskever and Tieleman]{sutskever2010convergence}
Sutskever, Ilya and Tieleman, Tijmen.
\newblock On the convergence properties of contrastive divergence.
\newblock In \emph{Proceedings of the thirteenth international conference on artificial intelligence and statistics}, pp.\  789--795. JMLR Workshop and Conference Proceedings, 2010.

\bibitem[van~den Oord et~al.(2018)van~den Oord, Li, and Vinyals]{Oord2018RepresentationLearning}
van~den Oord, A{\"{a}}ron, Li, Yazhe, and Vinyals, Oriol.
\newblock Representation learning with contrastive predictive coding.
\newblock \emph{arXiv preprint 1807.03748}, 2018.

\bibitem[Vincent(2011)]{vincent2011connection}
Vincent, Pascal.
\newblock A connection between score matching and denoising autoencoders.
\newblock \emph{Neural computation}, 23\penalty0 (7):\penalty0 1661--1674, 2011.

\bibitem[Xiao \& Han(2022)Xiao and Han]{XiaoMultiStageDensityRatio2022}
Xiao, Zhisheng and Han, Tian.
\newblock Adaptive multi-stage density ratio estimation for learning latent space energy-based model.
\newblock In Koyejo, S., Mohamed, S., Agarwal, A., Belgrave, D., Cho, K., and Oh, A. (eds.), \emph{Advances in Neural Information Processing Systems}, volume~35, pp.\  21590--21601. Curran Associates, Inc., 2022.

\bibitem[Xie et~al.(2016)Xie, Lu, Zhu, and Wu]{xie2016theory}
Xie, Jianwen, Lu, Yang, Zhu, Song-Chun, and Wu, Yingnian.
\newblock A theory of generative convnet.
\newblock In \emph{International Conference on Machine Learning}, pp.\  2635--2644. PMLR, 2016.

\bibitem[Yair \& Michaeli(2021)Yair and Michaeli]{yair2021contrastive}
Yair, Omer and Michaeli, Tomer.
\newblock Contrastive divergence learning is a time reversal adversarial game.
\newblock In \emph{International Conference on Learning Representations}, 2021.

\bibitem[Zenati et~al.(2018)Zenati, Foo, Lecouat, Manek, and Chandrasekhar]{zenati2018efficient}
Zenati, Houssam, Foo, Chuan~Sheng, Lecouat, Bruno, Manek, Gaurav, and Chandrasekhar, Vijay~Ramaseshan.
\newblock Efficient gan-based anomaly detection.
\newblock \emph{arXiv preprint arXiv:1802.06222}, 2018.

\bibitem[Zhang et~al.(2020)Zhang, Hayes, Bird, Habib, and Barber]{Zhang2020SpreadDivergence}
Zhang, Mingtian, Hayes, Peter, Bird, Thomas, Habib, Raza, and Barber, David.
\newblock Spread divergence.
\newblock In III, Hal~Daumé and Singh, Aarti (eds.), \emph{Proceedings of the 37th International Conference on Machine Learning}, volume 119 of \emph{Proceedings of Machine Learning Research}, pp.\  11106--11116. PMLR, 2020.

\bibitem[Zhang et~al.(2022)Zhang, Key, Hayes, Barber, Paige, and Briol]{zhang2022towards}
Zhang, Mingtian, Key, Oscar, Hayes, Peter, Barber, David, Paige, Brooks, and Briol, Fran{\c{c}}ois-Xavier.
\newblock Towards healing the blindness of score matching.
\newblock \emph{NeurIPS 2022 Workshop on Score-Based Methods}, 2022.

\bibitem[Øksendal(2003)]{OksendalSDEs}
Øksendal, Bernt.
\newblock \emph{Stochastic Differential Equations : An Introduction with Applications}.
\newblock Universitext. Springer, Berlin, Heidelberg, sixth edition. edition, 2003.
\newblock ISBN 9783642143946.

\end{thebibliography}
\bibliographystyle{icml2022}
}

\newpage
\appendix

\begin{center}
\LARGE
\textbf{Appendix for ``Energy Discrepancies: A Score-Independent Loss for Energy-Based Models''}
\end{center}

\etocdepthtag.toc{mtappendix}
\etocsettagdepth{mtchapter}{none}
\etocsettagdepth{mtappendix}{subsection}
{\small \tableofcontents}

\section{Proofs and Derivations}
In this section we are going to prove \cref{theorem-energy-discrepancy} and \cref{theorem-interpolation-SM-MLE}. Furthermore, we are going to generalise \cref{theorem-interpolation-SM-MLE} to arbitrary diffusion processes and discuss the connections of energy discrepancy to other training losses for energy-based models.
\subsection{Proof of the Non-Parametric Estimation Theorem \ref{theorem-energy-discrepancy}} \label{appendix-proof-theorem-energy-discrepancy}
In this subsection we give a formal proof for the uniqueness of minima of $\ED_{q}(p_\data, U)$ as a functional in the energy function $U$. We first reiterate the theorem as stated in the paper:
\restatheoremone*
For this theorem we need to specify additional assumptions on the conditional distribution $q$ and on the optimisation domain to guarantee uniqueness. Firstly, we require the energy-based distribution to be normalisable which implies that $\exp(-U)\in L^1(\mathcal X, \mathrm d\x)$. For the existence and uniqueness of minimisers we have to constrain the space of energy functions since $\ED_{q}(p_\data, U) = \ED_q(p_\data, U+c)$ for any constant $c\in \mathbb R$. Hence, we restrict the optimisation domain to functions $U$ that satisfy $\min_{\x\in\mathcal X} U(\x)=0$. 
To specify the condition on $q$ we define an equivalence relation on $\mathcal X$:
{\definition[$q$-equivalence]
For a pair of samples $\x_1, \x_2\in\mathcal X$ we say that $\x_1$ and $\x_2$ are $q$-neighbours if there exists a $\y\in\mathcal Y$ such that $\x_1, \x_2\in \mathrm{supp}(q(\y\vert \cdot))$. We denote the set of $q$-neighbours of $\x$ as $\mathcal N_q(\x)$.
We then define the $q$-equivalence relation as:
\begin{equation*}
    \x_1 \sim_{q} \x_2 \Leftrightarrow \,\text{ there exist }\,(\z_{1:R}) \,\text{ with }\, \z_1 = \x_1,\quad \z_R = \x_2,\,\text{ and}\quad\z_{r+1}\in \mathcal N_q(\z_r)
\end{equation*}
}

It is easy to see that $q$-equivalence defines an equivalence relation, i.e. it is symmetric, reflexive, and transitive. We summarise the assumptions for the non-parametric approximation theorem as follows:
\begin{assumption}\label{appendix-assumption-non-parametric-estimation}
We define the optimisation domain
\begin{equation*}
    \mathcal G := \left\{U:\mathcal X\mapsto \mathbb R \,\text{ such that }\, \exp(-U)\in L^1(\mathcal X, \mathrm d\x)\,,\,\,\, U\in L^1(p_\data)\,, \,\text{ and }\, \min_{\x\in\mathcal X} U(x) = 0\right\}
\end{equation*}
We then make the following assumptions on $q$ and $\mathcal G$:
\begin{enumerate}
    \item All elements of $\mathcal X$ are $q$-equivalent, i.e. for every $\x\in\mathcal X$ it holds that $[\x]_q = \mathcal X$.
    \item There exists a $U^\ast\in \mathcal G$ such that $\exp(-U^\ast) \propto p_\data$.
\end{enumerate}
\end{assumption}
Under \cref{appendix-assumption-non-parametric-estimation}, $\ED_q(p_\data, U)$ has a unique global minimiser $U^\ast = -\log p_\data +c$ in $\mathcal G$. We prove this by computing the first and second variation of $\ED_q$. Note that $\mathcal G$ may not be a vector space in general since in some cases $0\notin \mathcal G$. Furthermore, we did not specify a norm on $\mathcal G$. We will omit this technical issue in this discussion. We start from the following lemmata and complete the proof of \cref{theorem-energy-discrepancy} in \cref{corollary-energy-discrepancy}.
{\lemma{
Let $h\in \mathcal G$ be arbitrary. The first variation of $\ED_q$ is given by
\begin{align}\label{first-variation-equation}
    \left. \frac{\mathrm{d}}{\mathrm{d} \epsilon} \ED_q (p_\data, U + \epsilon h) \right|_{\epsilon = 0} = \mathbb{E}_{p_\data(\x)} [h(\x)] - \mathbb{E}_{p_\data(\x)} \mathbb{E}_{q(\y|\x)} \mathbb{E}_{p_U(\z | \y)}[h(\z)]
\end{align}
where $p_U(\z | \y) = \frac{q(\y|\z)\exp(-U(\z))}{\int q(\y\vert \z') \exp(-U(\z'))\mathrm d\z'}$.
}}
\begin{proof}
We define the short-hand notation $U_\epsilon := U + \epsilon h$. The energy discrepancy at $U_\varepsilon$ reads
\begin{align*}
    \ED_{q} (p_\data, U_\epsilon) =  \mathbb{E}_{p_\data(\x)} [U_\epsilon(\x)] + \mathbb{E}_{p_\data(\x)}\mathbb{E}_{q(\y | \x)} \left[ \log \int q(\y | \z) \exp(-U_\epsilon(\z)) \mathrm{d} \z \right] \nonumber\,.
\end{align*}
For the first functional derivative, we only need to calculate
\begin{align}\label{equation-appendix-derivative-free-energy}
    \frac{\mathrm{d}}{\mathrm{d} \epsilon} \log \int q(\y | \z) \exp(-U_\epsilon(\z)) \mathrm{d} \z = \int \frac{- q(\y | \z) h(\z) \exp(-U_\epsilon(\z))}{\int q(\y | {\z^\prime}) \exp(-U_\epsilon(\z^\prime)) \mathrm{d} {\z^\prime}}  \mathrm{d} \z = -\mathbb{E}_{p_{U_\epsilon}(\z | \y)}[h(\z)].
\end{align}
Plugging this expression into $\ED_{q} (p_\data, U_\epsilon)$ and setting $\epsilon = 0$ yields the first variation of $\ED_q$.
\end{proof}
{\lemma{ \label{second-variation-energy-discrepancy}
The second variation of $\ED_q$ is given by
\begin{align*}
    \left. \frac{\mathrm{d}^2}{\mathrm{d} \epsilon^2} \ED_{q} (p_\data, U + \epsilon h) \right|_{\epsilon = 0} = \mathbb{E}_{p_\data(\x)} \mathbb{E}_{q(\y|\x)} \mathrm{Var}_{p_{U}(\z| \y)}[h(\z)].
\end{align*}
}}
\begin{proof}
For the second order term, we have based on equation \ref{equation-appendix-derivative-free-energy} and the quotient rule for derivatives:
\begin{align*}
    \frac{\mathrm{d}^2}{\mathrm{d} \epsilon^2} \log &\int q(\y | \z) \exp(-U_\epsilon(\z)) \mathrm{d} \z \\\notag
    &= \frac{\int q(\y | \z) \exp(U_\epsilon(\z))\,h^2(\z)\, \mathrm d\z\, \int q(\y | \z^\prime) \exp(-U_\epsilon(\z^\prime)) \mathrm d\z^\prime }{\left( \int q(\y | {\z^\prime}) \exp(-U_\epsilon({\z^\prime})) \mathrm{d} {\z^\prime} \right)^2} \\\notag
    &\quad- \frac{\int q(\y | \z) \exp(U_\epsilon(\z))h(\z)\mathrm d\z \int q(\y | {\z^\prime}) \exp(-U_\epsilon({\z^\prime})) h(\z^\prime) \mathrm{d} {\z^\prime} }{\left( \int q(\y | {\z^\prime}) \exp(-U_\epsilon({\z^\prime})) \mathrm{d} {\z^\prime} \right)^2} \\\notag
    &= \mathbb{E}_{p_{U_\epsilon}(\z| \y)}[h^2(\z)] - \mathbb{E}_{p_{U_\epsilon}(\z| \y)}[h(\z)]^2 = \mathrm{Var}_{p_{U_{\epsilon}}(\z| \y)}[h(\z)]\,.
\end{align*}
We obtain the desired result by interchanging the outer expectations with the derivatives in $\epsilon$.
\end{proof}
{\corollary{\label{corollary-energy-discrepancy}
Let $c=\min_{\x\in\mathcal X} (-\log p_\data(\x))$. For $U^\ast = -\log(p_\data) - c\in\mathcal G$ it holds under \cref{appendix-assumption-non-parametric-estimation} that
\begin{align*}
    \left. \frac{\mathrm{d}}{\mathrm{d} \epsilon} \ED_{q} (p_\data, U^\ast + \epsilon h) \right|_{\epsilon = 0} &= 0  \\\nonumber
    \left. \frac{\mathrm{d}^2}{\mathrm{d} \epsilon^2} \ED_{q} (p_\data, U^\ast + \epsilon h) \right|_{\epsilon = 0}&>0 \hspace{1cm}\text{for all} \hspace{1cm}h\in\mathcal G\,.
\end{align*}
Furthermore, $U^\ast$ is the unique global minimiser of $\ED_q(p_\data, \cdot)$ in $\mathcal G$.
}}

\begin{proof}
By definition, the variance is non-negative, i.e. for every $h\in \mathcal G$:
\begin{equation*}
    \left. \frac{\mathrm{d}^2}{\mathrm{d} \epsilon^2} \ED_{q} (p_\data, U + \epsilon h) \right|_{\epsilon = 0} = \mathrm{Var}_{p_{U}(\z| \y)}[h(\z)]\geq 0\,.
\end{equation*}
Consequently,  the energy discrepancy is convex and an extremal point of $\ED_q(p_\data, \cdot)$ is a global minimiser. We are left to show that the minimiser is obtained at $U^\ast$ and unique. First of all, we have:
\begin{align*}
    \mathbb{E}_{p_{U^\ast}(\z | \y)}[h(\z)]
    &= \int \frac{q(\y|\z) \exp(-U^*(\z))}{\int q(\y|\z^\prime) \exp(-U^*(\z^\prime)) \mathrm{d} \z^\prime}  h(\z)\mathrm{d}\z = \int \frac{q(\y|\z) p_\data(\z)}{\int q(\y|\z^\prime) p_\data(\z^\prime) \mathrm{d} \z^\prime}  h(\z)\mathrm{d}\z. \nonumber
\end{align*}
By applying the outer expectations we obtain
\begin{align*}
    \mathbb{E}_{p_\data(\x)} \mathbb{E}_{q(\y|\x)} \mathbb{E}_{p_{U^\ast}(\z| \y)}[h(\z)] 
    &= \int \int p_\data(\x) q(\y\vert \x) \mathrm d\x  \int \frac{q(\y|\z) p_\data(\z)}{\int q(\y|\z^\prime) p_\data(\z^\prime) \mathrm{d} \z^\prime}  h(\z)\,\mathrm d\y\, \mathrm d\z\nonumber \\
    &= \int \int q(\y\vert \x) p_\data(\z) h(\z) \,\mathrm d\y \,\mathrm d\z\nonumber = \mathbb{E}_{p_\data(\z)} [h(\z)], \nonumber
\end{align*}
where we used that the marginal distributions $\int p_\data(\x) q(\y\vert \x) \mathrm d\x$ cancel out and the conditional probability density integrates to one. This implies
\begin{align*}
    \left. \frac{\mathrm{d}}{\mathrm{d} \epsilon} \ED_{q} (p_\data, U^* + \epsilon h) \right|_{\epsilon = 0} =\mathbb{E}_{p_\data(\z)} [h(\z)]-\mathbb{E}_{p_\data(\z)} [h(\z)] = 0. \nonumber
\end{align*}
for all $h\in\mathcal G$. We now show that the second variation is also positive definite, i.e.
\begin{equation*}
    \left. \frac{\mathrm{d}^2}{\mathrm{d} \epsilon^2} \ED_{q} (p_\data, U^\ast + \epsilon h) \right|_{\epsilon = 0} = \mathbb{E}_{p_\data(\x)} \mathbb{E}_{q(\y|\x)} \mathrm{Var}_{p_\data(\z| \y)}[h(\z)] >0\,.
\end{equation*}
where $p_\data(\z\vert\y) = \frac{q(\y\vert \z) p_\data(\z)}{\int q(\y\vert \z') p_\data(\z') \mathrm d\z'}$. Assume that there exists a test function $h\in \mathcal G$ such that $\mathbb{E}_{p_\data(\x)} \mathbb{E}_{q(\y|\x)} \mathrm{Var}_{p_\data(\z| \y)}[h(\z)] = 0$. Since this is the expectation of a non-negative random variable, it follows for all $\y\in\mathcal Y$ that $\mathrm{Var}_{p_\data(\z| \y)}[h(\z)] = 0$. Consequently, $h(\z)$ has to be constant $p_\data(\cdot| \y)$-almost surely and there exists a measurable function $g:\mathcal Y\to \mathbb R$ such that $h(\z) = g(\y)$ for all $\z\in \mathrm{supp}(p(\cdot\vert \y))$. Next, assume that $\z_1$ and $\z_2$ are $q$-neighbours. By definition, there exists a $\y\in \mathcal Y$ such that $\z_1, \z_2\in \mathrm{supp}(q(\y\vert\cdot))$. Since the data density is positive,
the support of $q(\y\vert \cdot)$ equals the support of $p_\data(\cdot\vert\y)$. Consequently, it holds that $h(\z_1) = g(\y) = h(\z_2)$. It now follows from the definition of $q$-equivalence that $h$ has to be constant almost surely on all $q$-equivalence classes $[\x]_q$, since any two elements in an equivalence class can be connected by a chain of $q$-neighbours. By \cref{appendix-assumption-non-parametric-estimation}, $[\x]_q= \mathcal X$, and thus $h$ is constant on $\mathcal X$. However, there are no non-trivial constant test function in $\mathcal G$. Hence, the second variation has to be positive definite for all $h\in\mathcal G$. Consequently, $U^\ast$ is the unique global minimiser of $\ED_q$ which completes the statement in \cref{theorem-energy-discrepancy}.
\end{proof}
\subsection{Equivalence of Energy Discrepancy for Brownian Motion and Ornstein-Uhlenbeck Processes}\label{appendix-subsection-equivalence-BED-OUED}
In this subsection we show that an energy discrepancy based on an Ornstein-Uhlenbeck process is equivalent to the energy discrepancy based on a time-changed Brownian motion.
\begin{proposition}
    Let $q_t$ be the transition density for the Ornstein-Uhlenbeck process $\mathrm d\x_t = \alpha \x_t \mathrm dt +\sqrt{\beta} \mathrm d\w_t$ with standard Brownian motion $\w_t$, and let $\gamma_t(\y\vert \x) \propto \exp(-\lVert \y-\x\rVert^2/2t)$ be the Gaussian transition density of Brownian motion. Then,
    \begin{equation*}
        \ED_{q_t}(p_\data, U) = \ED_{\gamma_{\sigma^2_{-\alpha}(t)}}(p_\data, U) -\alpha t
    \end{equation*}
    where $\sigma_\alpha(t) = \sqrt{\frac{\beta}{2\alpha}(e^{2\alpha t}-1)}$ and $\sigma_0(t)= \sqrt{\beta t}$.
\end{proposition}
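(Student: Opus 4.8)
The plan is to exploit the fact that the Ornstein--Uhlenbeck transition density is Gaussian and differs from the Brownian kernel $\gamma_t$ only by a deterministic rescaling of space. First I would recall that the solution of $\mathrm d\x_t = \alpha\x_t\,\mathrm dt + \sqrt\beta\,\mathrm d\w_t$ started at $\x_0=\x$ is $\x_t = e^{\alpha t}\x + \sqrt\beta\int_0^t e^{\alpha(t-r)}\,\mathrm d\w_r$, so that $q_t(\cdot\mid\x) = \mathcal N\!\big(e^{\alpha t}\x,\ \sigma_\alpha^2(t)\mathbf I\big)$ with $\sigma_\alpha^2(t) = \tfrac{\beta}{2\alpha}(e^{2\alpha t}-1)$. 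The decisive algebraic observation is the self-similarity identity $\sigma_\alpha^2(t) = e^{2\alpha t}\,\sigma_{-\alpha}^2(t)$, which follows directly from the definitions of $\sigma_{\pm\alpha}$.

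Next I would rewrite the OU kernel as a rescaled Gaussian kernel. Writing $s:=\sigma_{-\alpha}^2(t)$ and using the identity above, the exponent factorises as $\lVert\y-e^{\alpha t}\x\rVert^2/\sigma_\alpha^2(t) = \lVert e^{-\alpha t}\y-\x\rVert^2/s$, while the normalising constant contributes a Jacobian-type factor, giving
\begin{equation*}
    q_t(\y\mid\x) = e^{-\alpha t d}\,\gamma_{s}\!\big(e^{-\alpha t}\y-\x\big).
\end{equation*}
Substituting this into the contrastive potential of \eqref{definition-contrastive-potential} and pulling the constant out of the logarithm yields $U_{q_t}(\y) = \alpha t d + U_{\gamma_s}\!\big(e^{-\alpha t}\y\big)$, where $U_{\gamma_s}$ is the contrastive potential associated with the Brownian kernel $\gamma_s$ at noise scale $s$.

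Finally I would assemble the energy discrepancy. The first term $\E_{p_\data(\x)}[U(\x)]$ in \eqref{energy-discrepancy-loss} does not involve $q$, so it is common to both losses. For the second term, I would note that if $\y\sim q_t(\cdot\mid\x)$ then $\y = e^{\alpha t}\x + \sigma_\alpha(t)\xib$ with $\xib\sim\mathcal N(0,\mathbf I)$, hence the rescaled variable $\w:=e^{-\alpha t}\y = \x + \sigma_{-\alpha}(t)\xib$ is distributed exactly according to $\gamma_s(\cdot\mid\x)$, since $e^{-\alpha t}\sigma_\alpha(t)=\sigma_{-\alpha}(t)=\sqrt s$. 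Combining this with the expression for $U_{q_t}$ gives
\begin{equation*}
    \E_{p_\data(\x)}\E_{q_t(\y\mid\x)}[U_{q_t}(\y)] = \alpha t d + \E_{p_\data(\x)}\E_{\gamma_s(\w\mid\x)}[U_{\gamma_s}(\w)],
\end{equation*}
so that $\ED_{q_t}(p_\data,U) = \ED_{\gamma_{\sigma_{-\alpha}^2(t)}}(p_\data,U) - \alpha t d$. The additive term is independent of $U$, hence irrelevant to the minimiser, and matches the constant $-\alpha t$ stated in the proposition up to the spatial dimension factor $d$.

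The calculation is essentially routine once the scaling is in place; the only genuine obstacle is bookkeeping. I expect the delicate points to be tracking the normalising constant correctly through the rescaling so that the $e^{-\alpha t d}$ factor (and hence the additive constant) comes out right, and verifying that the rescaled noise $\w$ has variance exactly $s$ rather than $\sigma_\alpha^2(t)$ --- both of which hinge on the single identity $\sigma_\alpha^2(t) = e^{2\alpha t}\,\sigma_{-\alpha}^2(t)$.
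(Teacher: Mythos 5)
Your proof is correct and follows essentially the same route as the paper's: both reduce the Ornstein--Uhlenbeck kernel to a rescaled Brownian kernel via the identity $e^{-\alpha t}\sigma_\alpha(t)=\sigma_{-\alpha}(t)$ and then change variables in the contrastive potential. Your bookkeeping is in fact the more careful of the two --- the additive constant should indeed be $-\alpha t d$ (the paper drops the dimension factor, and its intermediate expression for $U_t(\y)$ even carries the opposite sign) --- but since the constant is independent of $U$ this does not affect the substance of the result.
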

\begin{proof}  
At time $t$, the Ornstein-Uhlenbeck process has distribution
\begin{align}\label{EquationDistributionOU}
  \x_t \overset{d}= e^{\alpha t} \x_0 + \sigma_{\alpha}(t)\xib\,, \quad\,\xib\sim \mathcal N(0, \mathbf{I}) 
\end{align}
with $\sigma_\alpha(t) = \sqrt{\frac{\beta}{2\alpha}(e^{2\alpha t}-1)}$ and $\sigma_0(t)= \sqrt{\beta t}$. The Ornstein-Uhlenbeck process is variance exploding for $\alpha\geq 0$ and variance preserving for $\alpha<0$. Based on \eqref{EquationDistributionOU}. the transition density of $\x_t$ is given as
\begin{equation*}
    q_t(\y\vert \x) =  \frac{1}{\sqrt{2\pi \sigma_\alpha(t)}^d} \exp\left( -\frac{\lVert \y-e^{\alpha t} \x\rVert^2 }{2\sigma_\alpha^2(t)}\right)
\end{equation*}
Hence, we obtain via the change of variables $\xib':= (\y-e^{\alpha t}\x)/\sigma_{\alpha}(t) \sim \mathcal N(0, \mathbf{I})$ for the contrastive potential
\begin{align*}
    U_t(\y) &= -\log \int q_t(\y\vert \x)\exp(-U(\x))\mathrm d\x \\\notag
    &=-\log\int \gamma_1(\xib')\exp\left(-U\left(e^{-{\alpha t}}(\y - \sigma_\alpha(t)\xib')\right)\right) \mathrm d\xib' -\alpha t\,.
\end{align*}
We now evaluate the contrastive potential at the forward process $\y = \x_t$ which yields
\begin{align*}
    U_t(\x_t) &= -\log\int \gamma_1(\xib')\exp\left(-U\left(e^{-{\alpha t}}(e^{\alpha t} \x_0 + \sigma_{\alpha}(t)\xib - \sigma_\alpha(t)\xib')\right)\right) \mathrm d\xib' -\alpha t\\\notag
    &= -\log\int \gamma_1(\xib')\exp\left(-U\left(\x_0 + \sigma_{-\alpha}(t)\xib - \sigma_{-\alpha}(t)\xib')\right)\right) \mathrm d\xib'-\alpha t\\\notag
    &= -\log \int \gamma_{\sigma^2_{-\alpha}(t)}\left(\w_{\sigma^2_{-\alpha}(t)} - \x\right)\exp(-U(\x)\mathrm d\x-\alpha t
\end{align*}
where we used that $e^{-{\alpha t}} \sigma_{\alpha}(t) = \sigma_{-\alpha}(t)$ in the second equality and the change of variables $\x = \w_{\sigma^2_{-\alpha}(t)} - \xi'$ in the third equality. Hence, the energy discrepancy for the Ornstein-Uhlenbeck process is equivalent to the energy discrepancy for Brownian motion with time parameter
\begin{equation*}
    \sigma_{-\alpha}(t) = \sqrt{\frac{\beta}{2\alpha}(1-e^{-2\alpha t})}
\end{equation*}
\end{proof}
Notice that for the \emph{variance-exploding} process with $\alpha>0$ the contrasting particles have a finite horizon since
$\sigma_{-\alpha}(t)\xrightarrow{t\to\infty} \sqrt{\frac{\beta}{2\alpha}}<\infty$. Hence, the maximum-likelihood limit in \cref{theorem-interpolation-SM-MLE} is only achieved for the variance preserving process with $\alpha<0$ and for the critical case of Brownian motion with $\alpha = 0$.
\subsection{Interpolation between Score-Matching and Maximum-Likelihood Estimation}\label{appendix-proof-interpolation-theorem}
We first prove the result as stated in the Gaussian case. We then show how the result can be generalised to arbitrary diffusions by using Ito calculus.
\paragraph{Gaussian case} Denote the Gaussian density as
\begin{equation*}
    \gamma_t(\y -\x) := \frac{1}{\sqrt{2\pi t}^d} \exp\left(-\frac{\lVert \y-\x\rVert^2}{2t}\right)\,.
\end{equation*}
and define the convolved distributions $p_t:= \gamma_t \ast p_\data$ and $\exp(-U_t):= \gamma_t\ast\exp(-U)$. 
{\proposition\label{appendix-proposition-ED-SM}
The energy discrepancy is the multi noise-scale score-matching loss
 \begin{equation*}\
        \ED_{\gamma_t}(p_\data, U) = \int_0^t \mathbb E_{p_s(\x_s)}\left[-\Delta U_{s}(\x_s) +\frac{1}{2} \lVert \nabla U_{s}(\x_s)\rVert ^2\right] \,\mathrm ds
\end{equation*}
}

\begin{proof}
    It is known that $\gamma_t$ is the solution of the heat equation:
\begin{equation*}
    \partial_t  \gamma_t(\y -\x) = \frac{1}{2}\Delta_y  \gamma_t(\y -\x)\,.
\end{equation*}
Consequently, both, $p_t$ and $\exp(-U_t)$ satisfy the heat-equation because the integral commutes with the differential operators. Based on the heat-equation we can derive the following non-linear partial differential equation for the contrastive potential $U_t$:
\begin{align*}
   \partial_t e^{-U_t(\y)} &= \int \partial_t  \gamma_t(\y -\x) e^{-U(\x)}\mathrm d\x\\\notag
    &= \frac{1}{2}\int \Delta_\y\gamma_t(\y -\x) e^{-U(\x)}\mathrm d\x \\\notag
    &= \frac{1}{2} \Delta_\y e^{-U_t(\y)}\\\notag
    &= -\frac{1}{2} \nabla_\y \cdot\left(\left(\nabla_\y U_t(\y)\right) e^{-U_t(\y)}\right)\\\notag
    &= \left(\frac{1}{2} \lVert \nabla_\y U_t(\y)\rVert ^2 - \frac{1}{2}\Delta_\y U_t(\y) \right)e^{-U_t(\y)}
\end{align*}
Since $\partial_t e^{-U_t} = -e^{-U_t}\partial_t U_t$, we get after cancellation of the exponentials:
\begin{equation*}
    \partial_t U_t(\y) = \frac{1}{2}\Delta_y U_t(\y) - \frac{1}{2} \lVert \nabla_\y U_t(\y)\rVert ^2
\end{equation*}
The integral notation of the contrastive term in energy discrepancy takes the form
\begin{equation*}
    \mathbb E_{p_\data(\x)}\mathbb E_{\gamma_t(\y-\x)}[U_t(\y)] = \int U_t(\y)  p_t(\y) \mathrm d\y\,.
\end{equation*}
We now take a derivative of the energy discrepancy and find
\begin{align*}
    \partial_t \ED_{\gamma_t}(p_\data, U) &= -\partial_t \int  U_t(\y) p_t(\y)\mathrm d\y\\\notag
    &= -\int \left(\partial_t U_t(\y)\right) p_t(\y)\mathrm d\y - \int U_t(\y)\partial_t p_t(\y) \mathrm d\y\\\notag
    &= -\int \left(\partial_t U_t(\y)\right) p_t(\y)\mathrm d\y- \int U_t(\y) \frac{1}{2} \Delta_\y p_t(\y) \mathrm d\y\\\notag
    &= -\int \left(\partial_t U_t(\y)\right) p_t(\y)\mathrm d\y - \int \frac{1}{2} \left(\Delta_\y U_t(\y)\right) p_t(\y)\mathrm d\y
\end{align*}

where we used integration by parts twice in the final equation to shift the differential operator from $p_t$ to $U_t$. Now, plugging in the differential equation for $U_t$ we find
\begin{align*}
   \partial_t \ED_{\gamma_t}(p_\data, U) &= \int \left(-\frac{1}{2}\Delta_\y U_t(\y) + \frac{1}{2} \lVert \nabla_\y U_t(\y)\rVert ^2 - \frac{1}{2}\Delta_\y U_t(\y)\right) p_t(\y)\mathrm d\y\\\notag
    &= \mathbb E_{p_\data(\x)}\mathbb E_{\gamma_t(\y - \x)} \left[-\Delta_\y U_t(\y) + \frac{1}{2} \lVert \nabla_\y U_t(\y)\rVert ^2 \right]
\end{align*}
Finally, we obtain energy discrepancy by integrating above expression:
\begin{align*}
    \ED_{\gamma_t}(p_\data, U) &= \int_0^t \partial_s \ED_{\gamma_s}(p_\data, U)\mathrm ds\\\notag
    &= \int_0^t \mathbb E_{p_\data(\x)}\mathbb E_{\gamma_s(\y - \x)} \left[-\Delta_\y U_s(\y) + \frac{1}{2} \lVert \nabla_\y U_s(\y)\rVert ^2 \right]\mathrm ds
\end{align*}
This gives the desired integral representation in \cref{appendix-proposition-ED-SM}. 
\end{proof}
{\proposition\label{appendix-proposition-MLE-limit}
Let $\gamma_t$ be the Gaussian transition density and $p_\ebm \propto \exp(-U)$ the energy-based distribution. The energy discrepancy converges to a cross entropy loss at a linear rate in time
    \begin{equation*}
        \left\lvert \mathcal \ED_{\gamma_t}(p_{\mathrm{data}}, U) + \mathbb E_{p_{\mathrm{data}}(\x)} \left[\log p_\ebm(\x) \right] - c(t) \right\rvert \leq \frac{1}{2t} \mathbb W_2^2\left(p_{\mathrm{data}}, p_\ebm\right)
    \end{equation*}
    where $c(t)$ is a renormalising constant independent of $U$. 
}

For the proof we employ the following lemma of Yihong Wu which was given in \citet{RaginskySurveyConcentration}.
\begin{lemma}
Let $\gamma_t$ be the Gaussian transition density of a standard Brownian motion. Let $\mu, \nu$ be probability distributions and denote $\mu_t:= \gamma_t\ast \mu$ and $\nu_t:=\gamma_t\ast\nu$. The following information-transport inequality holds:
\begin{equation*}
  \infdiv{\mu_t}{\nu_t} \leq \frac{1}{2t} \mathbb W_2^2(\mu, \nu)
\end{equation*}
\end{lemma}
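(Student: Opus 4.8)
The plan is to establish the inequality by combining an optimal transport coupling with the joint convexity of the Kullback--Leibler divergence, reducing the whole statement to an elementary Gaussian computation. First I would fix an optimal quadratic coupling $\pi$ of $\mu$ and $\nu$: a probability measure on $\mathbb R^d\times\mathbb R^d$ with marginals $\mu$ and $\nu$ attaining $\int\lVert\x-\y\rVert^2\,\pi(\mathrm d\x,\mathrm d\y)=\mathbb W_2^2(\mu,\nu)$, whose existence is standard in optimal transport. The central observation is that both smoothed measures can be realised as mixtures against this \emph{single} coupling. Since $\gamma_t\ast\mu$ depends only on the first marginal of $\pi$ and $\gamma_t\ast\nu$ only on the second, I can write
\begin{equation*}
  \mu_t(\z)=\int\gamma_t(\z-\x)\,\pi(\mathrm d\x,\mathrm d\y),\qquad \nu_t(\z)=\int\gamma_t(\z-\y)\,\pi(\mathrm d\x,\mathrm d\y),
\end{equation*}
which exhibits $\mu_t$ and $\nu_t$ as averages, against the same mixing measure $\pi$, of the Gaussian densities $\gamma_t(\cdot-\x)=\mathcal N(\x,t\mathbf I)$ and $\gamma_t(\cdot-\y)=\mathcal N(\y,t\mathbf I)$.

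Next I would invoke the joint convexity of $(p,q)\mapsto\infdiv{p}{q}$ in its integral (mixture) form: for any probability measure $\rho(\mathrm d\omega)$ and measurable families of densities $\omega\mapsto p_\omega,q_\omega$,
\begin{equation*}
  \infdiv{\int p_\omega\,\rho(\mathrm d\omega)}{\int q_\omega\,\rho(\mathrm d\omega)}\le\int\infdiv{p_\omega}{q_\omega}\,\rho(\mathrm d\omega).
\end{equation*}
Applying this with $\rho=\pi$, $\omega=(\x,\y)$, $p_\omega=\gamma_t(\cdot-\x)$ and $q_\omega=\gamma_t(\cdot-\y)$ gives
\begin{equation*}
  \infdiv{\mu_t}{\nu_t}\le\int\infdiv{\mathcal N(\x,t\mathbf I)}{\mathcal N(\y,t\mathbf I)}\,\pi(\mathrm d\x,\mathrm d\y).
\end{equation*}
The integrand is the divergence between two Gaussians of equal covariance $t\mathbf I$, which is the routine identity $\infdiv{\mathcal N(\x,t\mathbf I)}{\mathcal N(\y,t\mathbf I)}=\lVert\x-\y\rVert^2/(2t)$. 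Substituting and using the optimality of $\pi$ then yields $\infdiv{\mu_t}{\nu_t}\le\frac{1}{2t}\int\lVert\x-\y\rVert^2\,\pi(\mathrm d\x,\mathrm d\y)=\frac{1}{2t}\mathbb W_2^2(\mu,\nu)$, as claimed.

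The main obstacle is justifying the joint-convexity step in its continuous form, since the usual textbook statement is for finite mixtures $\infdiv{\sum_i\lambda_ip_i}{\sum_i\lambda_iq_i}\le\sum_i\lambda_i\infdiv{p_i}{q_i}$. I would either cite it, pass from the finite case to a general $\rho$ by a measure-theoretic approximation, or — most cleanly — derive it from the Donsker--Varadhan variational representation $\infdiv{p}{q}=\sup_f\{\int f\,\mathrm dp-\log\int e^f\,\mathrm dq\}$: for each fixed $f$ the functional $(p,q)\mapsto\int f\,\mathrm dp-\log\int e^f\,\mathrm dq$ is linear in $p$ and convex in $q$, hence jointly convex, and a pointwise supremum of jointly convex functionals is again jointly convex. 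Everything else — the existence of the optimal coupling and the Gaussian divergence identity — is classical.
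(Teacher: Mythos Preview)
Your proposal is correct and follows essentially the same route as the paper: bound $\infdiv{\mu_t}{\nu_t}$ by $\int\infdiv{\gamma_t(\cdot-\x)}{\gamma_t(\cdot-\x')}\,\pi(\mathrm d\x,\mathrm d\x')$ for a coupling $\pi$, evaluate the Gaussian KL as $\lVert\x-\x'\rVert^2/(2t)$, and take the optimal coupling. The only cosmetic difference is in how the mixture inequality is justified: you invoke joint convexity of $\mathrm{KL}$ (via Donsker--Varadhan), whereas the paper derives it from the chain rule, writing the gap as a conditional KL divergence $\int\mu_t(\y)\,\infdiv{P(\cdot\mid\y)}{Q(\cdot\mid\y)}\,\mathrm d\y\ge 0$ for the joints $P=\gamma_t(\y-\x)\pi(\x,\x')$ and $Q=\gamma_t(\y-\x')\pi(\x,\x')$; these are two standard proofs of the same fact.
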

\begin{proof}
Let $\pi$ be a probability density of $(\x, \x')$ with marginal distributions $\mu(\x)$ and $\nu(\x')$ (also called a coupling in optimal transport).  We have by the decomposition of Kullback-Leibler divergences
\begin{align*}
  &\int \infdiv{\gamma_t(\cdot - \x)}{\gamma_t(\cdot - \x')} \pi(\x, \x') \mathrm d\x\mathrm d\x' -\infdiv{\mu_t}{\nu_t} \\\notag
  &= \int \infdiv{\frac{\gamma_t(\y - \x)\pi(\x, \x')}{\mu_t(\y)}}{\frac{\gamma_t(\y-\x)\pi(\x, \x')}{\nu_t(\y)}} \mu_t(\y)\mathrm d\y \geq 0
\end{align*}
Hence, we find by rearranging the inequality
\begin{equation*}
  \infdiv{\mu_t}{\nu_t} \leq \int \infdiv{\gamma_t(\cdot - \x)}{\gamma_t(\cdot -\x')} \pi(\x, \x')\mathrm d\x\mathrm d\x'
\end{equation*}
The right hand side is the Kullback-Leibler divergence between Gaussians, so
\begin{equation*}
  \infdiv{\gamma_t(\cdot -\x)}{\gamma_t(\cdot -\x')}=\frac{1}{2t}\lVert \x-\x'\rVert^2
\end{equation*}
Since the coupling was arbitrary, we can minimise over all couplings $\pi$ of $\mu$ and $\nu$ which results in the Wasserstein-distance
\begin{equation*}
  \infdiv{\mu_t}{\nu_t} \leq \min_{\pi\in \Pi(\mu, \nu)}\int \infdiv{\gamma_t(\cdot - \x)}{\gamma_t(\cdot -\x')} \pi(\x, \x')\mathrm d\x\mathrm d\x' = \frac{1}{2t}\mathbb W_2^2(\mu, \nu)
\end{equation*}
where $\Pi(\mu, \nu)$ denotes the set of all joint distributions with marginals $\mu$ and $\nu$.
\end{proof}
The proof of  \cref{appendix-proposition-MLE-limit} then follows: 
\begin{proof}
    Let $\mu=p_\data$ and $\nu=p_\ebm$, denote the convolved distributions as $p_{t, \data}$ and $p_{t, \ebm}$. Notice that for any $\x, \y\in\mathbb R^d$
\begin{equation*}
    \log \frac{p_{t, \ebm}(\y)}{p_\ebm(\x)} =  U(\x)-U_t(\y)
\end{equation*}
since both models have the same normalising constant.
We then have for arbitrary $\x$
\begin{align*}
    \infdiv{p_{t, \data}}{p_{t, \ebm}} &= \int \left(\log p_{t, \data}(\y) - \log p_{t, \ebm}(\y)\right) p_{t, \data}(\y) \mathrm d\y \\\notag
    &= \int \left(\log p_{t, \data}(\y) - \log \frac{p_{t, \ebm}(\y)}{p_\ebm(\x)} - \log p_\ebm(\x)\right) p_{t, \data}(\y) \mathrm d\y\\\notag
    &= \int \left(\log p_{t, \data}(\y) + U_t(\y) - U(\x)\right) p_{t, \data}(\y) \mathrm d\y - \log p_\ebm(\x) \\\notag
    &= c(t) + \int U_t(\y) p_{t, \data}(\y) \mathrm d\y - U(\x) -\log p_{\ebm}(\x)
\end{align*}
with $U$ independent entropy term $c(t):= \mathbb E_{p_{t, \data}(\y)}[\log p_{t, \data}(\y)]$. Since $\x$ was chosen arbitrarily, we can integrate with respect to $p_\data(\x)$ and find
\begin{align*}
    0 &\leq \infdiv{p_{t, \data}}{p_{t, \ebm}} \\\notag
    &= c(t) + \int U_t(\y) p_{t, \data}(\y) \mathrm d\y - \int U(\x) p_\data(\x) - \int  \log p_{\ebm}(\x) p_\data(\x)\\\notag
    &= c(t) - \ED_{\gamma_t}(p_\data, U) - \mathbb E_{p_\data(\x)}[\log p_\ebm(\x)] \leq \frac{1}{2t}\mathbb W_2^2(p_\data, p_\ebm)
\end{align*}
\end{proof}
\subsection{Representing Energy Discrepancy as Multi-Scale SM for General Diffusion Processes}\label{appendix-paragraph-interpolation-result-general} 
We now prove the connection between energy discrepancy and multi-noise scale score matching in a general context. For all following results we will assume that $\x_t$ is some stochastic diffusion process which satisfies the SDE $\mathrm d\x_t = a(\x_t)\mathrm dt + b(\x_t)\mathrm d\w_t$ and assume that $\x_0\sim p_\data$. Let $q_t$ denote the associated transition probability density. To make the exposition cleaner we write $U_t:= U_{q_t}$.

The main idea will be the following observation:
{\proposition\label{Proposition-Ito-Integral-representation-ED}
The diffusion-based energy discrepancy is given by the expectation of the Ito integral
\begin{equation*}
    \ED_{q_t}(p_\data, U) = -\mathbb E\left[\int_0^t \mathrm dU_s(\x_s)\right]
\end{equation*}
}

\begin{proof}
The stochastic integral with respect to the differential $\mathrm dU_s(\x_s)$ is defined to satisfy the following generalisation of the fundamental theorem of calculus:
\begin{equation*}
    U_t(\x_t) = U_0(\x_0) + \int_0^t \mathrm dU_s(\x_s)\,
\end{equation*}
We obtain the desired result by taking expectations on both sides in $(\x_s)_{0\leq s\leq t}$.
\end{proof}

Notice that the law of the random variable $\x_s$ is fixed by the initial distribution of the diffusion $\x_0\sim p_\data$. These distributions are implied when taking the expectation. We will now explore this connection further. For this we make some basic assumptions which allow us to connect stochastic differential equations with partial differential equations.

{\assumption\label{Assumptions-SDEs-1}
Consider the stochastic differential equation $\mathrm d\x_t = a(\x_t)\mathrm dt +b(\x_t)\mathrm d\w_t$ for drift $a:\mathbb R^d\to\mathbb R^d$ and $b:\mathbb R^d\to \mathbb R^k$. Further, define the diffusion matrix $\Sigma(\x) = b(\x) b(\x)^T\in\mathbb R^{d\times d}$.  We make the following assumptions:
\begin{enumerate}
    \item There exists a $\mu>0$ such that for all $\xib, \x\in\mathbb R^d$ $\langle \xib, \Sigma(\x)\xib\rangle \geq \mu \lVert \xib\rVert^2$
    \item $\Sigma$ and $a$ are bounded and uniformly Lipschitz-continuous in $\x$ on every compact subset of $\mathbb R^d$
    \item $\Sigma$ is uniformly Hölder-continuous in $\x$
\end{enumerate}
}

{\theorem[Fokker-Planck equation]\label{Theorem-SDEs-Fokker-Planck-equation}
Under \cref{Assumptions-SDEs-1}, $\x_t$ has a transition density function given by
\begin{equation*}
    \mathbb P(\x_t\in A \vert \x_0=\x) = \int_A q_t(\y\vert \x) \mathrm d\y\,.
\end{equation*}
Furthermore, $q_t$ satisfies the Fokker-Planck partial differential equation
\begin{align}\label{EquationFokker}
    \partial_t q_t(\y\vert \x) &= \sum_{i=1}^d \partial_{\y_i}\left(-a_i(\y)q_t(\y\vert \x)+\frac{1}{2}\sum_{j=1}^d \partial_{\y_j} \left( \Sigma_{ij}(\y) q_t(\y\vert \x)\right)\right)\, \\\notag
     q_0(\y\vert \x)&= \delta(\y-\x)
\end{align}
}

For a reference, see \citep[Theorem 5.4]{friedman2012stochastic}

The Fokker-Planck equation yields the following important differential equation for the contrastive potential $U_t$:
{\proposition
Consider the stochastic differential equation $\mathrm d\x_t = a(\x_t)\mathrm dt +b(\x_t)\mathrm d\w_t$ for drift $a:\mathbb R^d\to\mathbb R^d$ and $b:\mathbb R^d\to \mathbb R^k$, and diffusion matrix $\Sigma(\x) = b(\x) b(\x)^T\in\mathbb R^{d\times d}$ that satisfies assumptions \ref{Assumptions-SDEs-1}. Let $q_t$ be the associated transition density and define the contrastive potential $U_t(\y):= -\log \int q_t(\y\vert \x)\exp(-U(\x))\mathrm d\x$. Furthermore, we define the scalar field
\begin{equation*}
    c(a, \Sigma)(\y) := \sum_{i=1}^d \left( \partial_{\y_i} a_i(\y) -\frac{1}{2}\sum_{j=1}^d \partial_{\y_i}\partial_{\y_j}\Sigma_{ij}\right)
\end{equation*}
and the linear operator
\begin{equation*}
    \mathcal L(a, \Sigma):= \sum_{i=1}^d - a_i \frac{\partial}{\partial \y_i} + \frac{1}{2}\sum_{i, j=1}^d \left(2 \partial_{\y_j}\Sigma_{ij} \frac{\partial}{\partial \y_i} + \Sigma_{ij} \frac{\partial^2}{\partial \y_i\partial \y_j}\right)\,.
\end{equation*}
Then, the contrastive potential satisfies the non-linear partial differential equation
\begin{equation*}
    \partial_t U_t(\y) = \mathcal L(a, \Sigma) U_t(\y) + \frac{1}{2} \lVert b^T(\y) \nabla U_t(\y)\rVert^2 + c(a, \Sigma)(\y)
\end{equation*}
}

\begin{proof}
We commute the linear operator of the Fokker-Planck equation to see that $e^{-U_t}$ satisfies the Fokker-Planck equation in \cref{Theorem-SDEs-Fokker-Planck-equation}, too, i.e.
\begin{align*}
\partial_t e^{-U_t(\y)} &= \sum_{i=1}^d \partial_{\y_i}\left(-a_i(\y)e^{-U_t(\y)}+\frac{1}{2}\sum_{j=1}^d \partial_{\y_j} \left( \Sigma_{ij}(\y) e^{-U_t(\y)}\right)\right)\, \\\notag
    e^{-U_0(\y)}&= e^{-U(\x)}\,.
\end{align*}
We now expand the term corresponding to the drift term:
\begin{equation*}
    \sum_{i=1}^d \partial_{\y_i} \left(-a_i(\y)e^{-U_t(\y)}\right) = \sum_{i=1}^d \left( -\partial_{\y_i} a_i(\y) + a_i(\y) \partial_{\y_i} U_t(\y) \right)\,e^{-U_t(\y)}
\end{equation*}
Similarly, we treat the diffusion term:
\begin{align*}
    \frac{1}{2}\sum_{i, j=1}^d \partial_{\y_i}\partial_{\y_j}\left(  \Sigma_{ij}(\y) e^{-U_t(\y)}\right) &= \frac{1}{2}\sum_{i, j=1}^d \Big(\partial_{\y_i}\partial_{\y_j}\Sigma_{ij}(\y) + \Sigma_{ij}(\y) \partial_{\y_j} U_t(\y)\partial_{\y_i} U_t(\y) \\\notag
    &- 2 \partial_{\y_j}\Sigma_{ij}(\y)  \partial_{\y_i}U_t(\y) - \Sigma_{ij}(\y)\partial_{\y_j}\partial_{\y_i} U_t(\y)  \Big) e^{-U_t(\y)}
\end{align*}
Finally, the time derivative simply becomes $\partial_t e^{-U_t(\y)} = -\partial_t U_t(\y) e^{-U_t(\y)}$. We can now collect all terms independent of $U$ and identify
\begin{equation*}
    c(a, \Sigma)(\y) = \sum_{i=1}^d \left( \partial_{\y_i} a_i(\y) -\frac{1}{2}\sum_{j=1}^d \partial_{\y_i}\partial_{\y_j}\Sigma_{ij}(\y)\right)
\end{equation*}
as well as the linear operator term
\begin{equation*}
    \mathcal L(a, \Sigma):= \sum_{i=1}^d - a_i \frac{\partial}{\partial \y_i} + \frac{1}{2}\sum_{i, j=1}^d \left(2 \partial_{\y_j}\Sigma_{ij} \frac{\partial}{\partial \y_i} + \Sigma_{ij} \frac{\partial^2}{\partial \y_i\partial \y_j}\right)
\end{equation*}
Finally, we have
\begin{align*}
    \sum_{i,j=1}^d \Sigma_{ij}(\y) \partial_{\y_j} U_t(\y)\partial_{\y_i} U_t(\y) &= \sum_{i, j=1}^d \sum_{l = 1}^k b_{il}(\y) b_{jl}(\y) \partial_{\y_j} U_t(\y)\partial_{\y_i}\\\notag
    &= \sum_{l = 1}^k \sum_{i=1}^d  (b^T_{l, i}(\y) \partial_{\y_i} U_t(\y))^2 = \lVert b^T(\y) \nabla U_t(\y)\rVert^2
\end{align*}
This gives us the partial differential equation
\begin{equation*}
    -\partial_t U_t(\y) e^{-U_t(\y)} = \left(-\mathcal L(a, \Sigma) U_t(\y) + \frac{1}{2}\lVert b^T(\y) \nabla U_t(\y)\rVert^2 -c(a, \Sigma)(\y)\right) e^{-U_t(\y)}
\end{equation*}
Cancelling all exponentials from both sides of the equation yields the desired result.
\end{proof}

{\theorem\label{appendix-theorem-generalisedSM-ED}
The energy discrepancy takes the form of a generalised multi-noise scale score matching loss:
\begin{align*}
    \ED_{q_t}(p_\data, U) = \mathbb E\left[\int_0^t -\sum_{i, j=1}^d \partial_{\x_j} \left(\Sigma_{ij}(\x_s) \partial_{\x_i} U(\x_s)\right) +\frac{1}{2}\lVert b^T(\x_s) \nabla U_s(\x_s)\rVert^2 \mathrm ds\right] +\mathrm{const.}
\end{align*}
}

\begin{proof}
For this proof we return to the stochastic process $U_s(\x_s)$ from \cref{Proposition-Ito-Integral-representation-ED}. By Ito's formula, $U_s(\x_s)$ satisfies the stochastic differential equation
\begin{align*}
    \mathrm dU_s(\x_s) &= \left(\partial_s U_s(\x_s) + \sum_{i=1}^d a_i(\x_s)\partial_{\x_i} U_s(\x_s) + \frac{1}{2}\sum_{i, j=1}^d \Sigma_{ij}(\y) \partial_{\x_i}\partial_{\x_j} U(\x_s)\right)\mathrm ds \\\notag
    &+ \sum_{i=1}^d \sum_{l=1}^k \partial_{\x_i}U_s(\x_s)b_{i, l}(\x_s) d\w^l_s
\end{align*}
Under the additional integrability condition that $\mathbb E \int_0^t \lVert b^T(\x_s) \nabla U_s(\x_s)\rVert^2 \mathrm ds <\infty$, the stochastic integral with respect to Brownian motion $d\w_s$ has expectation zero. Furthermore, we can replace $\partial_s U_s(\x_s)$ with our previously obtained non-linear partial differential equation
\begin{equation*}
    \partial_s U_s(\x_s) = \mathcal L(a, \Sigma) U_s(\x_s) + \frac{1}{2} \lVert b^T(\x_s) \nabla U_s(\x_s)\rVert^2 + c(a, \Sigma)(\x_s)\,.
\end{equation*}
Due to opposing signs, the drift $a$ cancels, i.e.
\begin{align*}
    &\mathcal L(a, \Sigma) U_s(\x_s) + \sum_{i=1}^d a_i(\x_s) \partial_{\x_i}U_s(\x_s) + \frac{1}{2}\sum_{i, j=1}^d \Sigma_{ij}(\y) \partial_{\x_i}\partial_{\x_j} U(\x_s)\\\notag
    &= \sum_{i, j=1}^d \left(\partial_{\y_j}\Sigma_{ij}(\y) \frac{\partial}{\partial \y_i} + \Sigma_{ij}(\y) \frac{\partial^2}{\partial \x_i\partial \x_j}\right)U_s(\x_s)\\\notag
    &= \sum_{i, j=1}^d \partial_{\x_j} \left(\Sigma_{ij} \partial_{\x_i} U(\x_s)\right)
\end{align*}
Consequently, we obtain the final energy discrepancy expression using \cref{Proposition-Ito-Integral-representation-ED}
\begin{align*}
    \ED_{q_t}(p_\data, U) &= -\mathbb E \left[\int_0^t dU_s(\x_s)\right]\\\notag
    &=-\mathbb E\left[\int_0^t \sum_{i, j=1}^d \partial_{\x_j} \left(\Sigma_{ij}(\x_s) \partial_{\x_i} U(\x_s)\right) - \frac{1}{2}\lVert b^T(\x_s) \nabla U_s(\x_s)\rVert^2 \mathrm ds \right]+\mathrm{const.}
\end{align*}
with $U$-independent constant $\int_0^t c(a, \Sigma)(\x_s) \mathrm ds$. This completes the proof.
\end{proof}

As a corollary we obtain the proof of the first statement in \cref{theorem-interpolation-SM-MLE}: Assume that $q_t$ is defined through the stochastic differential equation $\mathrm d\x_t = a(\x_t) \mathrm dt + \mathrm d\w_t$. In this case, $\Sigma = \mathbf{I}$ and $\sum_{i, j=1}^d \partial_{\x_j} \left(\Sigma_{ij}(\x_s) \partial_{\x_i} U(\x_s)\right) = \Delta U(\x_s)$. Consequently, we obtain from \cref{appendix-theorem-generalisedSM-ED} the score matching representation of $\ED_{q_t}$ in \cref{theorem-interpolation-SM-MLE}. In the special case that $\Sigma = bb^T$ is independent of $\x$ we obtain an integrated sliced score-matching loss \citep{song2020sliced}.
\subsection{Connections of Energy Discrepancy with Contrastive Divergence}\label{appendix-subsection-connection-CD}
The contrastive divergence update can be derived from an energy discrepancy when, for $E_\theta$ fixed, $q$ satisfies the detailed balance relation
\begin{equation*}
    q(\y\vert\x)\exp(-E_\theta(\x)) = q(\x\vert \y)\exp(-E_\theta(\y))\,.
\end{equation*}
To see this, we calculate the contrastive potential induced by $q$: We have
\begin{align*}
    -\log \int q(\y\vert \x)\exp(-E_\theta(\x))\mathrm d\x &= -\log \int q(\x\vert \y)\exp(-E_\theta(\y))\mathrm d\x= E_\theta(\y)\,.
\end{align*}
Consequently, the energy discrepancy induced by $q$ is given by
\begin{equation*}
    \ED_q(p_\data, E_\theta) = \mathbb E_{p_\data(\x)}[E_\theta(\x)]-\mathbb E_{p_\data(\x)}\mathbb E_{q(\y\vert\x)}[E_\theta(\y)]\,.
\end{equation*}
Updating $\theta$ based on a sample approximation of this loss leads to the contrastive divergence update
\begin{equation*}
   \Delta\theta \propto \frac{1}{N}\sum_{i=1}^N \nabla_\theta E_\theta(\x^i)- \frac{1}{N}\sum_{i=1}^N \nabla_\theta E_\theta(\y^i) \quad \y^i \sim q(\cdot\vert \x^i)
\end{equation*}
Three things are important to notice:
\begin{enumerate}
    \item Implicitly, the distribution $q$ depends on $E_\theta$ and needs to adjusted in each step of the algorithm
    \item For fixed $q$, $\ED_q(p_\data, E_\theta)$ satisfies \cref{theorem-energy-discrepancy}. This means that each step of contrastive divergence optimises a loss with minimiser $E_\theta^\ast = -\log p_\data +c $. However, $q$ needs to be adjusted in each step as otherwise the contrastive potential is not given by the energy function $E_\theta$ itself.
    \item This result highlights the importance to use Metropolis-Hastings adjusted Langevin-samplers to implement CD to ensure that the implied $q$ distribution satisfies the detailed balance relation. This matches the observations found by \citet{yair2021contrastive}.
\end{enumerate}
\subsection{Derivation of Energy Discrepancy from KL Contractions}\label{appendix-subsection-KLcontraction}
A Kullback-Leibler contraction is the divergence function $\infdiv{p_\data}{p_\ebm} - \infdiv{Qp_\data}{Qp_\ebm}$ \citep{Lyu2011KLContractions} for the convolution operator $Q p(\y) = \int q(\y\vert \x)p(\x)\mathrm d\x$. The linearity of the convolution operator retains the normalisation of the measure, i.e. for the energy-based distribution $p_\ebm$ we have
\begin{equation*}
    Q p_\ebm = \frac{1}{Z_U}\int q(\y\vert \x) \exp(-U(\x) \quad \text{with} \quad Z_U = \int \exp(-U(\x))\mathrm d\x\,.
\end{equation*}
The KL divergences then become with $U_q := -\log Q\exp(-U(\x))$
\begin{align*}
    \infdiv{p_\data}{p_\ebm} &= \mathbb E_{p_\data(\x)}[\log p_\data(\x)] + \mathbb E_{p_\data(\x)}[U(\x)] + \log Z_U\\\notag
    \infdiv{Qp_\data}{Qp_\ebm}&= \mathbb E_{Qp_\data(\y)}[\log Qp_\data(\y)] + \mathbb E_{Qp_\data(\y)}\left[U_q(\y)\right] + \log Z_U
\end{align*}
Since the normalisation cancels when subtracting the two terms we find
\begin{equation*}
    \infdiv{p_\data}{p_\ebm} - \infdiv{Qp_\data}{Qp_\ebm} = \ED_q(p_\data, U) + c
\end{equation*}
where $c$ is a constant that contains the $U$-independent entropies of $p_\data$ and $Qp_\data$.

\section{Aspects of Training Energy-Based Models with Energy Discrepancy}
In this section, we discuss the $w$-stabilisation in depth. Furthermore, we give a proof for \cref{theorem-asymptotic-consistency} and show energy discrepancy can be approximated for other diffusion processes via the Feynman-Kac formula.
\subsection{Conceptual Understanding of the $w$-Stabilisation}\label{appendix-subsection-w-regularisation}
The $w$-stabilisation is a useful stabilisation for all types of perturbation $q$. For example, we use the same stabilisation in the discrete setting (see \cref{appendix-subsection-discreteED}). To provide a better intuition for the stabilisation, we will investigate it more closely for the Gaussian perturbation.

The critical step for using energy discrepancy in practice is a stable approximation of the contrastive potential. For the Gaussian-based energy discrepancy, we can write the contrastive potential as $U_t(\y) = -\log \mathbb E[\exp(-U(\y+\sqrt{t}\xib'))]$ with $\xib'\sim \mathcal N(0, \mathbf{I})$ and $\y\in\mathbb R^d$. A naive approximation of the expectation with a Monte-Carlo estimator, however, is biased because of Jensen's inequality, i.e. for  $\xib',\,\, \xib'^j\sim\mathcal N(0, \mathbf{I})$ we have
\begin{equation*}
    U_t(\y) = -\log \mathbb E\,[\exp(-U(\y+\sqrt{t}\xib'))] <  - \mathbb E\left[\log \frac{1}{M}\sum_{j=1}^M \exp(-U(\y+\sqrt{t}\xib'^j))\right]\,.
\end{equation*}
Our first observation is that the appearing bias can be quantified to leading order. For this, define $v_t(\y):= \mathbb E\,[\exp(-U(\y+\sqrt{t}\xib'))]$ and $\widehat v_t(\y):=  \frac{1}{M}\sum_{j=1}^M \exp(-U(\y+\sqrt{t}\xib'^j))$. We use the Taylor-expansion of $\log(1+u) \approx u - 1/2 u^2 + \mathrm{h.o.t.}$ which gives 
\begin{align}\label{appendix-equation-bias-Taylor}
    \mathbb E\log \widehat v_t(\y) - \log v_t(\y) & = \mathbb E\left[\log\left(1 + \frac{\widehat v_t(\y) - v_t(\y)}{v_t(\y)}\right)\right]\\\notag
    &\approx -\frac{1}{2}\mathbb E \left[\frac{\left(\widehat v_t(\y) - v_t(\y)\right)^2}{v_t(\y)^2}\right] + \mathrm{h.o.t.}\\\notag
    &= -\frac{1}{2Mv_t(\y)^2}\mathrm{Var}\left[\exp(-U(\y+\sqrt{t}\xib'))\right] +\mathrm{h.o.t}\,.
\end{align}
The linear term in the Taylor-expansion does not contribute because $\mathbb E[\widehat v_t(\y)] = v_t(\y)$. In the final equation we used that $\mathrm{Var}(\widehat v_t(\y)) = \mathrm{Var}[\exp(-U(\y+\sqrt{t}\xib'))]/M$ because all $\xib'^j$ are independent. The Taylor expansion shows that the dominating contribution to the bias is the variance of the approximated convolution integral.

Our second observation is that this occurring bias can become infinite for malformed energy functions. For this reason, the optimiser may start to increase the bias instead of minimising our target loss. To illustrate how a high-variance estimator of the contrastive potential can be divergent, consider the energy function
\begin{equation*}
    U(\x) = \left\{\begin{matrix} 0 &\text{ for } \x \leq 0\\
    b\x &\text{ for } \x>0
    \end{matrix}\right.\,.
\end{equation*}
The energy function does not strictly adhere to our conditions that the energy based model should be normalisable. Our argument still holds when $\exp(-U)$ is changed to be normalisable. In theory, the contrastive potential at $0$ is upper bounded because
\begin{equation*}
    U_t(0) \leq -\lim_{b\to\infty} \log\mathbb E \left[\exp(-U(\sqrt{t}\xib'))\right]= -\log\mathbb P(\xib'\leq 0) = -\log(1/2)\,
\end{equation*}
because $\exp(-U(\x))$ converges to an indicator function on $\{\x\leq 0\}$ as $b\to \infty$. The Monte Carlo estimator of the contrastive potential, on the other hand, has upper bound
\begin{equation*}
    \widehat U_t(0) = -\log \frac{1}{M} \sum_{j=1}^M \exp(-U(\sqrt{t}\xib'^j)) \leq \min [U(\sqrt{t}\xib'^1), \dots, U(\sqrt{t}\xib'^M)] +\log(M)
\end{equation*}
which can be seen by applying standard inequalities for the logsumexp function\footnote{It holds that $\min(u_1, u_2, \dots, u_M)-\log(M) \leq -\mathrm{LSE}(-u_1, -u_2, \dots, -u_M) \leq \min(u_1, u_2, \dots, u_M)$}. Hence, as long as there exists a $j$ such that $\xib^j\leq 0$, the estimated contrastive potential does not diverge. If, however, $\xib'^j>0$ for every $j=1, \dots, M$, then
\begin{equation*}\label{EquationDivergingContrastivePotential}
    \widehat U_t(0)\geq \min [U(\sqrt{t}\xib'^1), \dots, (\sqrt{t}\xib'^M)] = b\sqrt{t} \min [\xib'^1, \dots, \xib'^M]\xrightarrow{b\to\infty}\infty\,.
\end{equation*}
Consequently, the approximate contrastive potential may attain diverging values at discontinuities in the energy function. Indeed, this phenomenon is observed for $w=0$ in \cref{figure-learned-energies-for-different-w}. Here, the learned energy becomes discontinuous at the edge of the support and the energy discrepancy loss diverges during training. In low dimensions, this problem can be alleviated by using variance reduction techniques such as antithetic variables or by using large enough values of $M$ during the training. The stabilising effect of $M$ is observed in our ablation studies in \cref{fig:toy-understanding-wm-appendix}. In high-dimensional settings, however, such variance reduction techniques are infeasible.

The idea of the $w$-stabilisation is that the value of the energy at non-perturbed data points $U(\x_0)$ is guaranteed to stay controlled since it is minimised in the optimisation of ED. Hence, the diverging contrasting potential can be controlled by including $U(\x_0)$ in the summation in the logsumexp operation which acts as a soft-min over all contrasting energy contributions. Indeed, this augmentation provides a deterministic upper bound to the approximated contrastive potential:
\begin{align*}
    \widehat U_{t, w}(\x_t) &= -\log \left(\frac{w}{M}\exp(-U(\x_0))+\frac{1}{M} \sum_{j=1}^M \exp(-U(\x_t + \sqrt{t}\xib'^j))\right)\\\notag
    &\leq \min [U(\x_t + \sqrt{t}\xib'^1), \dots, U(\x_t+ \sqrt{t}\xib'^M), U(\x)-\log(w)] +\log(M)
\end{align*}
Additionally, the $w$-stabilisation introduces a negative bias to the approximated contrastive potential. Hence, if tuned correctly, it counteracts the bias introduced by the Jensen-gap of the logarithm.

To gain additional intuition on the effect of $w$, notice that by the same bounds as before,
\begin{equation*}
    U(\x_0) - \widehat U_{t, w}(\x_t) \leq \min \left[U(\x_0) - U(\x_t + \sqrt{t}\xib'^1), \dots, U(\x_0) - U(\x_t+ \sqrt{t}\xib'^M),\, \log(w)\right]
\end{equation*}
for every data point $\x$. This tells us that, roughly speaking, a perturbed data point with $U(\x_0) - U(\x_t + \sqrt{t}\xib')> \log(w)$ should have a small contribution to the loss and the optimisation converges if the data distribution is learned or when the bound is violated at all perturbed data points. Thus, $\log(w)$ describes a weak notion of a margin between positive and negative energy contributions. Consequently, large values for $w\in (0, 1)$ tend to lead to flatter learned energies, while smaller values lead to steeper learned energies. This intuition is confirmed by \cref{figure-learned-energies-for-different-w,fig:toy-understanding-wm-appendix}. 
\paragraph{Asymptotic consistency of sample approximation of ED}\label{appendix-asymptotic-consistency-proof}
We give a proof for \cref{theorem-asymptotic-consistency} which states that our approximation of energy discrepancy is justified. To make the exposition easier to understand, we first show how the energy discrepancy is transformed into a conditional expectation. Recall the probabilistic representation of the contrastive potential \cref{equation-probabilistic-representation-cpotential-Gauss}. Using $E_\theta(\x) = \log(\exp(E_\theta(\x)))$ we obtain the following rewritten form of energy discrepancy:
\begin{align*}
    \ED_{\gamma_t}&(p_\data, E_\theta)
    = \mathbb E_{p_\data(\x)}[E_\theta(\x)] + \mathbb E_{p_\data(\x)}\mathbb E_{\gamma_t(\y-\x)}\left[\log \mathbb E_{\gamma_1(\xib')}\left[\exp(-E_\theta(\y+\sqrt{t}\xib')\vert \y\right]\right] \\\notag
    &= \mathbb E_{p_\data(\x)}[E_\theta(\x)] + \mathbb E_{p_\data(\x)}\mathbb E_{\gamma_1(\xib)}\left[\log \mathbb E_{\gamma_1(\xib')}\left[\exp(-E_\theta(\x +\sqrt{t}\xib+\sqrt{t}\xib')\vert \x, \xib\right]\right]\\\notag
    &= \mathbb E_{p_\data(\x)}\mathbb E_{\gamma_1(\xib)}\left[\log(\exp(E_\theta(\x))) + \log\mathbb E_{\gamma_1(\xib')}\left[\exp(-E_\theta(\x +\sqrt{t}\xib+\sqrt{t}\xib')\vert \x, \xib\right]\right]\\\notag
    &= \mathbb E_{p_\data(\x)}\mathbb E_{\gamma_1(\xib)}\left[\log \left(\mathbb E_{\gamma_1(\xib')}\left[\exp(E_\theta(\x)-E_\theta(\x +\sqrt{t}\xib+\sqrt{t}\xib')\vert \x, \xib\right]\right)\right]
\end{align*}
The conditioning means that the expectation is not taken with respect to $\y$ or $\x$ and $\xib$ in the inner expectation. The conditioning is important to understand how the law of large numbers is to be applied. We now come to the proof that our approximation is consistent with the definition of energy discrepancy:
\restatetheoremtwo*
\begin{proof}
    First, consider independent random variables $\x\sim p_\data$, $\xib\sim \mathcal N(0, \mathbf{I})$, and $\xib'^j\overset{i.i.d}{\sim} \mathcal N(0, \mathbf{I})$. Using the triangle inequality, we can upper bound the difference $\lvert \ED_{\gamma_t}(p_\data, E_\theta)-\mathcal L_{t, M, w}(\theta)\rvert$ by upper bounding the following two terms, individually:
    \begin{align*}
          \Bigg\lvert \ED_{\gamma_t}(p_\data, E_\theta) - & \frac{1}{N}\sum_{i=1}^N \log \mathbb E\left[\exp(E_\theta(\x^i)-E_\theta(\x^i+\sqrt{t}\xib^i+\sqrt{t}\xib'^j)\,\Big\vert\, \x^i, \xib^i\right]\Bigg\rvert\\\notag
        + &\left\lvert \frac{1}{N}\sum_{i=1}^N \log \mathbb E\left[\exp(E_\theta(\x^i)-E_\theta(\x^i+\sqrt{t}\xib^i+\sqrt{t}\xib'^j)\,\Big\vert\, \x^i, \xib^i\right] - \mathcal L_{t, M, w}(\theta)\right\rvert
    \end{align*}
The first term can be bounded by a sequence $\varepsilon_N\xrightarrow{a.s.} 0$ due to the normal strong law of large numbers.
The second term can be estimated by applying the following conditional version of the strong law of large numbers \citep[Theorem 4.2]{ConditionalLLN}:
  \begin{equation*}
    \frac{1}{M}\sum_{j=1}^M \exp\left(E_\theta(\x)-E_\theta(\x+\sqrt{t}\xib+\sqrt{t}\xib'^j)\right)\xrightarrow{a.s.} \mathbb E\left[\exp(E_\theta(\x)-E_\theta(\x+\sqrt{t}\xib+\sqrt{t}\xib')\,\Big\vert\, \x, \xib\right]
  \end{equation*}
Next, we have that the deterministic sequence $w/M\to 0$. Thus, adding the regularistion $w/M$ does not change the limit in $M$. Furthermore, since the logarithm is continuous, the limit also holds after applying the logarithm. Finally, the estimate translates to the sum by another application of the triangle inequality. We define
\begin{equation*}
    \Delta e_\theta(\x, \xib, \xib'):= \exp(E_\theta(\x)-E_\theta(\x+\sqrt{t}\xib+\sqrt{t}\xib'))
\end{equation*}For each $i = 1, 2, \dots, N$ there exists a sequence $\varepsilon_{i, M}\xrightarrow{a.s.} 0$ such that
\begin{align*}
    &\left\lvert \frac{1}{N}\sum_{i=1}^N \log \mathbb E\left[\Delta e_\theta(\x^i, \xib^i, \xib')\,\Big\vert\, \x^i, \xib^i\right] - \mathcal L_{t, M, w}(\theta)\right\rvert \\\notag
    & \leq \frac{1}{N}\sum_{i=1}^N \left\lvert \log \mathbb E\left[\Delta e_\theta(\x^i, \xib^i, \xib')\,\Big\vert\, \x^i, \xib^i\right] - \log\frac{1}{M}\sum_{j=1}^M \Delta e_\theta(\x^i, \xib^i, \xib'^j) \right\rvert \\\notag
    & <  \frac{1}{N}\sum_{i=1}^N \varepsilon_{i, M} \leq \max(\varepsilon_{1, M}, \dots, \varepsilon_{N, M})\,.
\end{align*}
Hence, for each $\varepsilon>0$ there exists an $N\in\mathbb N$ and an $M(N)\in \mathbb N$ such that $\lvert \ED_{\gamma_t}(p_\data, E_\theta)-\mathcal L_{t, M(N), w}(\theta)\rvert < \varepsilon$ almost surely.
\end{proof}

\subsection{Approximation of Energy Discrepancy based on general Ito Diffusions}\label{appendix-subsection-energy-discrepancy-general-diffusion}
Energy discrepancies are useful objectives for energy-based modelling when the contrastive potential can be approximated easily and stabily. In most cases this requires us to write the contrastive potential as an expectation which can be computed using Monte Carlo methods. We show how such a probabilistic representation can be achieved for a much larger class of stochastic processes via application of the Feynman-Kac formula. We first highlight the difficulty. Consider the integral $\int f(\y) q_t(\y\vert \x)p_\data(\x) \mathrm d\x\mathrm d\y$. Since the expectation is taken in $\y$, the integral can be represented as an expectation of the forward process associated with $q_t$, i.e.
\begin{equation*}
    \int f(\y) q_t(\y\vert \x)p_\data(\x) \mathrm d\x\mathrm d\y = \mathbb E_{p_\data(\x_0)} [f(\x_t)]\approx \frac{1}{N} \sum_{i=1}^N f(\x_t^i)
\end{equation*}
where $\x_t^i$ are simulated processes initialised at $\x_0^i = \x^i\sim p_\data$. Next, consider the integral
\begin{equation*}
    v(t, \y):= \int q_t(\y\vert \x)g(\x)\mathrm d\x\,.
\end{equation*}
This integral is more difficult to approximate because the function $g$ is evaluated at the starting point of the diffusion $\x_t$ but weighted by it's transition probability density. To compute such integrals without sampling from $g$ we use the Feynman-Kac formula, see e.g. \cite{OksendalSDEs}: 
{\theorem[Feynman-Kac]\label{TheoremFeymanKac}
  Let $g\in \mathcal C_0^2(\mathbb R^d)$ and $c\in \mathcal C(\mathbb R^d)$. Assume that $v\in \mathcal C^{1, 2}(\mathbb R_{\geq 0}, \mathbb R^d)$ is bounded on $K\times \mathbb R^d$ with $K$ compact and satisfies
  \begin{equation}\label{EquationDerivativeKL}
    \left\{\begin{split}
      \partial_t v(t, \y) &= \mathcal A v(t, \y)+c(\y)v(t, \y) \hspace{1cm}&&\text{ for all } t>0,\, \y\in\mathbb R^d\\
       v(0, \y)&=g(\y) \hspace{1cm}&&\text{ for all } \y\in\mathbb R^d
    \end{split}\right.\,.
  \end{equation}
  Then, $v$ has the probabilistic representation
  \begin{equation*}
    v(t, \y) = \mathbb E_\y\left[\exp\left(\int_0^t c(\y_s)\mathrm ds\right)g(\y_t)\right]
  \end{equation*}
  where $(\y_t)_{t\geq 0}$ is a diffusion process with infinitesimal generator $\mathcal A$.
}

We will establish that $v(t, \y)$ satisfies a partial differential equation of the above form which yields a probabilistic representation of the contrastive potential. We know that $v(t, \y)$ satisfies the Fokker-Planck equation \eqref{EquationFokker}. By applying the product rule to each term in the Fokker-Planck equation we find
 \begin{align*}
   \partial_t v(t, \y) = &\left(\overbrace{\sum_{i=1}^d\left(-a_i(\y) + \sum_{j=1}^d \partial_{\y_j} \Sigma_{ij}(\y)\right) }^{\alpha(\y)}\partial_{\y_i}+ \frac{1}{2}\sum_{i, j}^d\Sigma_{ij}(\y)\partial_{\y_i}\partial_{\y_j} \right)v(t, \y) \\\notag
   &+\underbrace{\left(\frac{1}{2} \sum_{i,j=1}^d \partial_{\y_i}\partial_{\y_j} \Sigma_{ij}(\y)-\sum_{i=1}^d \partial_{\y_i} a_i(\y)\right)}_{c(\y)} v(t, \y)\\
   v(0, \y) =& g(\y)\,.
 \end{align*}
By comparing with \cref{TheoremFeymanKac}, we identify the infinitesimal generator
\begin{equation*}
  \mathcal A = \sum_{i=1}^d \alpha_i(\y) \frac{\partial}{\partial_{\y_i}}+ \frac{1}{2}\sum_{i=1}^d \sum_{j=1}^d \,\Sigma_{ij}(\y)\frac{\partial^2}{\partial_{\y_i}\partial_{\y_j}}
\end{equation*}
Hence we associate the forward diffusion process $\mathrm d\x_t =a(\x_t)\mathrm dt + b(\x_t)\mathrm d\w_t$ with it's backwards process with infinitesimal generator $\mathcal A$
\begin{equation*}
  \mathrm d\y_t = \alpha(\y_t)\mathrm dt + b(\y_t)\mathrm d\w'_t\,
\end{equation*}
with $\Sigma(\y) = b(\y) b^T(\y)$. This yields the probabilistic representation of $v(t, \y)$ in terms of the backward process $\y_t$:
\begin{equation*}
  v(t, \y) = \int q_t(\y\vert \x)g(\x)\mathrm d\x = \mathbb E_\y\left[\exp\left(\int_0^t c(\y_s)\mathrm ds\right)g(\y_t)\right]
\end{equation*}
Hence, we also obtain a probabilistic representation for the contrastive potential by choosing $g(\x):= \exp(-U(\x))$. This finally gives
\begin{equation*}
    U_t(\y) = -\log\mathbb E_\y\left[\exp\left(\int_0^t c(\y_s)\mathrm ds-U(\y_t)\right)\right]\,.
\end{equation*}
Unlike the contrasting term in contrastive divergence, this expression can indeed be calculated by simulating stochastic processes that are entirely independent of $U$. For this we simulate from the forward process starting at $\x$ which yields $\tilde \x_t$, where the tilde denotes that this simulation may not be exact. We then simulate $M$ copies of the reverse process and keep all values at intermediate steps, i.e. $(\tilde\y_{t_0}^{j} = \tilde\x_t, \tilde\y_{t_1}^{j}, \dots, \tilde\y_{t_K=t}^{j})$ for $j=1, \dots, M$. Finally we evaluate the contrastive potential as
\begin{equation*}
    U_t(\x_t) \approx -\log\frac{1}{M}\sum_{j=1}^M\exp\left( \left(\sum_{k=1}^K c(\tilde\y_{t_k}^j)(t_k-t_{k-1})\right) - U(\tilde\y_{t}^j)\right)
\end{equation*}
The simulation method for the stochastic process and for the integration $\int_0^t c(\y_s)\mathrm ds$ may be altered in this approximation. At this stage, it is unclear what practical implications the weighting term $\int_0^t c(\y_s)\mathrm ds$ has. Notice that the process $\y_t$ is initialised at the final simulated position of the forward process $\tilde\x_t$. Furthermore, the bias correction with the $w$-stabilisation or an alternative method should still be relevant for stable training of energy-based models.
\section{Latent Space Energy-Based Prior Models}\label{appendix-section-Review-LEBM}
In this section, we first briefly review the latent space energy-based prior models (LEBMs) and its variants: CD-LEBM, SM-LEBM, and ED-LEBM. We then proceed to the experimental details.

\subsection{A Brief Review of LEBMs}
Latent space energy-based prior models \citep{pang2020learning} seek to model latent variable models $p_{\phi, \theta} (\x) = \int p_\phi (\x | \z) p_\theta (\z) \mathrm{d}\z$ with an EBM prior $p_\theta (\z) = \frac{\exp(-E_\theta (\z))p_0 (\z)}{Z_\theta}$, where $p_0 (\z)$ is a base distribution which we choose as standard Gaussian \citep{pang2020learning}. LEBMs often perform better than latent variable models with a fixed Gaussian prior like VAEs since the EBM prior is more informative and expressive \citep[Appendix C]{pang2020learning}. However, training LEBMs is more expensive compared to latent variable models with fixed Gaussian prior because of the cost of training for energy-based models. This motivates to explore various training strategies for EBMs such as contrastive divergence, score matching, and the proposed energy discrepancy, where we find that energy discrepancy is the most efficient in terms of computational complexity.

The parameter update for the LEBM can be derived from maximum-likelihood estimation of $p_{\phi, \theta} (\x)$. Using the identity $\mathbb{E}_{p_{\phi\!, \theta} (\z|\x)}\![\nabla_{\!\phi, \theta} \! \log p_{\phi, \theta} (\z|\x)] \!=\! 0$, the gradient of the log-likelihood of a data point $\x$ is given by
\begin{align}
    \nabla_{\phi, \theta} \log p_{\phi, \theta} (\x) \!=\! \mathbb{E}_{p_{\phi, \theta} (\z | \x)} [\nabla_{\phi, \theta} \log p_{\phi, \theta} (\z, \x)] \!=\! \mathbb{E}_{p_{\phi, \theta} (\z | \x)} [\nabla_\phi \log p_\phi (\x | \z) \!+\! \nabla_\theta \log p_\theta (\z)]. \nonumber
\end{align}
The posterior $p_{\phi, \theta} (\z | \x)$ prescribes the latent representation of the data point $\x$. Consequently, in each parameter update, samples are generated from the posterior distribution $p_{\phi, \theta}(\z\vert \x)$ via running Langevin dynamics and are treated as data on latent space. The generator is then updated via
\begin{align}
    \nabla_\phi \log p_{\phi, \theta} (\x) = \mathbb{E}_{p_{\phi, \theta} (\z | \x)} [\nabla_\phi \log p_\phi (\x | \z)],\,.\nonumber 
\end{align}
Similarly, the maximum-likelihood update for the EBM parameters $\theta$ is given by $\nabla_\theta \log p_{\phi, \theta} (\x) = \mathbb{E}_{p_{\phi, \theta} (\z | \x)} [\nabla_\theta \log p_\theta (\z)]$. As with any EBM, this gradient can not be used, directly, since this would require a tractable normalisation constant $Z_\theta$. To make this update tractable, we replace the gradient of the log-likelihood with contrastive divergence, score matching, and energy discrepancy as outlined below.
\begin{figure*}[!t]
\centering
    \begin{minipage}[t]{0.325\linewidth}
    \centering
    \begin{algorithm}[H]
    \caption{$\operatorname{CD-LEBM}$} \small
    \label{alg:CD-LEBM}
    \begin{algorithmic}[1] 
        \STATE sample from posterior and prior \\ $\z_{+} \!\sim p (\z | \x); \ \  \z_- \!\sim p_\theta (\z)$
        \STATE evaluate the energy difference \\ \vspace{.4mm} $d_\theta \leftarrow E_\theta (\z_+) - E_\theta (\z_-)$ \vspace{.4mm}
        \STATE Update parameter $\theta$ using \eqref{loss-cdlebm-appendix} \\ $\theta \leftarrow \theta - \eta_\theta \nabla_\theta d_\theta$ \vspace{.45mm}
    \end{algorithmic}
    \end{algorithm}
    \end{minipage}
    \begin{minipage}[t]{0.325\linewidth}
    \centering
    \begin{algorithm}[H]
    \caption{$\operatorname{SM-LEBM}$} \small
    \label{alg:SM-LEBM} 
    \begin{algorithmic}[1] 
        \STATE sample from posterior \\ $\z \sim p (\z | \x)$
        \STATE evaluate the score difference \\ \vspace{.4mm} $\!\!\!\!\!\!\boldsymbol{d}_{\theta} \!\!\leftarrow\!\! \nabla_{\!\z}\! \log p_{\theta} \!(\z) \!-\!\! \nabla_{\!\z}\! \log p \!(\z | \x)$ \vspace{.4mm}
        \STATE Update parameter $\theta$ using \eqref{loss-smlebm-appendix} \\ $\theta \leftarrow \theta - \eta_\theta \nabla_\theta \frac{1}{2} \lVert \boldsymbol{d}_{\theta} \rVert_2^2$
    \end{algorithmic}
    \end{algorithm}
    \end{minipage}
    \begin{minipage}[t]{0.325\linewidth}
    \centering
    \begin{algorithm}[H]
    \caption{$\operatorname{ED-LEBM}$} \small
    \label{alg:ED-LEBM} 
    \begin{algorithmic}[1] 
        \STATE sample from posterior \\ $\z \sim p (\z | \x)$
        \STATE evaluate the energy difference \\ $\!\!\!\!\!d_\theta \!\!\leftarrow\! \!\frac{1}{M} \!\! \sum_{\!j\!=\!1}^{\!M} \!\! e^{\Tilde{E}_{\!\theta} \!(\z) \!- \!\Tilde{E}_{\!\theta} \!(\z \!+\! \sqrt{t}\xi \!+\! \sqrt{t}\xi'^{\!j}\!)}$
        \STATE Update parameter $\theta$ using \eqref{loss-edlebm-appendix} \\ $\theta \leftarrow \theta \!-\! \eta_\theta \nabla_\theta \log (w\!/\!M \!+\! d_\theta)$ \vspace{-.2mm}
    \end{algorithmic}
    \end{algorithm}
    \end{minipage} 
\vspace{-1mm}
\caption{The training procedure for the EBM prior. We use one training sample only to illustrate.}
\vspace{-1mm}
\end{figure*}

\begin{algorithm}[!t]
\caption{Learning latent space energy-based prior models}
\label{alg:learning-lebm-appendix}
\small
\begin{algorithmic}[1]

    \REPEAT
        \STATE Sample training data points $\{\x^i\}_{i=1}^N \sim p_{\mathrm{data}}(\x)$ 
        \STATE For each $\x^i$, sample the corresponding latent variable $\z^i \sim p_{\phi, \theta} (\z | \x^i)$ via \\ $\z^i_{k+1} = \z^i_{k} + \frac{\epsilon}{2} \nabla_\z \log p_{\phi, \theta} (\z | \x^i) + \sqrt{\epsilon} \boldsymbol{\omega}_k$,  \quad $\boldsymbol{\omega}_k \sim \mathcal{N}(0, \mathbf{I}), \z^i_0 \sim p_0 (\z)$
        \STATE Update parameter $\phi$ by maximizing log-likelihood \\ $\phi \leftarrow \phi + \eta_\phi \nabla_\phi \frac{1}{N} \sum_{i=1}^N \log p_\phi (\z^i | \x^i)$
        \STATE Update parameter $\alpha$ by running Algorithms \ref{alg:CD-LEBM}, \ref{alg:SM-LEBM}, or \ref{alg:ED-LEBM} \\ $\theta \leftarrow \theta - \eta_\theta \nabla_\theta \mathcal{L}_{\text{CD, SM, or ED}}(\theta)$
    \UNTIL convergence of parameters ($\phi, \theta $) 
\end{algorithmic}
\end{algorithm}
\textbf{CD-LEBM \citep{pang2020learning}.} 
The contrastive divergence update is obtained as per usual by expressing the gradient of the log likelihood in terms of the energy function
\begin{align}
    \nabla_\theta \log p_{\phi, \theta}(\x) = \mathbb{E}_{p_{\phi, \theta}(\z | \x)} [\nabla_\theta \log p_\theta (\z)] = \mathbb{E}_{p_\theta (\z)} [\nabla_\theta E_\theta (\z)] - \mathbb{E}_{p_{\phi, \theta}(\z | \x)} [\nabla_\theta E_\theta (\z)]. \nonumber
\end{align}
Therefore, the EBM prior can be learned by minimising
\begin{align}  \label{loss-cdlebm-appendix}
    \mathcal{L}_{\mathrm{CD}}(\theta) := \frac{1}{N} \sum_{i=1}^N E_\theta (\z^i_+) - E_\theta (\z^i_-), \quad \z^i_+ \sim p_{\phi, \theta}(\z | \x^i), \,\z^i_- \sim p_\theta (\z).
\end{align}
Note that optimizing CD-LEBM is computationally expensive, as training the EBM prior requires simulating Langevin dynamics to sample $\z$ from $p_{\phi, \theta}(\z | \x)$ to generate positive samples and $p_\theta (\z)$ to generate negative samples.

\textbf{SM-LEBM.} The second solution is to minimize the Fisher divergence between the posterior and prior, which has the following form
\begin{align}
    \frac{1}{2} \mathbb{E}_{p_{\phi, \theta} (\z | \x)} [\lVert \nabla_\z \log p_\theta (\z) - \nabla_\z \log p_{\phi, \theta}(\z\vert\x) \rVert_2^2]. \nonumber
\end{align}
This is equivalent to score matching \citep{hyvarinen2005estimation} when $p_{\phi, \theta}(\z\vert\x)$ is treated as \emph{parameter independent} data distribution. We refer to this approach as score-matching LEBM, in which the EBM prior is learned by minimising
\begin{align} \label{loss-smlebm-appendix}
    \mathcal{L}_{\mathrm{SM}}(\theta) := \frac{1}{N} \sum_{i=1}^N \frac{1}{2} \lVert \nabla_\z \log p_\theta (\z^i) - \nabla_\z \log p (\z^i | \x) \rVert_2^2, \quad \z^i \sim p_{\phi, \theta} (\z | \x^i).
\end{align}
where the parameters of $p_{\phi, \theta}(\z\vert\x)$ are suppressed in the update. Note that score matching generally requires computing the Hessian of the log density as in but in score-matching LEBM, we have $\nabla_\z \log p (\z | \x) = \nabla_\z \log p (\x | \z) + \nabla_\z \log p (\z)$.

\textbf{ED-LEBM.} Finally, the EBM prior can be learned by minimising the energy discrepancy between the posterior and the EBM prior with $\Tilde{E}_\theta (\z) \!:=\! E_\theta (\z) - \log p_0 (\z)$, which can be estimated as follows
\begin{align} \label{loss-edlebm-appendix}
     \mathcal{L}_{\mathrm{ED}}(\theta) \!:=\! \frac{1}{N}\sum_{i=1}^N \log \!\!\left(\! \frac{w}{M} \!+\! \frac{1}{M} \! \sum_{j=1}^M \! \exp(\Tilde{E}_\theta(\z^i) \!- \!\Tilde{E}_\theta(\z^i \!+\! \sqrt{t}\boldsymbol{\xi}^i \!+\! \sqrt{t}\boldsymbol{\xi}'^{i,j})) \!\!\right)
\end{align}
with $\z^i \!\sim \!p_{\phi, \theta} (\z | \x^i)$. Note that energy discrepancy does not require simulating MCMC sampling on the EBM prior and calculating the score of the log density, which is computationally friendly for large-scale training. It is critical to include the base distribution $p_0(\z)$ in the energy function $\Tilde{E}_\theta$. We summarize the training process of the EBM prior using CD-, SM-, and ED-LEBM in Algorithms \ref{alg:CD-LEBM}, \ref{alg:SM-LEBM}, and \ref{alg:ED-LEBM}, with the training procedure of LEBM given in \cref{alg:learning-lebm-appendix}.
\subsection{Langevin Sampling, Reconstruction, and Generation}\label{appendix-sampling-generation}
To sample from the EBM prior $p_\theta(\z)$ and posterior $p_{\phi, \theta}(\z\vert\x)$ we employ a standard unadjusted Langevin sampling routine, i.e. we repeat for $k= 0, 1, \dots, K$
\begin{equation*}
    \z^i_{k+1} = \z^i_{k} + \frac{\epsilon}{2} \nabla_\z \log p(\z) + \sqrt{\epsilon} \boldsymbol{\omega}_k, \quad \boldsymbol{\omega}_k \sim \mathcal{N}(0, \mathbf{I})
\end{equation*}
where $\z_0\sim p_0(\z)$ and the distribution $p(\z)$ is replaced by the prior or posterior densities, respectively.

The generator is modelled as the Gaussian $p_{\phi}(\x\vert\z) = \mathcal N(\mu_\phi(z), \sigma^2 \mathbf{I})$. In reconstruction of $\x$, we sample from the posterior $\z_\x \sim p_{\phi, \theta}(\z\vert\x)$ and compute the reconstruction as $\hat \x = \mu_\phi(\z_\x)$. In data generation, we sample from the EBM prior $\z_{\mathrm{gen}}\sim p_\theta(\z)$ and compute the generated synthetic data point as $\x_{\mathrm{gen}} = \mu_\phi(\z_{\mathrm{gen}})$. 

\begin{table}[!t]
\centering
\caption{Model architectures of LEBMs on various datasets.}
\label{tab:lebm-architecture-appendix}
\footnotesize
\vspace{2mm}
    \begin{minipage}[t]{.495\textwidth}
        \centering
        (a) Generator for SVHN $32 \times 32$, ngf $=64$\\
        \setlength{\tabcolsep}{.8mm}{\begin{tabular}{ccc}
            \toprule
            Layers & In-Out Size & Stride \\
            \midrule
            Input: $\x$ & 1x1x100 & - \\
            4x4 convT(ngf x 8), LReLU & 4x4x(ngf x 8) & 1 \\
            4x4 convT(ngf x 4), LReLU & 8x8x(ngf x 4) & 2 \\
            4x4 convT(ngf x 2), LReLU & 16x16x(ngf x 2) & 2 \\
            4x4 convT(3), Tanh & 32x32x3 & 2 \\
            \bottomrule 
        \end{tabular}}
    \end{minipage}
    \begin{minipage}[t]{.495\textwidth}
        \centering
        (b) Generator for CIFAR-10 $32 \times 32$, ngf $=128$\\
        \setlength{\tabcolsep}{.8mm}{\begin{tabular}{ccc}
            \toprule
            Layers & In-Out Size & Stride \\
            \midrule
            Input: $\x$ & 1x1x128 & - \\
            8x8 convT(ngf x 8), LReLU & 8x8x(ngf x 8) & 1 \\
            4x4 convT(ngf x 4), LReLU & 16x16x(ngf x 4) & 2 \\
            4x4 convT(ngf x 2), LReLU & 32x32x(ngf x 2) & 2 \\
            3x3 convT(3), Tanh & 32x32x3 & 1 \\
            \bottomrule 
        \end{tabular}}
    \end{minipage} \\ \vspace{2mm}
    \begin{minipage}[t]{.495\textwidth}
        \centering
        (c) Generator for CelebA $64 \times 64$, ngf $=128$\\
        \setlength{\tabcolsep}{.8mm}{\begin{tabular}{ccc}
            \toprule
            Layers & In-Out Size & Stride \\
            \midrule
            Input: $\x$ & 1x1x100 & - \\
            4x4 convT(ngf x 8), LReLU & 4x4x(ngf x 8) & 1 \\
    		4x4 convT(ngf x 4), LReLU & 8x8x(ngf x 4) & 2\\
    		4x4 convT(ngf x 2), LReLU & 16x16x(ngf x 2) & 2\\
    		4x4 convT(ngf x 1), LReLU & 32x32x(ngf x 1) & 2\\
    		4x4 convT(3), Tanh & 64x64x3 & 2 \\ 
            \bottomrule 
        \end{tabular}}
    \end{minipage}
    \begin{minipage}[t]{.495\textwidth}
        \centering
        (d) Generator for MNIST $28 \times 28$, ngf $=16$\\
        \setlength{\tabcolsep}{.8mm}{\begin{tabular}{ccc}
            \toprule
            Layers & In-Out Size & Stride \\
            \midrule
            Input: $\x$ & 16 & - \\
            4x4 convT(ngf x 8), LReLU & 4x4x(ngf x 8) & 1 \\
            3x3 convT(ngf x 4), LReLU & 7x7x(ngf x 4) & 2 \\
            4x4 convT(ngf x 2), LReLU & 14x14x(ngf x 2) & 2 \\
            4x4 convT(1), Tanh & 28x28x1 & 2 \\ \\
            \bottomrule 
        \end{tabular}}
    \end{minipage} \\ \vspace{2mm}
    \begin{minipage}[t]{.495\textwidth}
        \centering
        (d) EBM prior\\
        \begin{tabular}{cc}
            \toprule
            Layers & In-Out Size  \\
            \midrule
            Input: $\z$ & 16/100/128 \\
            Linear, LReLU & 200 \\
            Linear, LReLU & 200 \\
            Linear & 1 \\
            \bottomrule 
        \end{tabular}
    \end{minipage}
\end{table}

\subsection{Experimental Details of LEBMs} \label{sec:appendix-lebm-details}

\paragraph{Datasets.} We use the following datasets in image modelling: SVHN \citep{netzer2011reading}, CIFAR-10 \citep{krizhevsky2009learning}, and CelebA \citep{liu2015deep}. SVHN is of resolution $32 \times 32$, and containts $73,257$ training images and $26,032$ test images. CIFAR-10 consists of $50,000$ training images and $10,000$ test images with a resolution of $32 \times 32$. For CelebA, which contains $162,770$ training images and $19,962$ test images, we follow the pre-processing step in \citep{pang2020learning}, taking $40,000$ examples of CelebA as training data and resizing it to $64 \times 64$. In anomaly detection, we follow the setting in \citep{zenati2018efficient} and the dataset can be found in their published code\footnote{\href{https://github.com/houssamzenati/Efficient-GAN-Anomaly-Detection}{https://github.com/houssamzenati/Efficient-GAN-Anomaly-Detection}}.

\paragraph{Model Architectures.}
We adopt the same network architecture used in CD-LEBM \citep{pang2020learning}, with the details depicted in \cref{tab:lebm-architecture-appendix}, where convT($n$) indicates a transposed convolutional operation with $n$ output channels. We use Leaky ReLU as activation functions and the slope is set to be $0.2$ and $0.1$ in the generator and EBM prior, respectively.

\paragraph{Details of Training and Inference.} Here, we provide a detailed description of the hyperparameters setup for ED-LEBM. Following \citep{pang2020learning},  we utilise Xavier normal \citep{glorot2010understanding} to initialise the parameters. For the posterior sampling during training, we use the Langevin sampler with step size of $0.1$ and run it for $20$ steps for SVHN and CelebA, and $40$ steps on CIFAR-10. We set $t=0.25, M=16, w=1$ throughout the experiments. The proposed models are trained for $200$ epochs using the Adam optimizer \citep{kingma2014adam} with a fixed learning $0.0001$ for the generator and $0.00005$ for the EBM prior. We choose the largest batch size from $\{128, 256, 512\}$ such that it can be trained on a single NVIDIA-GeForce-RTX-2080-Ti GPU. In test time, we observed that slightly increasing the number of Langevin sampler steps can improve reconstruction performance. Therefore, we choose $100$ steps with a step size of $0.1$ for posterior sampling. Based on the insights gained from the MCMC diagnostic presented in \cref{fig:long-run-mcmc-diagnostics}, we choose $500$ steps with a step size of $0.2$ to ensure convergence of the Langevin dynamics when sampling from the EBM prior.

\paragraph{Evaluation Metrics.} In image modelling, we use FID  and MSE to quantitatively evaluate the quality of the generated samples and reconstructed images. On all datasets the FID is computed based on $50,000$ samples and the MSE is computed on the test set. Following \citep{zenati2018efficient, pang2020learning}, we report the performance using AUPRC in anomaly detection and results are averaged over last $10$ epochs to account for variance.
\section{Additional Experimental Results}
In this section we present additional experimental results to highlight the effect of various parameters of our approach.
\subsection{Experimental Setup for \cref{fig:main-healing-sm-with-diff-t} (Healing the nearsightedness of score-matching)}\label{appendix-healing-nearsightedness}
\begin{wrapfigure}{r}{0.40\linewidth}
\vspace{-8mm}
    \centering
    \includegraphics[width=1.0\linewidth]{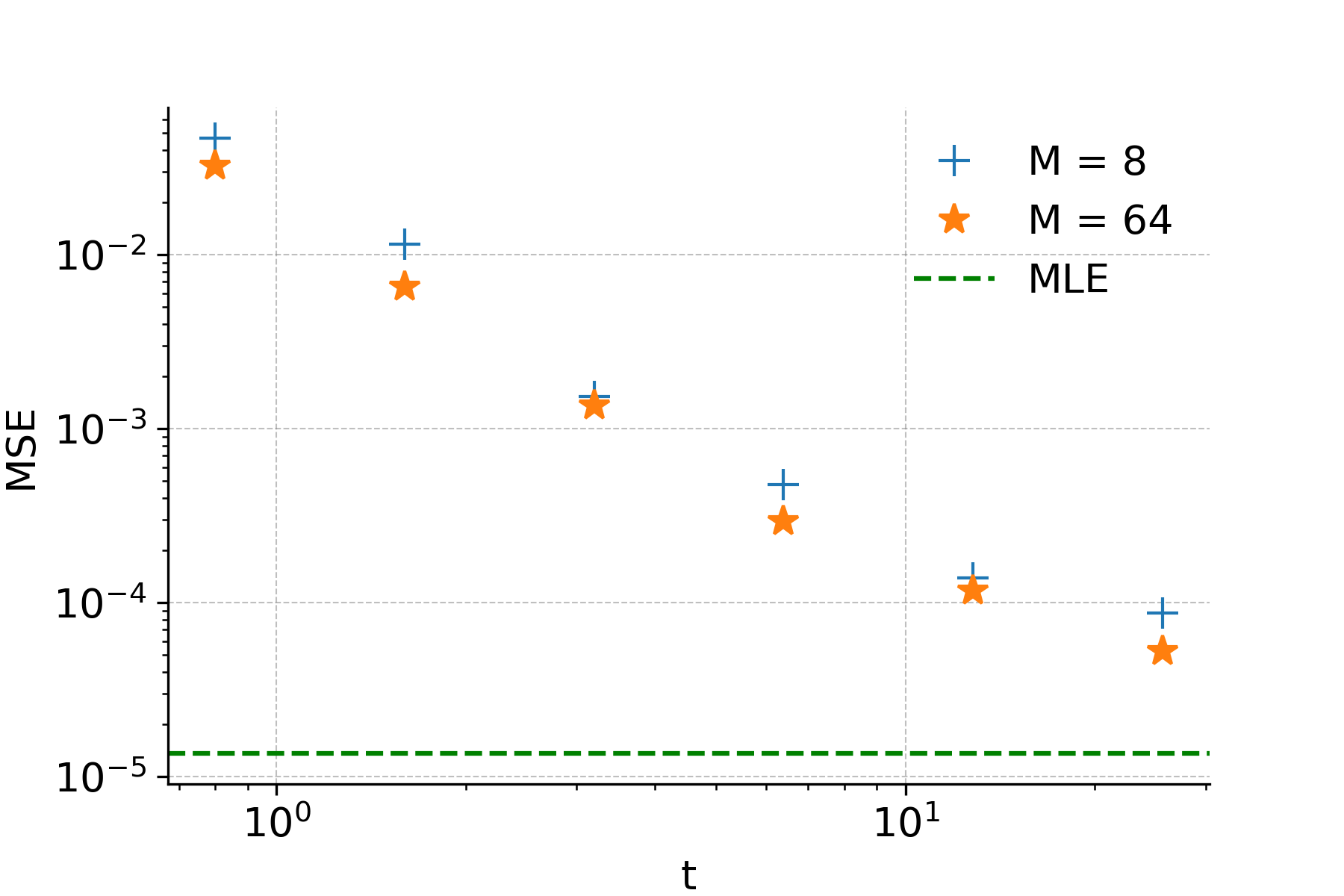}
    \vspace{-4mm}
    \caption{Study of the influence of $t$ and $M$ on estimating mixing weights.}
    \label{fig:mse-diff-tm-appendix}
\end{wrapfigure}
A major problem of score-based methods is their nearsightedness, which refers to their inability to capture global properties of a distribution with disjoint supports such as the mixture weights of two well-separated modes \citep{zhang2022towards}. In sight of \cref{theorem-interpolation-SM-MLE}, energy discrepancy should alleviate this problem as it implicitly compares the scores of both distributions at multiple noise-scales. Following \citet{zhang2022towards}, we investigate this by computing energy discrepancy as a function of the mixture weight $\rho$ for the mixture of two Gaussians $g_1 := \mathcal N(-5, 1)$ and $g_2 := \mathcal N(5, 1)$, {\it i.e.,}
\begin{equation*}
    p_\rho(x) = \rho g_1(x) + (1-\rho) g_2(x).
\end{equation*}
where the true data has the mixture weight $\rho = 0.2$. We compare energy discrepancy $\mathcal L_{t, M=32, w=1}(\rho)\approx\operatorname{ED}(p_{\rho=0.2}, \log p_\rho)$ with the objective of maximum likelihood estimation $\operatorname{MLE}(\rho) := \mathbb{E}_{p_{\rho = 0.2}(x)} [-\log p_\rho (x)]$ and the score matching objective which here is given by the Fisher divergence $\operatorname{SM}(\rho):=\frac{1}{2}\mathbb{E}_{ p_{\rho = 0.2}(x)}[\lVert \nabla_x \log p_{\rho = 0.2}(x) - \nabla_x p_{\rho}(x) \rVert_2^2]$. The losses as functions of $\rho$ are shown in \cref{fig:main-healing-sm-with-diff-t}. We find that energy discrepancy is convex as a function of the mixture weight and approximates the negative log-likelihood as $t$ increases. Consequently, energy discrepancy can capture the mixture weight well for sufficiently large values of $t$.  SM, on the other hand, is a constant function and is blind to the value of the mixture weight.

To further investigate the impact of $t$ and $M$ on the efficiency of energy discrepancy, we minimise the energy discrepancy loss $\mathcal L_{t, M=32, w=1}(\rho)$ as a function of the scalar parameter $\rho$ for various choices of $M$ and $t$. We compute the mean-square error of $50$ independent estimated mixture weights for choice of $t$ and $M$.
As shown in \cref{fig:mse-diff-tm-appendix}, the estimation performance approaches that of the maximum likelihood estimator as $t$ increases, which verifies the statement in \cref{theorem-interpolation-SM-MLE}. Moreover, if the number of samples $M$ used to estimate the contrastive potential is increased, the estimation performance can be further increased towards the mean-square error of the maximum-likelihood estimator.
\newpage
\subsection{Experimental Setup for \cref{figure-learned-energies-for-different-w} (Understanding the $w$-stablisation)} \label{appendix-sec-learn-energy-for-diff-w}
To probe our interpretation of the $w$-stablisation, we train a neural-network to learn the energy function using $4,096$ data points of a one-dimensional standard Gaussian $p_\data(x)\propto \exp(-x^2/2)$. The neural network uses an input layer, a hidden linear layer of width two $\mathbb R^2\to\mathbb R^2$, and a scalar output layer $\mathbb R^2 \to \mathbb R$ with a Sigmoid Linear Unit activation between the layers. This neural network has sufficient capacity to model the Gaussian data as well as degenerate energy functions that illustrate potential pitfalls of energy discrepancy for $w=0$. The energy discrepancy is set up with hyperparameters $M=4$, $t=1$, and $w\in\{0, 0.05, 0.25, 2.\}$ and is trained for $50$ epochs with Adam. Our results are shown in \cref{figure-learned-energies-for-different-w} which confirms the relevance of the $w$-stablisation to obtain a stable optimisation of energy discrepancy. We remark here that the degenerate case $w=0$ is not strictly reproducible. Different types of lacking smoothness of the energy-function at the edge of the support lead to diverging loss values. We chose a result that illustrates the best the theoretical exposition of the $w$-stablisation in \cref{appendix-subsection-w-regularisation} and refer to \cref{fig:understanding-w-appendix} to reflect other malformed estimated energies as well as an example of a diverging loss history.
\begin{figure}[!t]
    \centering
    \hspace{-4mm}
    \begin{subfigure}{0.24\linewidth}
        \centering
        \includegraphics[width=1.35in]{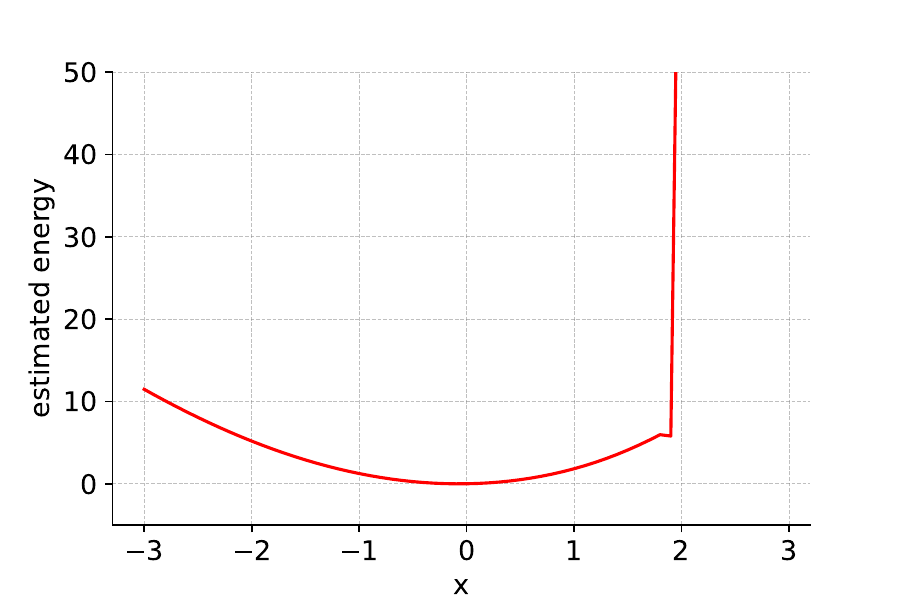}
        \vspace{-4mm}
    \end{subfigure}
    \hfill
    \begin{subfigure}{0.24\linewidth}
        \centering
        \includegraphics[width=1.35in]{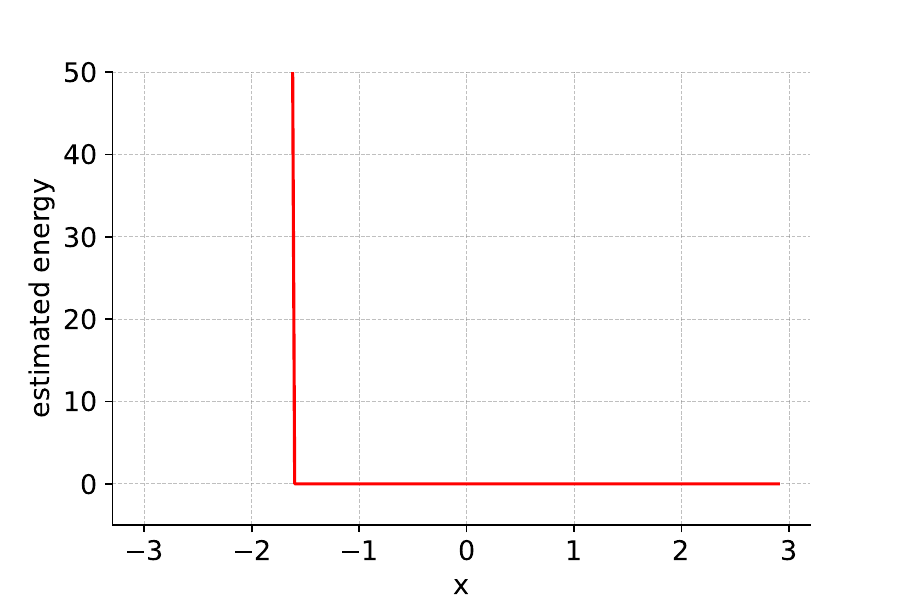}
        \vspace{-4mm}
    \end{subfigure}
    \hfill
    \begin{subfigure}{0.24\linewidth}
        \centering
        \includegraphics[width=1.35in]{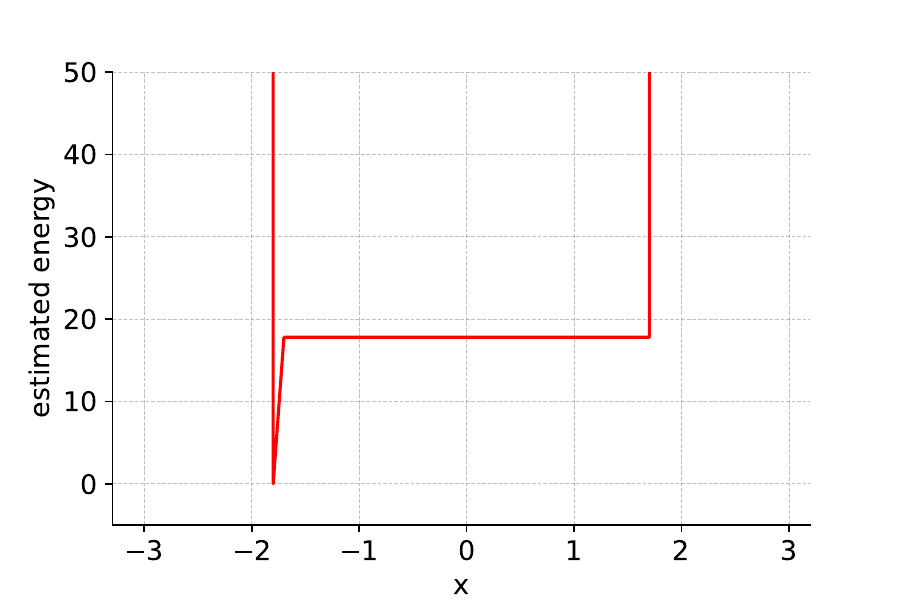}
        \vspace{-4mm}
    \end{subfigure}
    \hfill
    \begin{subfigure}{0.24\linewidth}
        \centering
        \includegraphics[width=1.35in]{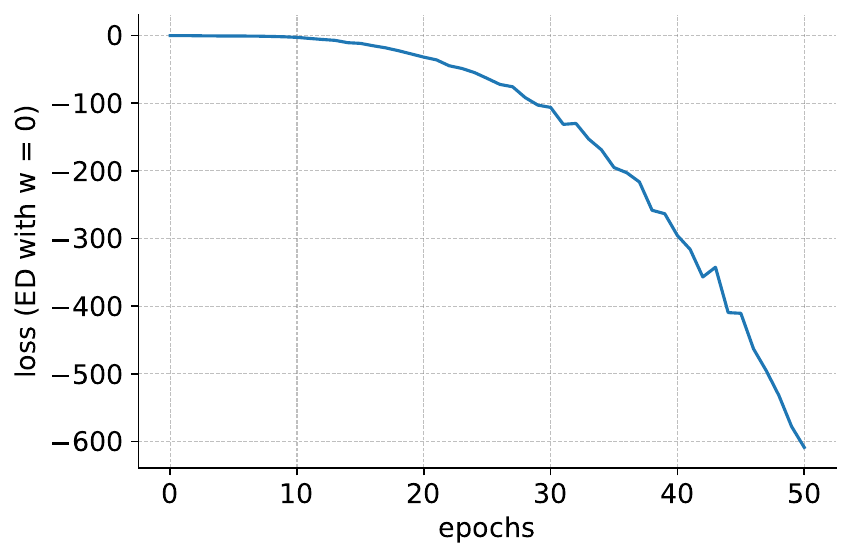}
        \vspace{-4mm}
    \end{subfigure}
    \caption{Potential outcomes for the estimated energy and loss history when ED does not converge with $w=0$.}
    \label{fig:understanding-w-appendix}
\end{figure}
\subsection{Energy Discrepancy on the Discrete Space $\{0, 1\}^d$}\label{appendix-subsection-discreteED}
Energy discrepancies are, in principle, well-defined on discrete spaces. To illustrate this point, we describe the energy-discrepancy loss for $\{0, 1\}^d$ valued data such as images with binary pixel values, in which case the discrete energy-discrepancy is straight forward to implement. We will replace the Gaussian transition density with a Bernoulli distribution. For $\varepsilon \in (0, 1)$, let $\xib \sim \mathrm{Bernoulli}(\varepsilon)^d$. Then the transition $\y = \x + \xib \,\mathrm{mod}(2)$ is symmetric and induces a symmetric transition density $q(\y-\x)$. Because of the symmetry, the energy discrepancy can be implemented in the same way as in the continuous case, i.e.
\begin{equation*}
    \mathcal L(\theta) := \frac{1}{N}\sum_{i=1}^N \left(w + \sum_{j=1}^M \exp\big(E_\theta(\x_i)-E_\theta(\x^i \oplus \xib^i \oplus \xib'^{i,j})\big)\right)
\end{equation*}
with $\xib^i, \xib'^{i, j}\sim \mathrm{Bernoulli}$ and where $\oplus$ is the addition modulo 2. To test the effectiveness of energy discrepancy with Bernoulli perturbation, we test energy discrepancy on various discrete settings. In \cref{fig:ising_learned_matrix}, we estimate the symmetric connectivity matrix of Ising models following the exposition in \citet{grathwohl2021oops}. We then learn the densities of two-dimensional synthetic data sets that were mapped to the space $\{0, 1\}^{32}$ via grey codes in \cref{fig:discrete_toy}, following \citet{dai2020learning}. Finally, we estimate the densities of image data with binary pixel values for the static and dynamic MNIST dataset, the omniglot dataset, and the caltech silhouettes dataset. As shown in \cref{fig:sample-ebm-appendix}, energy discrepancy performs competitively to contrastive divergence. We propose other potential discrete perturbations in \citet{schroeder2023training}.
\begin{figure}[!b]
    \vspace{-2mm}
    \begin{minipage}[!b]{0.34\linewidth}
        \centering
        \includegraphics[width=.4\linewidth]{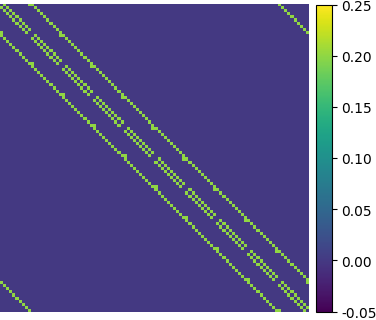}
        \includegraphics[width=.4\linewidth]{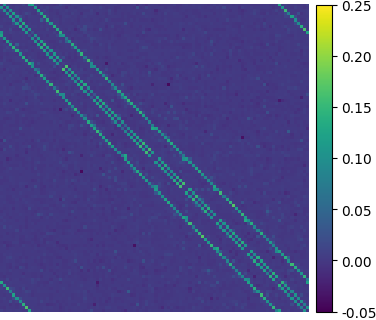}
        \vspace{-2mm}
        \caption{Results of learning Ising models. Left: ground truth; Right: learned pattern.}
        \label{fig:ising_learned_matrix}
    \end{minipage}
    \hfill
    \begin{minipage}[!b]{0.63\linewidth}
        \centering
        \vspace{2.5mm}
    \includegraphics[width=.157\linewidth,trim=20 10 20 10]{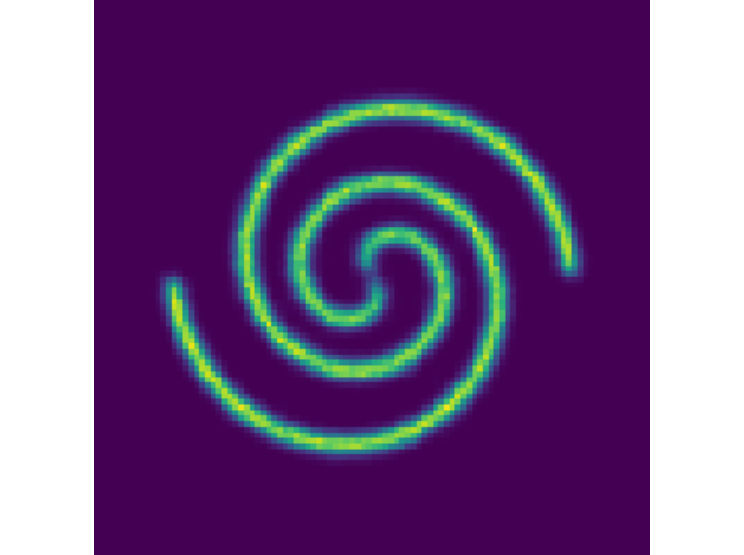}
    \includegraphics[width=.157\linewidth,trim=20 10 20 10]{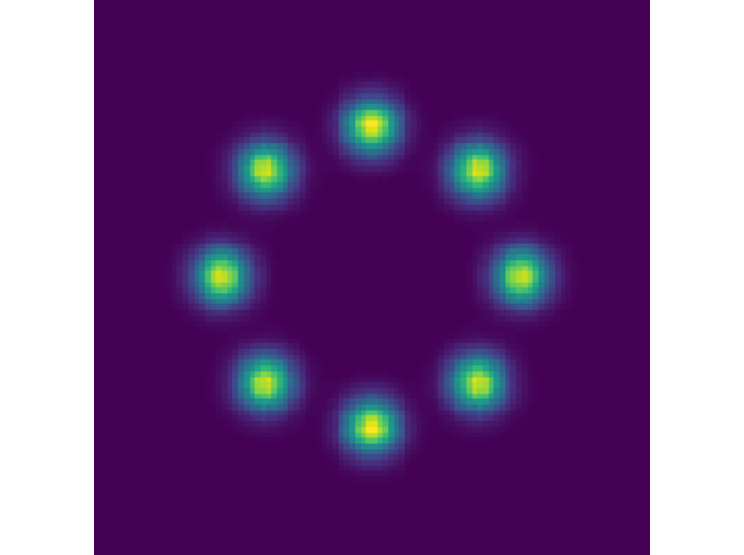}
        \includegraphics[width=.157\linewidth,trim=20 10 20 10]{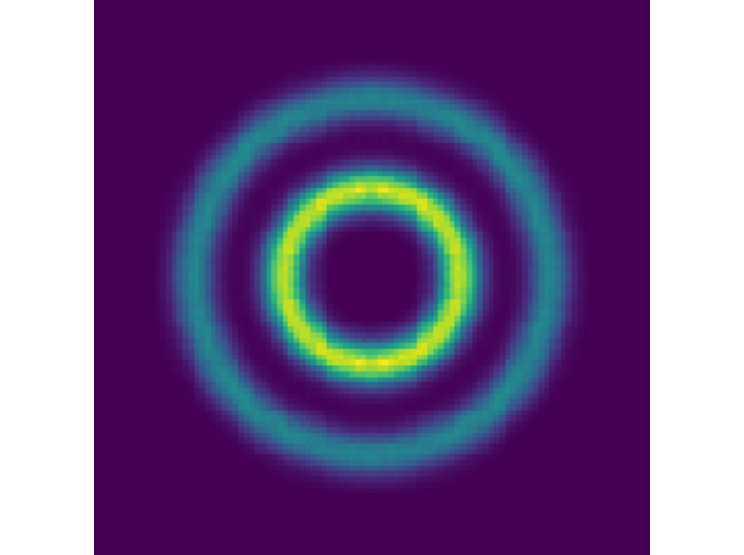}
        \includegraphics[width=.157\linewidth,trim=20 10 20 10]{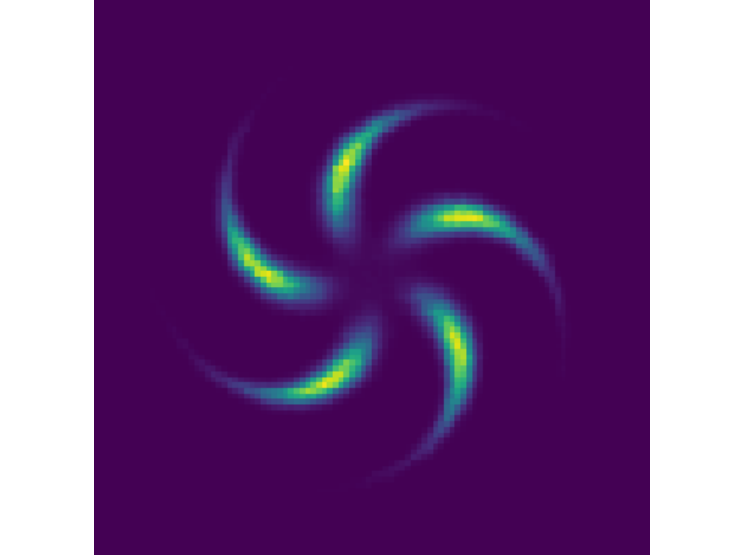}
        \includegraphics[width=.157\linewidth,trim=20 10 20 10]{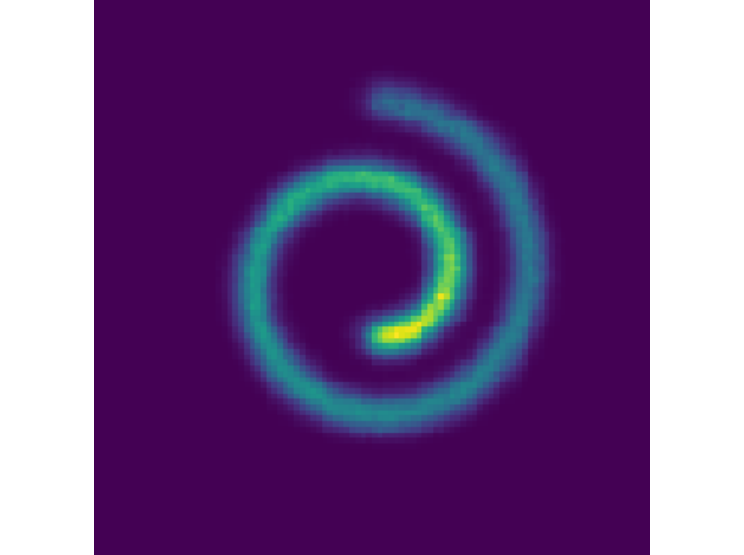}
        \includegraphics[width=.157\linewidth,trim=20 10 20 10]{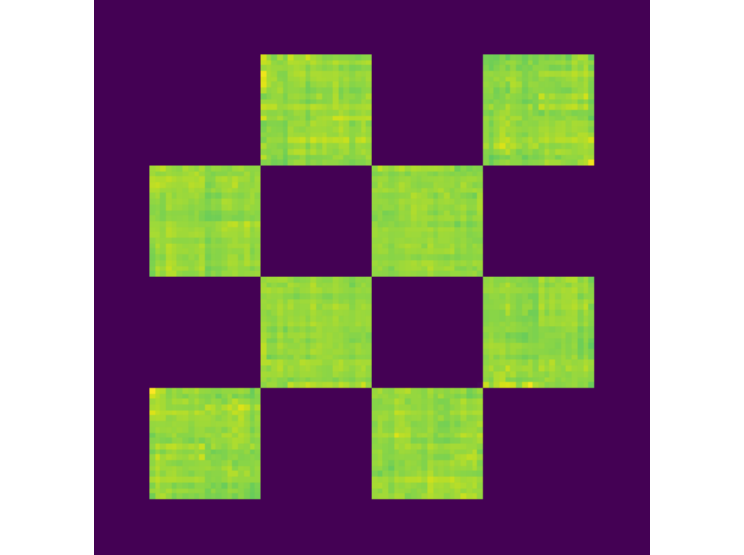}
        \vspace{1.5mm}
        \caption{Visualisation of the energy function learned on 2D plan. Here we train a discrete EBM on 32-dim grey codes which are one-to-one mapping from $\mathbb{R}^2$ to $\{0,1\}^{32}$.}
        \label{fig:discrete_toy}
    \end{minipage}
    \vspace{-5mm}
\end{figure}
\begin{figure}[!t]
    \centering
    \hspace{-10mm}
        \includegraphics[width=0.2\textwidth]{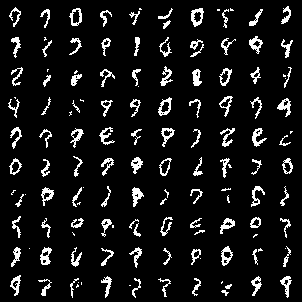}
        \includegraphics[width=0.2\textwidth]{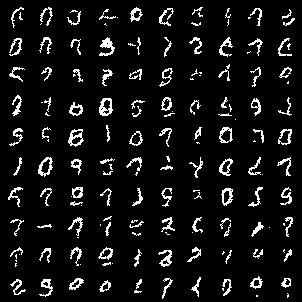}
        \includegraphics[width=0.2\textwidth]{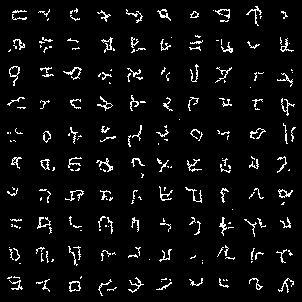}
        \includegraphics[width=0.2\textwidth]{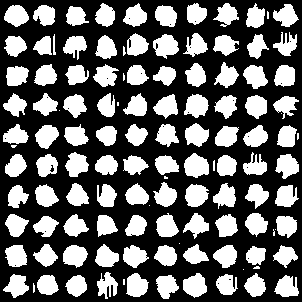}
    \hspace{-8mm}
    \caption{Generated samples on discrete image modelling. Here we train a discrete EBM using the Bernoulli transition. Left to right: Static MNIST, Dynamic MNIST, Omniglot, Caltech Silhouettes.}
    \label{fig:sample-ebm-appendix}
\end{figure} 

\subsection{Additional Density Estimation Results} \label{appendix-sec-density-estimation}
Here, we provide additional details and results on the density estimation experiments.

\paragraph{Details of Training and Inference.}
Our choice for the energy-net for density estimation is a $4$-layer feed-forward neural network with $128$ hidden units and softplus activation function.
In the context of energy discrepancy, we select $t = 1$, $M = 4$, and $w = 1$ as hyperparameters. For the contrastive divergence approach, we utilise CD-1, in which the gradient of the log-likelihood in Equation \eqref{cd-loss} is estimated by employing a Langevin sampler with a single step and a step size of $0.1$. For score matching, we train EBMs using the explicit score matching in \eqref{sm-loss}, where the Laplacian of the score is explicitly computed.
We train the model using the Adam optimizer with a learning rate of $0.001$ and iterations of $50,000$.
After training, synthesised samples are drawn by simulating Langevin dynamics with $100$ steps and a step size of $0.1$.
\paragraph{Additional Experimental Results.}
The additional results depicted in \cref{fig:appendix-exp-toy-examples} demonstrate the strong performance of energy discrepancy on various toy datasets, consistently yielding accurate energy landscapes. In contrast, contrastive divergence consistently produces flattened energy landscapes. Despite the success of score matching in these toy examples, score matching struggles to effectively learn distributions with disjoint support which can be seen in the results in \cref{fig:exp-toy-examples}.
\begin{figure}[!t]
    \centering
    {
        \begin{minipage}[t]{0.05\linewidth}
            \centering
            \vspace{-16mm}
            \includegraphics[width=.35in]{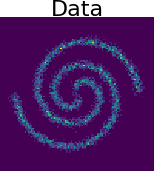}
        \end{minipage}
        \begin{minipage}[t]{0.25\linewidth}
            \centering
            \includegraphics[width=.4in]{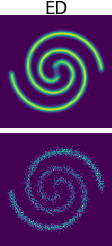}
            \includegraphics[width=.4in]{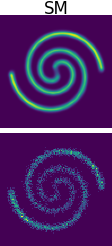}
            \includegraphics[width=.4in]{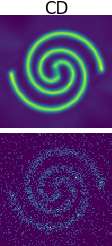}
        \end{minipage}
        \begin{minipage}[t]{0.05\linewidth}
            \centering
            \vspace{-16mm}
            \includegraphics[width=.35in]{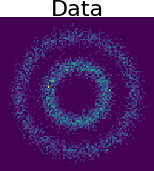}
        \end{minipage}
        \begin{minipage}[t]{0.25\linewidth}
            \centering
            \includegraphics[width=.4in]{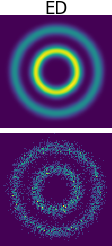}
            \includegraphics[width=.4in]{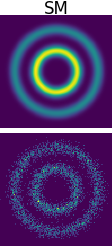}
            \includegraphics[width=.4in]{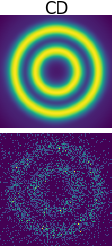}
        \end{minipage}
        \begin{minipage}[t]{0.05\linewidth}
            \centering
            \vspace{-16mm}
            \includegraphics[width=.35in]{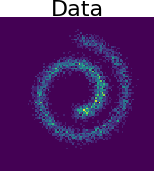}
        \end{minipage}
        \begin{minipage}[t]{0.25\linewidth}
            \centering
            \includegraphics[width=.4in]{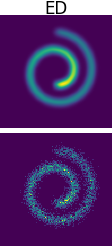}
            \includegraphics[width=.4in]{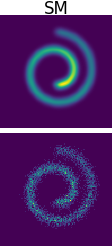}
            \includegraphics[width=.4in]{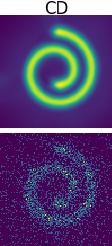}
        \end{minipage}
    } \\ \hspace{-.8mm}
    {
        \begin{minipage}[t]{0.05\linewidth}
            \centering
            \vspace{-16mm}
            \includegraphics[width=.35in]{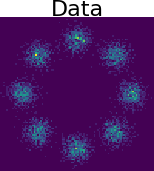} 
        \end{minipage}
        \begin{minipage}[t]{0.25\linewidth}
            \centering
            \includegraphics[width=.4in]{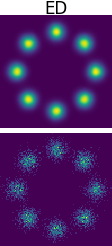}
            \includegraphics[width=.4in]{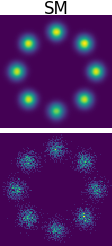}
            \includegraphics[width=.4in]{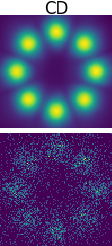}
        \end{minipage}
        \begin{minipage}[t]{0.05\linewidth}
            \centering
            \vspace{-16mm}
            \includegraphics[width=.35in]{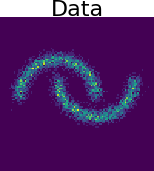}
        \end{minipage}
        \begin{minipage}[t]{0.25\linewidth}
            \centering
            \includegraphics[width=.4in]{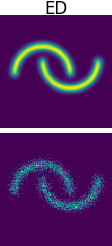}
            \includegraphics[width=.4in]{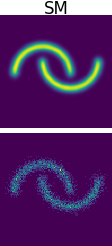}
            \includegraphics[width=.4in]{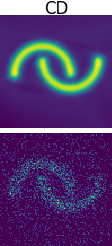}
        \end{minipage}
        \begin{minipage}[t]{0.05\linewidth}
            \centering
            \vspace{-16mm}
            \includegraphics[width=.35in]{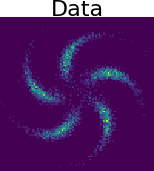}
        \end{minipage}
        \begin{minipage}[t]{0.25\linewidth}
            \centering
            \includegraphics[width=.4in]{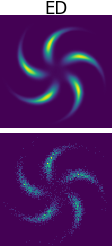}
            \includegraphics[width=.4in]{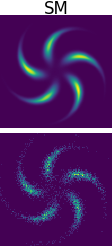}
            \includegraphics[width=.4in]{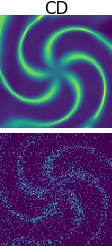}
        \end{minipage}
    }
    \caption{Additional results on density estimation.}
    \label{fig:appendix-exp-toy-examples}
\end{figure}
\paragraph{Comparison with Denoising Score Matching}
\begin{wrapfigure}{r}{0.60\linewidth}
\centering
\vspace{-2mm}
\includegraphics[width=.125\textwidth]{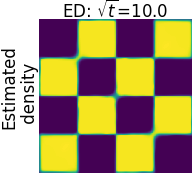}
\includegraphics[width=.10\textwidth]{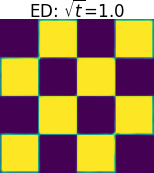}
\includegraphics[width=.10\textwidth]{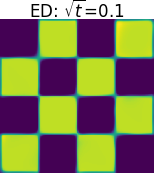}
\includegraphics[width=.10\textwidth]{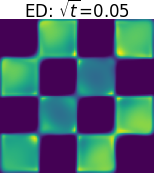}
\includegraphics[width=.10\textwidth]{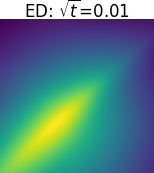}\\
\includegraphics[width=.125\textwidth]{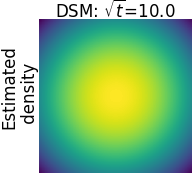}
\includegraphics[width=.10\textwidth]{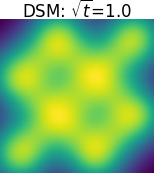}
\includegraphics[width=.10\textwidth]{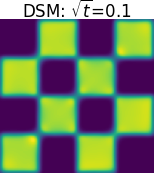}
\includegraphics[width=.10\textwidth]{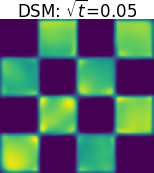}
\includegraphics[width=.10\textwidth]{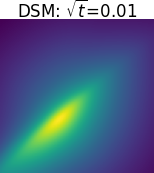}
\caption{Comparing energy discrepancy (ED) with denoising score matching (DSM) with different noise scales.}
\label{fig:compare_ed_dsm_diif_t}
\vspace{-4mm}
\end{wrapfigure}
We further compare energy discrepancy with denoising score matching (DSM) \citep{vincent2011connection}. Specifically, we set $w=1, M=4$ and experiment with various $t$.
As shown in \cref{fig:compare_ed_dsm_diif_t}, DSM fails to work when the noise scale is too large or too small. This is because DSM is a biased estimator which is optimised for $p_{\theta^*} (\y) = \int \gamma_t(\y-\x)p_\data(\x)\mathrm d\x$. 
In contrast, energy discrepancy is more robust to the choice of $t$ since energy discrepancy considers all noise scales up to $t$ simultaneously and has an unique optimum $p_{\theta^*} (\x) = p_{\text{data}}(\x)$. However, in the case that $t$ is large and $M$ is small, estimation with energy discrepancy deteriorates due to high variance of the estimated loss function. This provides an explanation for the superior performance of energy discrepancy at $\sqrt{t}=1$ compared to $\sqrt{t}=10$. Further ablation studies are presented in \cref{fig:toy-understanding-tm-appendix}.

\begin{figure}[!t]
    \centering
    \includegraphics[width=1.\textwidth]{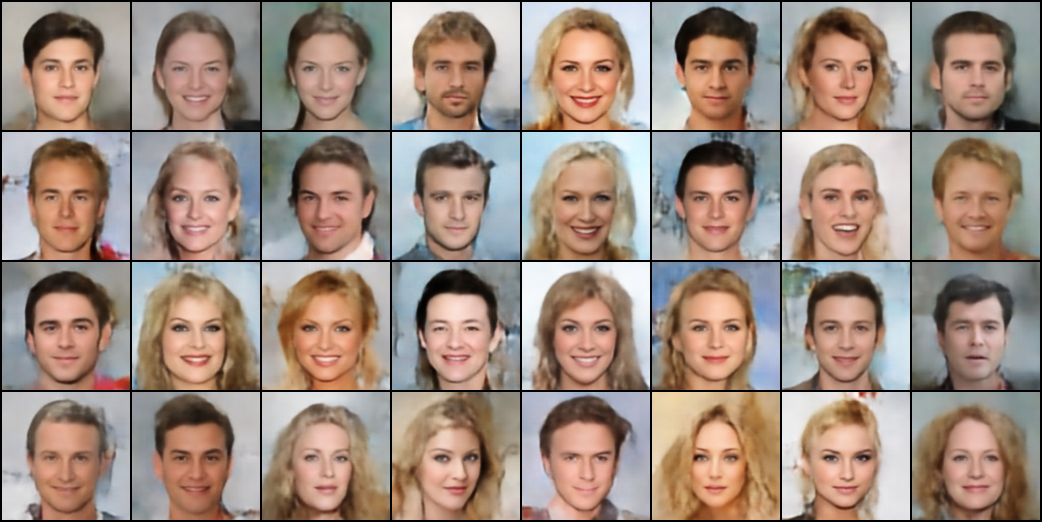}
    \caption{Generated images on CelebA $128 \times 128$.}
    \vspace{-2mm}
    \label{fig:celeba128-generation-appendix}
\end{figure}

\subsection{Additional Image Modelling Results} \label{appendix-sec-image-modelling}

\textbf{Additional Image Generation and Reconstruction Results.} \cref{fig:celeba128-generation-appendix,fig:append-image-reconstruction} show additional examples of image generation on CelebA $128 \times 128$ and image reconstruction on CelebA $64 \times 64$. The images are computed through the sampling process outlined in \cref{appendix-sampling-generation}.

\textbf{Additional Image Interpolation and Manipulation Results.} \cref{fig:celeba-interpolation-data-appendix,fig:celeba-interpolation-sample-appendix,fig:celeba-manipulation-appendix} show additional results of image interpolation and manipulation on CelebA $64 \times 64$. Note that there are two types of interpolations: posterior interpolation and prior interpolation. 
For posterior interpolation, we consider two real images $\x_1$ and $\x_2$ from the dataset and perform linear interpolation among their corresponding latent variables $\z_1 \sim p_{\phi, \theta}(\z \vert \x_1)$ and $\z_2 \sim p_{\phi, \theta}(\z \vert \x_2)$.
For prior interpolation, we apply linear interpolation between $\z_1 \sim p_\theta (\z)$ and $\z_2 \sim p_\theta (\z)$.

\textbf{Long-run MCMC Diagnostics.} \cref{fig:long-run-mcmc-diagnostics} depicts several convergence diagnostics for long-run MCMC on the EBM prior, where we simulate Langevin dynamics with a large number of steps ($2,000$). Firstly, the energy profiles converge at approximately $250$ steps, as demonstrated in \cref{fig:long-run-mcmc-diagnostics-energy}, and the quality of the synthesized samples improves as the number of steps increases. Secondly, we compute the Gelman-Rubin statistic $\hat{R}$ \citep{gelman1992inference} using 64 chains. The histograms of $\hat{R}$ over $5,000 \times 64$ chains are shown in \cref{fig:long-run-mcmc-diagnostics-rhat}, with a mean of $1.08 < 1.20$, indicating that the Langevin dynamics have approximately converged. Thirdly, we present auto-correlation results in \cref{fig:long-run-mcmc-diagnostics-acf} using $5,000$ chains, where the mean is depicted as a line and the standard deviation as bands. The auto-correlation decreases to zero within $200$ steps, which is consistent with the Gelman-Rubin statistic that assesses convergence across multiple chains.

\subsection{Qualitative Results on the Effect of $t$, $M$, and $w$}
The hyperparameters $t,M,w$ play important roles in energy discrepancy. Here, we provide some qualitative results to understand their effects. According to \cref{theorem-interpolation-SM-MLE}, $t$ controls the nearsightedness of energy discrepancy. For small $t$, energy discrepancy behaves like score matching $\frac{1}{t}\ED_{\gamma_t}(p_{\mathrm{data}}, U) = \frac{1}{t}\int_0^t \mathrm{SM}(p_s, U_s)\mathrm ds\approx \mathrm{SM}(p_\data, U)$ and is expected to be unable to resolve local mixture weights.
This assertion can be confirmed by qualitative results depicted in \cref{fig:toy-understanding-tm-appendix}, which show that when $t=0.0025$, energy discrepancy fails to identify the weights of components in the 25-Gaussians and pinwheel datasets. For large $t$, energy discrepancy inherits favourable properties of the maximum likelihood estimator. While large values of $t$ consequently mitigate problems of nearsightedness, it is worth noting that energy discrepancy may encounter issues with high variance when $t$ become excessively large. In such situations, it is necessary to consider increasing the value of $M$ to reduce the variance.

We also investigate the effect of $w$ in \cref{fig:toy-understanding-wm-appendix}. As pointed out by the analysis in \cref{appendix-subsection-w-regularisation}, $w$ serves as a stabilises training of energy based models with energy discrepancy. Based on our experimental observations, when $w=0$ and $M$ is small ({\it e.g.,} $M \leq 128$ in the 25-Gaussians dataset and $M \leq 32$ in the pinwheel dataset), energy discrepancy exhibits rapid divergence within $100$ optimisation steps and fails to converge in the end. If, however, $w$ is increased, e.g. to $1$, energy discrepancy shows stable convergence even with $M=1$. This property is highly appealing as it significantly reduces the computational complexity. Additionally, we find in \cref{figure-learned-energies-for-different-w} that larger $w$ tends to result in a flatter estimated energy landscapes which aligns with our intuition gained in \cref{appendix-subsection-w-regularisation}.

\begin{figure}[!t]
    \centering
    \includegraphics[width=.49\textwidth]{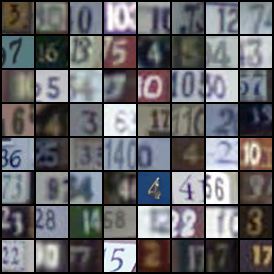}
    \includegraphics[width=.49\textwidth]{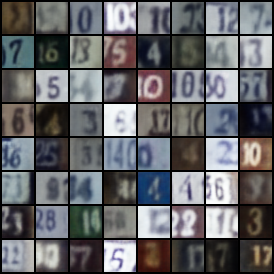}
    \includegraphics[width=.49\textwidth]{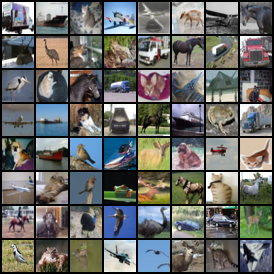}
    \includegraphics[width=.49\textwidth]{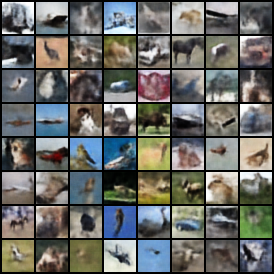}
    \includegraphics[width=.49\textwidth]{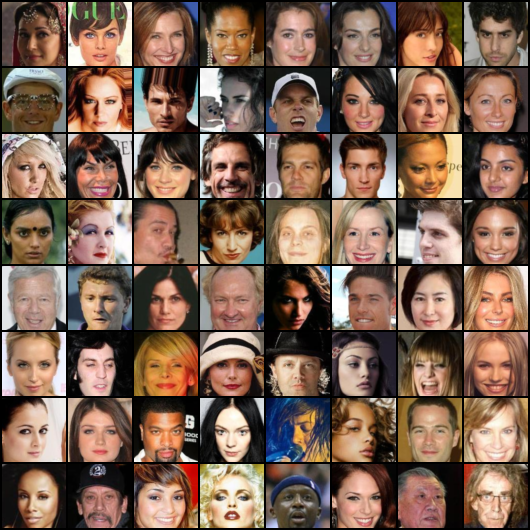}
    \includegraphics[width=.49\textwidth]{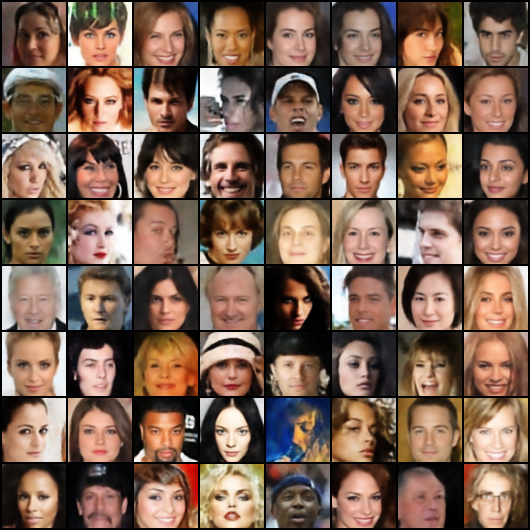}
    \caption{Qualitative results of reconstruction on test images. Left: real image from the dataset. Right: reconstructed images by sampling from the posterior.}
    \label{fig:append-image-reconstruction}
\end{figure}

\begin{figure}[t!]
    \centering
    \includegraphics[width=1.\textwidth]{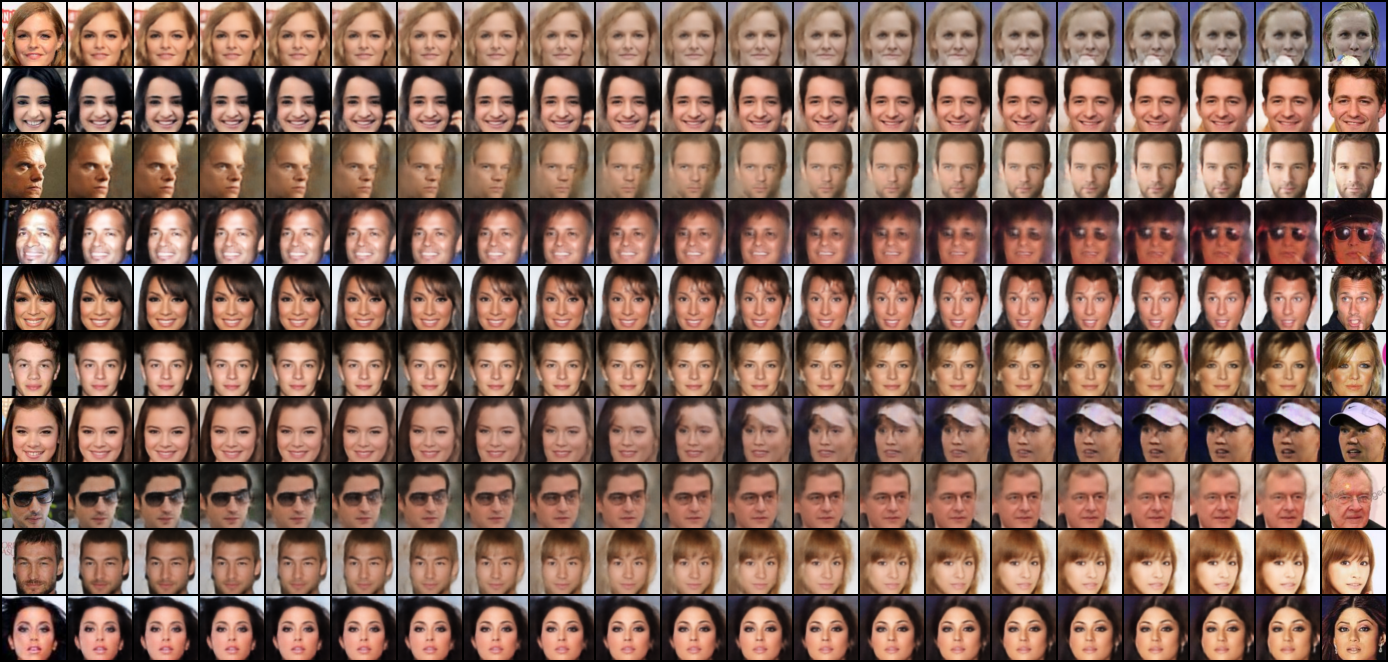}
    \caption{Linear interpolation results in posterior latent space between real images.}
    \vspace{-2mm}
    \label{fig:celeba-interpolation-data-appendix}
\end{figure}

\begin{figure}[t!]
    \centering
    \includegraphics[width=1.\textwidth]{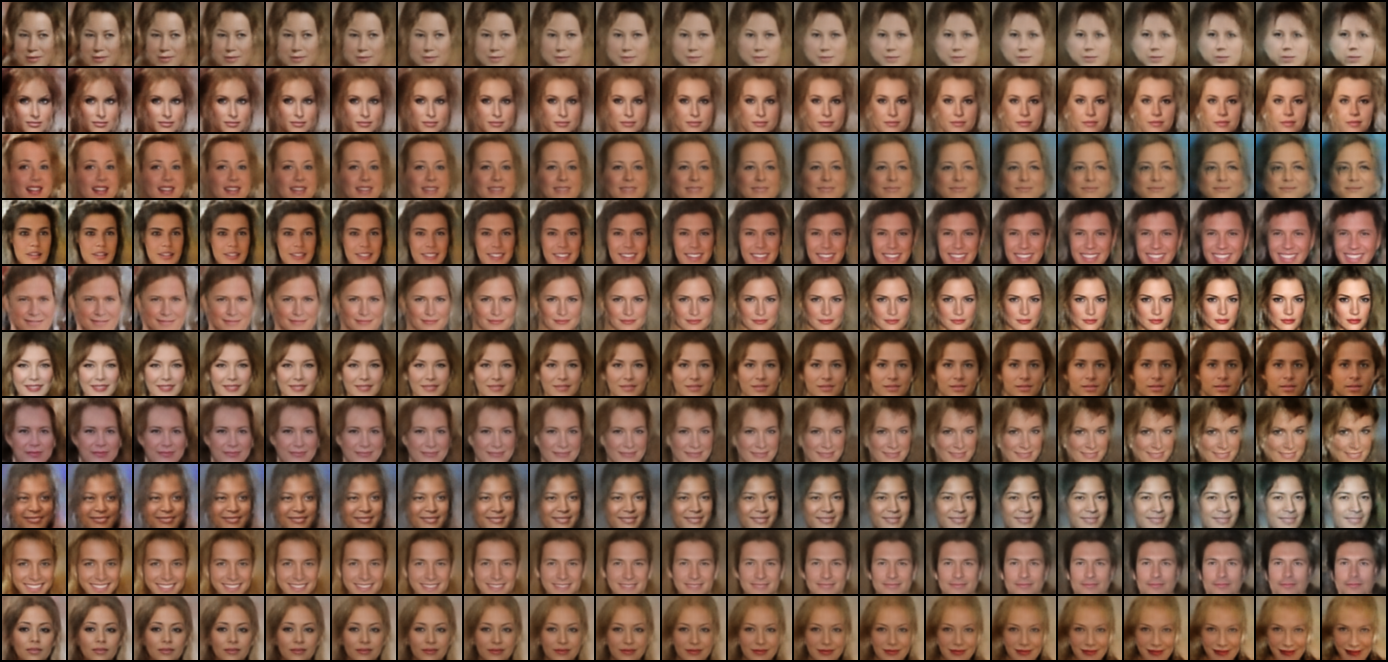}
    \caption{Linear interpolation results in prior latent space between generated images.}
    \vspace{-2mm}
    \label{fig:celeba-interpolation-sample-appendix}
\end{figure}

\begin{figure}[!t]
    \centering
    \begin{subfigure}{1.\textwidth}
        \centering
        \includegraphics[width=1.\textwidth]{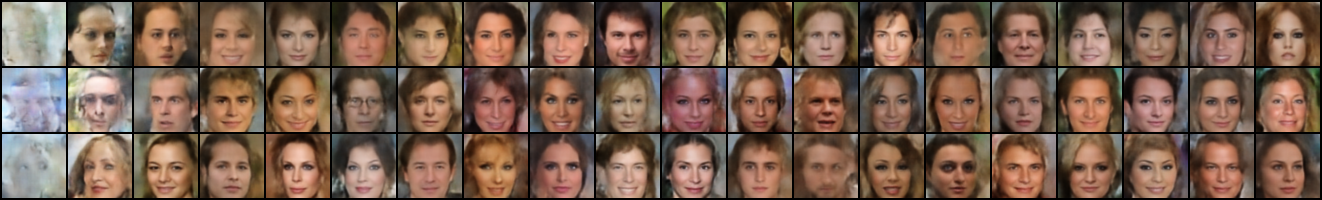}
        \vspace{-2mm}
    \end{subfigure} \\
    \begin{subfigure}{0.32\linewidth}
        \centering
        \includegraphics[width=1.8in]{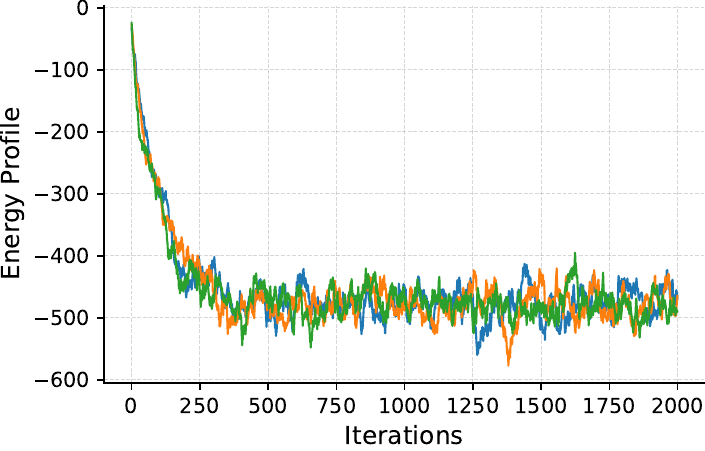}
        \vspace{-4mm}
        \caption{Energy Profile}
        \label{fig:long-run-mcmc-diagnostics-energy}
    \end{subfigure}
    \hfill
    \begin{subfigure}{0.32\linewidth}
        \centering
        \includegraphics[width=1.8in]{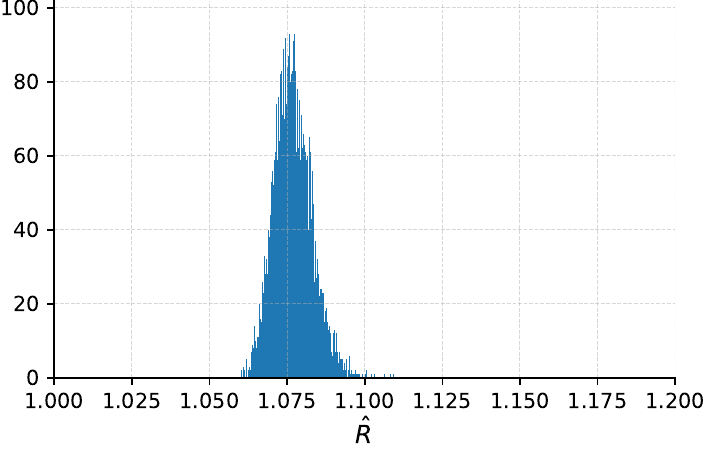}  
        \vspace{-4mm}
        \caption{Gelman-Rubin}
        \label{fig:long-run-mcmc-diagnostics-rhat}
    \end{subfigure}
    \hfill
    \begin{subfigure}{0.32\linewidth}
        \centering
        \includegraphics[width=1.8in]{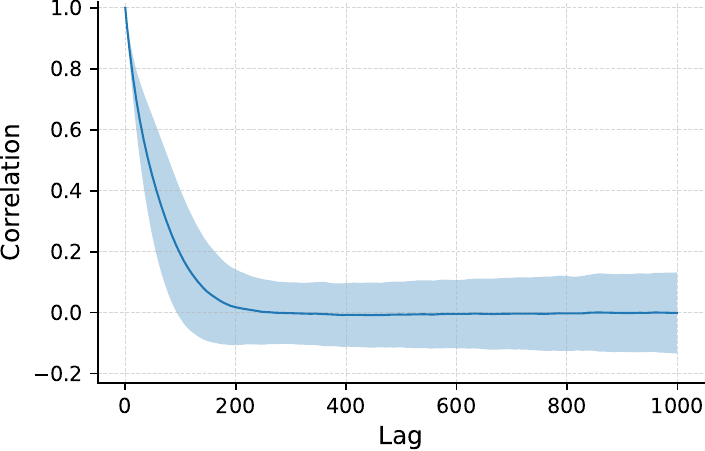}
        \vspace{-4mm}
        \caption{Auto-correlation}
        \label{fig:long-run-mcmc-diagnostics-acf}
    \end{subfigure}
    \caption{Diagnostics for the mixing of MCMC chains with $2,000$ steps on CelebA $64 \times 64$. \textit{Top}: Trajectory in the data space. \textit{Bottom}: (a) Energy profile over time; (b) Histograms of Gelman-Rubin statistic of multiple chains; (c) Auto-correlation of a single chain over time lags.}
    \label{fig:long-run-mcmc-diagnostics}
\end{figure}

\begin{figure}[!t]
\vspace{-3.5mm}
    \centering
    \begin{subfigure}{1.\textwidth}
        \centering
        \includegraphics[width=1.\textwidth]{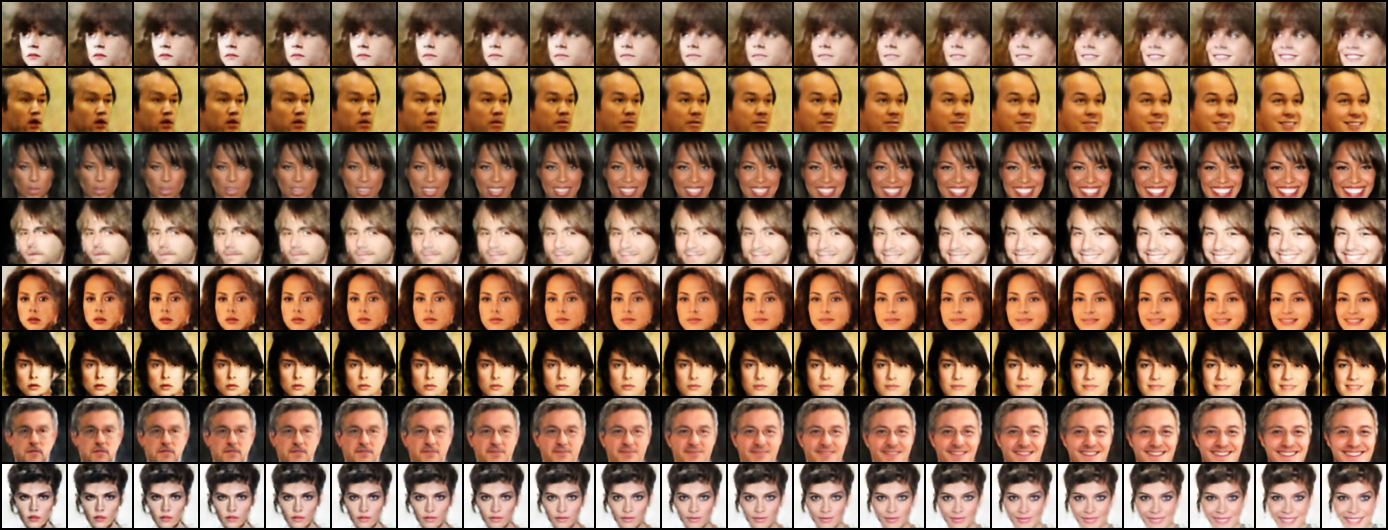}
        \vspace{-5.5mm}
        \caption{Smiling}
    \end{subfigure} 
    \begin{subfigure}{1.\textwidth}
        \centering
        \includegraphics[width=1.\textwidth]{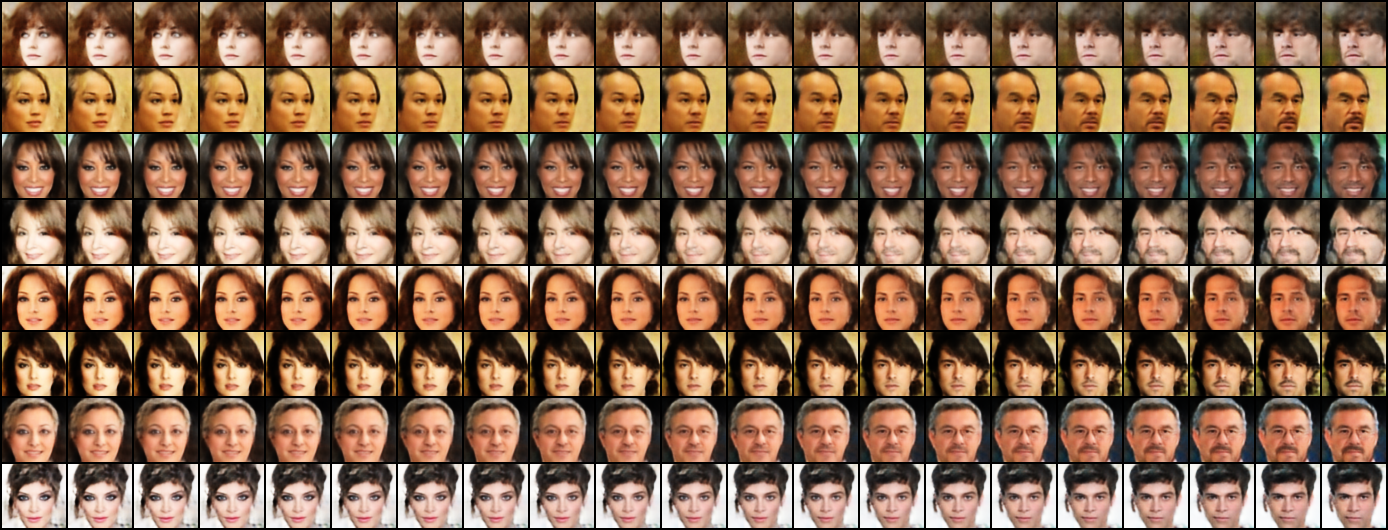}
        \vspace{-5.5mm}
        \caption{Male}
    \end{subfigure}
    \begin{subfigure}{1.\textwidth}
        \centering
        \includegraphics[width=1.\textwidth]{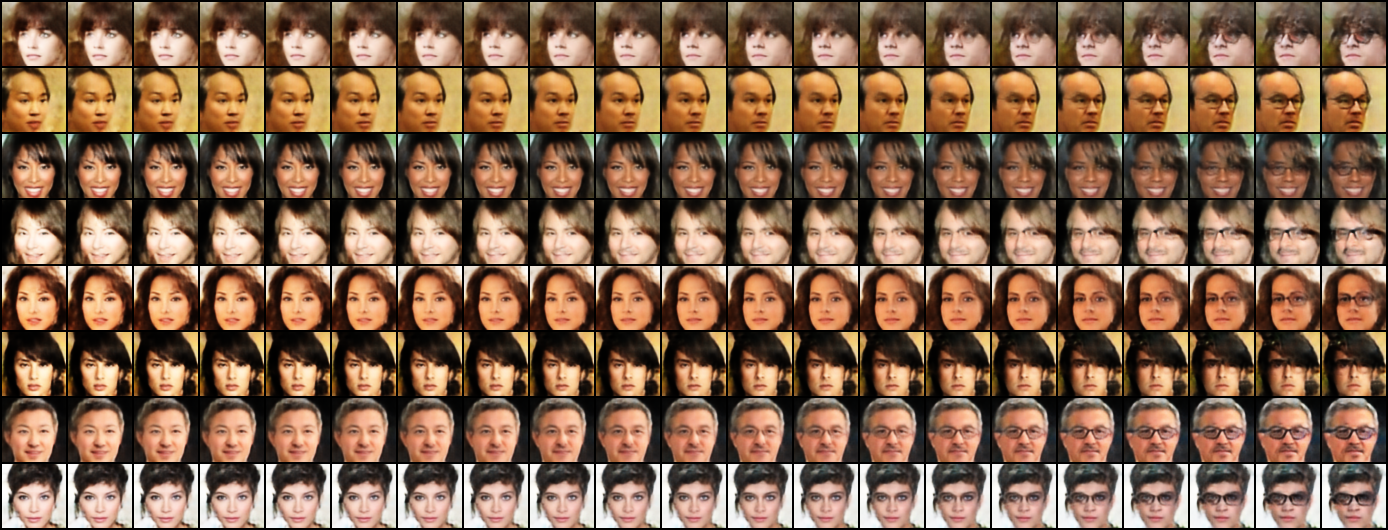}
        \vspace{-5.5mm}
        \caption{Eyeglasses}
    \end{subfigure} 
    \begin{subfigure}{1.\textwidth}
        \centering
        \includegraphics[width=1.\textwidth]{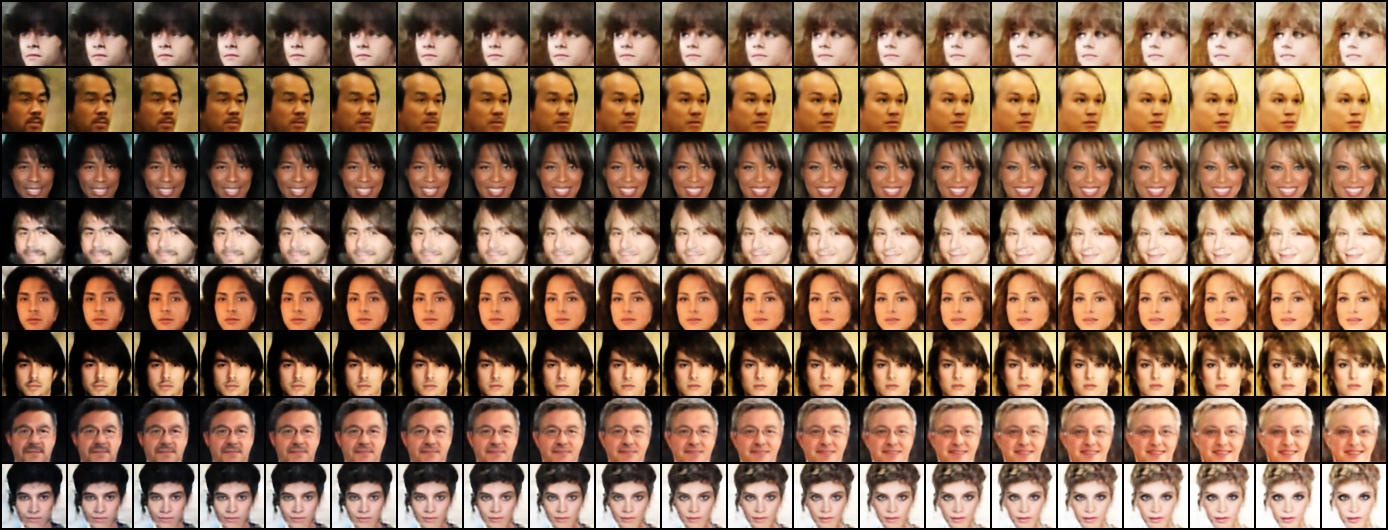}
        \vspace{-5.5mm}
        \caption{Blond Hair}
    \end{subfigure} 
    \vspace{-5.5mm}
    \caption{Attribute manipulation results on CelebA $64 \times 64$. Each row is made by interpolating the latent variable along an attribute vector, with the middle image being the original image.}
    \label{fig:celeba-manipulation-appendix}
\end{figure}

\begin{figure}[t!]
    \centering
    \includegraphics[width=1.\textwidth]
    {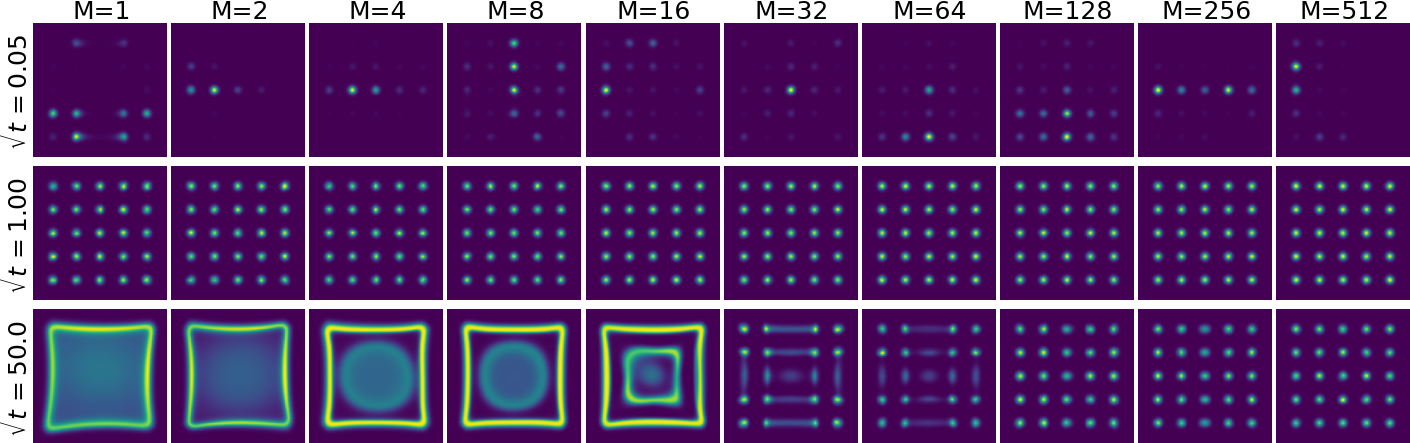}
    \includegraphics[width=1.\textwidth]
    {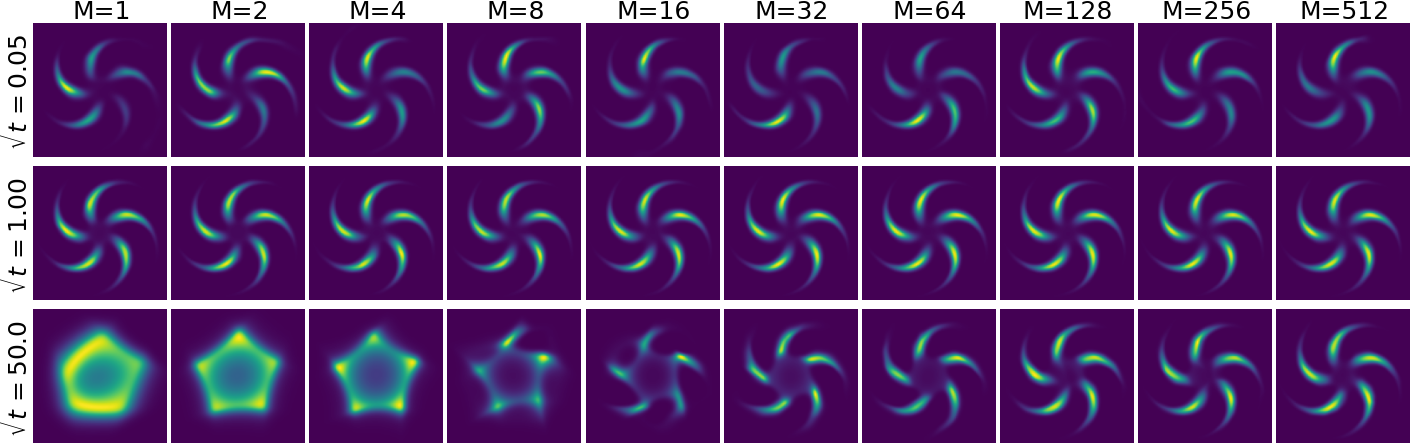}
    \caption{Density estimation on 25-Gausssians and pinwheel with different $t,M$ and $w=1$.}
    \vspace{-2mm}
    \label{fig:toy-understanding-tm-appendix}
\end{figure}

\begin{figure}[t!]
    \centering
    \includegraphics[width=1.\textwidth]
    {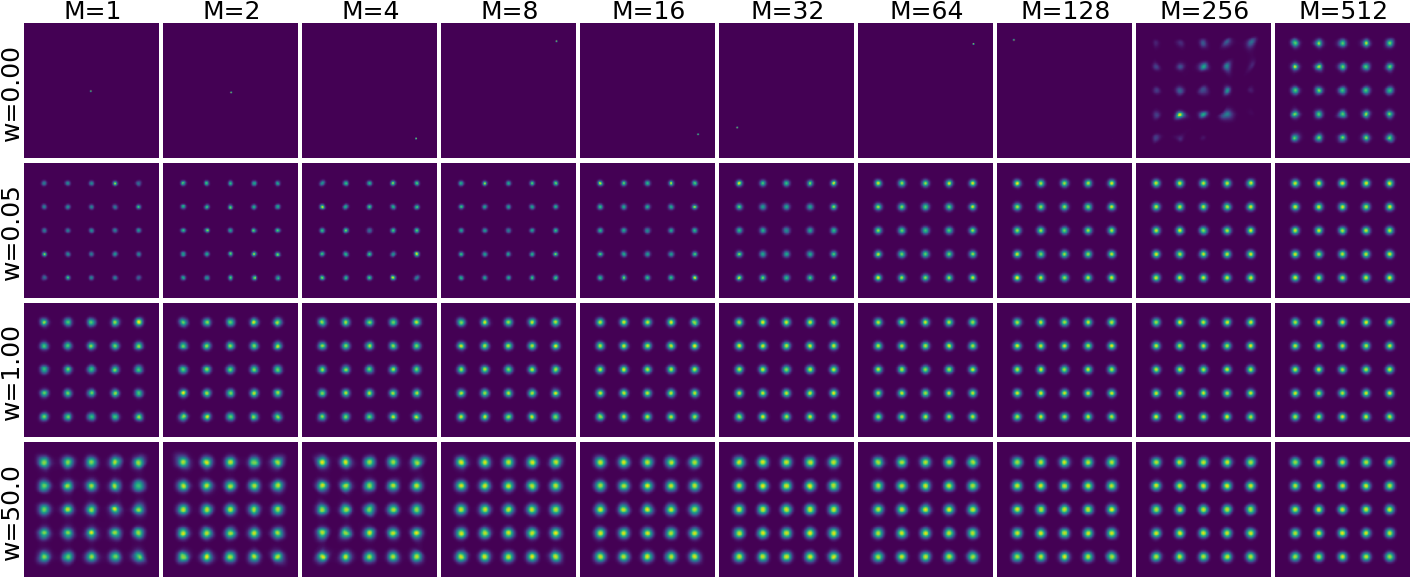}
    \includegraphics[width=1.\textwidth]
    {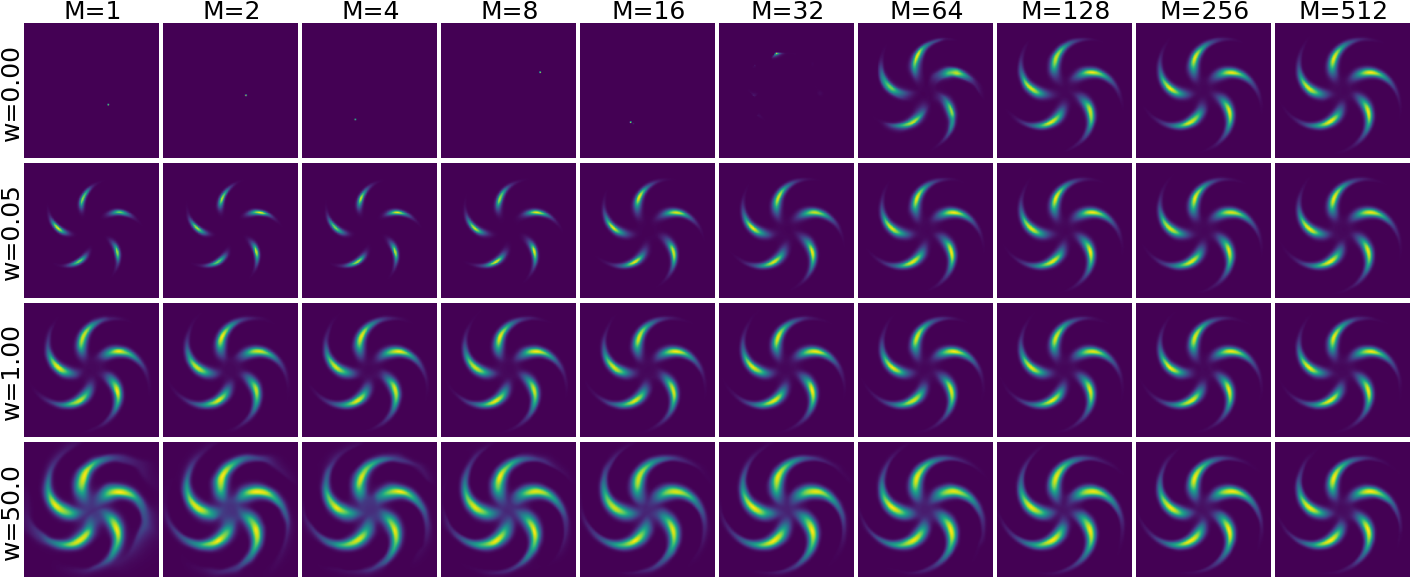}
    \caption{Density estimation 25-Gausssians and pinwheel with different $w,M$ and $t=1$.}
    \vspace{-2mm}
    \label{fig:toy-understanding-wm-appendix}
\end{figure}

\end{document}